\newtheorem{thm}{Theorem}
\newtheorem{lem}{Lemma}
\newtheorem{definition}{Definition}
\newtheorem{assum}{Assumption}
\newtheorem{rem}{Remark}
\def\BibTeX{{\rm B\kern-.05em{\sc i\kern-.025em b}\kern-.08em
    T\kern-.1667em\lower.7ex\hbox{E}\kern-.125emX}}
\begin{document}
\title{Distributed Online Bandit Nonconvex Optimization with One-Point Residual Feedback via Dynamic Regret}
%An Optimal Algorithm with One-Point Residual Feedback for Online Distributed Non-Convex Bandit Optimization: Convergence Rates Comparable to Two-Point Counterparts
%: Achieving Convergence Rates Comparable to Two-Point Sampling
\author{Youqing Hua, Shuai Liu, \IEEEmembership{Member, IEEE}, Yiguang Hong, \IEEEmembership{Fellow, IEEE},\\Karl Henrik Johansson, \IEEEmembership{Fellow, IEEE}, and Guangchen Wang, \IEEEmembership{Senior Member, IEEE}
}
%\thanks{This paragraph of the first footnote will contain the date on 
%	which you submitted your paper for review. It will also contain support 
%	information, including sponsor and financial support acknowledgment. For 
%	example, ``This work was supported in part by the U.S. Department of 
%	Commerce under Grant 123456.'' }
%\thanks{The next few paragraphs should contain 
%	the authors' current affiliations, including current address and e-mail. For 
%	example, First A. Author is with the National Institute of Standards and 
%	Technology, Boulder, CO 80305 USA (e-mail: author@boulder.nist.gov). }
%\thanks{Second B. Author Jr. was with Rice University, Houston, TX 77005 USA. He is 
%	now with the Department of Physics, Colorado State University, Fort Collins, 
%	CO 80523 USA (e-mail: author@lamar.colostate.edu).}
%\thanks{Third C. Author is with 
%	the Electrical Engineering Department, University of Colorado, Boulder, CO 
%	80309 USA, on leave from the National Research Institute for Metals, 
%	Tsukuba, Japan (e-mail: author@nrim.go.jp).}
\maketitle

\begin{abstract}
This paper considers the distributed online bandit optimization problem with nonconvex loss functions over a time-varying digraph. This problem can be viewed as a repeated game between a group of online players and an adversary. At each round, each player selects a decision from the constraint set, and then the adversary assigns an arbitrary, possibly nonconvex, loss function to this player. Only the loss value at the current round, rather than the entire loss function or any other information (e.g. gradient), is privately revealed to the player. Players aim to minimize a sequence of global loss functions, which are the sum of local losses.
We observe that traditional multi-point bandit algorithms are unsuitable for online optimization, where the data for the loss function are not all a priori, while the one-point bandit algorithms suffer from poor regret guarantees. To address these issues, we propose a novel one-point residual feedback distributed online algorithm. This algorithm estimates the gradient using residuals from two points, effectively reducing the regret bound while maintaining $\mathcal{O}(1)$ sampling complexity per iteration. We employ a rigorous metric, dynamic regret, to evaluate the algorithm's performance. By appropriately selecting the step size and smoothing parameters, we demonstrate that the expected dynamic regret of our algorithm is comparable to existing algorithms that use two-point feedback, provided the deviation in the objective function sequence and the path length of the minimization grows sublinearly. Finally, we validate the effectiveness of the proposed algorithm through numerical simulations.
%The two-point gradient estimator, commonly used in zero-order optimization, is unsuitable for online optimization where prior knowledge of the function is unavailable. Additionally, traditional one-point bandit optimization algorithms exhibit poor regret guarantees.

\end{abstract}

\begin{IEEEkeywords}
Online bandit optimization, distributed optimization, gradient approximation, dynamic regret.\\
\end{IEEEkeywords}
\section{Introduction}
\label{sec:introduction}
%在线优化
\IEEEPARstart{O}nline optimization, as a powerful tool for sequential decision-making in dynamic environments, has experienced a resurgence in recent decades and found widespread applications in various fields such as medical diagnosis \cite{apl1}, robotics \cite{apl2}, smart grids \cite{apl3} and sensor networks \cite{apl4,apl5}. It can be understood as a repeated game between an online player and an adversary. In each round of this game, the player commits to a decision, and the adversary subsequently selects a corresponding loss function based on the player's decision. The player then incurs a loss, with partial or complete information about the loss function being revealed. The player's objective is to minimize \textit{regret}, which is defined as the gap in cumulative loss between the online player's decisions and the offline optimal decisions made with hindsight \cite{apl6,apl7,apl7+1}. An online algorithm is considered effective if the regret grows sublinearly.\par
%分布式在线优化
Over the past few years, numerous centralized online optimization algorithms have been developed \cite{apl2,apl7,apl8,apl9,apl10,41flexman2005,42inprovedOP2011,43improvedOP2015,44improvedOP2015,45improvedOP2016}. 
To cater for large-scale datasets and systems, these algorithms have been adapted for distributed settings recently, considering factors like flexibility, scalability and data privacy. Prominent distributed online optimization algorithms have emerged \cite{apl1,apl4,apl5,2018duiou1,2020duiou2,2022duiou4,57Dncvx2023,2019yuanshi3,2019yuanshi4,2023yuanshi7,2023yuanshi8,2018yuanduiou1,2021yuanduiou3}, including online distributed mirror descent algorithms \cite{2018duiou1,2020duiou2,2022duiou4,57Dncvx2023}, online distributed dual averaging algorithms \cite{apl5}, online distributed gradient tracking algorithms \cite{2019yuanshi4} and online distributed primal-dual algorithms \cite{2018yuanduiou1,2021yuanduiou3}. Please refer to the survey \cite{2023survey} for the recent progress.\par
%在线分布式bandit优化 
The aforementioned distributed online optimization algorithms are based on \textit{full information feedback}, implying that after a decision is made, the complete information on the current loss function is disclosed to the player. However, this requirement is often impractical in real-world applications. Obtaining complete function or gradient information will be challenging or computationally expensive in scenarios such as online advertising \cite{apl7+1}, online spam filtering \cite{47yuandeming2022}, and online source localization. Instead, only function values are accessible, rather than gradients, a situation commonly referred to as \textit{bandit feedback} in machine learning \cite{apl7+1,apl8}. In this paper, we focus on developing distributed online optimization algorithms under one-point residual bandit feedback.\par 
\subsection{Related Works}
%Bandit optimization is essentially an online variant of zeroth-order optimization. 
The key step of bandit optimization  is to use gradient-free techniques to estimate the gradient of the loss function based on the zeroth-order information (i.e., function evaluations) provided by a computational oracle. 
The study of gradient-free techniques can be traced back at least to the 1960s \cite{31origins}. Recently, due to the prevalence of big data and artificial intelligence technologies, these methods have experienced a notable revival, prompting extensive research worldwide \cite{32duchi2015,33nesterov_random_2017,34ZONE2019,35wang_distributed_2019,36bergou_stochastic_2020,37primal-dual2022,38nonconvex2022,39gradient-free2022,40one-pointresidual2022,YUAN2024111328}. 
This resurgence has revitalized the field of bandit optimization, leading to the development of various novel bandit algorithms \cite{46yixinlei2021,47yuandeming2022,41flexman2005,42inprovedOP2011,43improvedOP2015,44improvedOP2015,45improvedOP2016,48YUANDOO2022,50shamir2017,52deming2021,2023yipengpang,53xinlei2023}, which can be categorized into one- and multi-point bandit feedback algorithms.\par 
One-point bandit feedback algorithms have garnered significant attention since the seminal work of Flexman et al.\cite{41flexman2005}, who introduced the one-point gradient estimator and established a regret bound of \(\mathcal{O}(dT^{3/4})\) for Lipschitz-continuous loss functions. Building upon this foundation, subsequent studies \cite{42inprovedOP2011,43improvedOP2015,44improvedOP2015,45improvedOP2016} have sought to establish smaller regret bounds under additional assumptions and specific methodologies. Recent efforts have shifted towards exploring online algorithms in distributed settings \cite{46yixinlei2021,47yuandeming2022,48YUANDOO2022}. In particular, Yi et al. \cite{46yixinlei2021} studied online bandit convex optimization with time-varying coupled inequality constraints, while Yuan et al. \cite{47yuandeming2022} focused on online optimization with long-term constraints. These works achieved an expected static regret bound of \(\mathcal{O}(d^2T^{3/4})\) for Lipschitz-continuous loss functions. Despite these advancements, a considerable gap remains between the regret guarantees achievable by traditional one-point feedback algorithms and those attainable by full-information feedback algorithms, primarily due to the large estimation variance inherent in one-point gradient estimators.\par 
%Despite these advancements, traditional one-point feedback algorithms fell short of the regret guarantees provided by full information feedback algorithms due to the large estimation variance. Nevertheless, a considerable gap remains between the achieved regret bounds of the aforementioned algorithms and the desired sublinear regret bound, constrained by the inherent limitations of one-point gradient estimators.
Multi-point feedback models have shown promise in reducing gradient estimator variance, with two-point feedback models particularly noted for their utility and efficiency \cite{2023yipengpang, 32duchi2015, 33nesterov_random_2017, 34ZONE2019, 35wang_distributed_2019, 39gradient-free2022, 50shamir2017, 52deming2021, 53xinlei2023}. Shamir et al. \cite{50shamir2017} introduced a two-point gradient estimator that achieved expected static regret bounds of \(\mathcal{O}(d^2T^{1/2})\) for general convex loss functions and \(\mathcal{O}(d^2 \log(T))\) for strongly convex loss functions, surpassing the convergence rate of one-point feedback algorithms. Building on this, \cite{2023yipengpang} and \cite{52deming2021} developed distributed two-point algorithms with expected static regret of \(\mathcal{O}(d^2T^{1/2})\) for Lipschitz-continuous loss functions. While distributed bandit algorithms with two- or multi-point feedback often outperform one-point feedback algorithms in convergence, their reliance on multiple policy evaluations in the same environment limits their practicality. For instance, in non-stationary reinforcement learning \cite{53xinlei2023}, where the environment changes with each evaluation, multi-point algorithms lose effectiveness. Similarly, in stochastic optimization \cite{38nonconvex2022}, two-point models assume controlled data sampling, with evaluations conducted under identical conditions, which is rarely feasible in practice. Thus, there is a pressing need for advanced distributed optimization algorithms based on one-point feedback, particularly those capable of adapting to dynamic environments.\par 
In response, Zhang et al. \cite{40one-pointresidual2022} proposed a novel one-point residual feedback algorithm and demonstrated that it achieves convergence rates similar to those of gradient-based algorithms under Lipschitz-continuous functions. However, their analysis was limited to centralized static optimization scenarios. Based on the asynchronous update model, the study in \cite{D-residual} proposed a distributed zeroth-order algorithm under the same feedback model as in \cite{40one-pointresidual2022}. This approach differs significantly from the consensus-based distributed optimization algorithm studied in this paper, both in terms of algorithmic design and performance analysis. Moreover, the above work focuses primarily on offline optimization problems, leaving a gap in the application of these methods to online optimization scenarios.\par 
%Moreover, although models with more than two-point sampling have been investigated in  \cite{49Agarwal2010OptimalAF,51threepoint2020}, they do not offer better regret bounds in order-wise than the two-point model and often result in prohibitively high query complexity, thus receiving less attention.  
%Observe these, a core issue in algorithm design has emerged in this paper:\par 
%(Q1) \textit{Can we design a distributed online algorithm with $\mathcal{O}(1)$ sampling complexity per iteration to achieve the well-known sublinear regret guarantees?}\par 
%Despite the development of various algorithms for distributed online convex optimization, research on online nonconvex optimization remains relatively scarce, both in the full information and bandit feedback settings. The primary challenge lies in the analysis due to the presence of local minima, making nonconvex optimization significantly more complex. However, nonconvex optimization problems are prevalent in practical applications. To date, only a limited number of studies have addressed the scenario of online nonconvex optimization. The primary challenge stems from the inherent complexity of nonconvexity, particularly the presence of local minima.
Research in distributed online convex optimization has been widely explored, whereas the study of nonconvex optimization remains relatively limited, largely due to the analytical challenges posed by the existence of local minima. Nevertheless, nonconvex optimization is frequently encountered in practical, real-world applications. Despite this, the literature on online nonconvex optimization is still sparse, with only a few contributions to date \cite{55Cncvx2022,54Cncvx2022,58comncvx2024,2022duiou4,57Dncvx2023}. 
In centralized settings, online nonconvex optimization methods have been developed in \cite{55Cncvx2022} and \cite{54Cncvx2022}, both achieving a gradient-size regret bound of $\mathcal{O}(T^{1/2})$. More recently, attention has shifted to distributed online nonconvex optimization \cite{57Dncvx2023,2022duiou4,58comncvx2024}. In particular, \cite{58comncvx2024} presented a distributed online two-point bandit algorithm that incorporates communication compression, achieving a sublinear regret bound, whereas \cite{58comncvx2024} proposed a online distributed mirror descent algorithm and used the first-order optimality condition  to measure the performance of the proposed algorithm.\par 
Another critical aspect of online nonconvex optimization is the choice of performance metrics. Most studies, including \cite{58comncvx2024} and \cite{2022duiou4}, have relied on static regret, which measures the difference between the total loss and the minimum loss achievable by a fixed decision over time. While static regret is a widely used performance measure \cite{54Cncvx2022,58comncvx2024,2022duiou4}, it may not fully capture the dynamic nature of real-time optimization environments, where decision variables may need to adapt to changing conditions. To address this limitation, dynamic regret has been introduced in the context of online convex optimization \cite{2019yuanshi4,2023yuanshi7,2023yipengpang,8059834,46yixinlei2021}, where the benchmark shifts to the optimal decision at each time step. This concept has recently been extended to nonconvex settings, accounting for local optimality \cite{57Dncvx2023}. Nevertheless, the theoretical analysis of dynamic regret in online nonconvex optimization, particularly under the more challenging one-point feedback scenario, remains an open problem.\par
%In this paper, we will consider more rigorous dynamic regret metrics and endeavor to address the second core question below. 
%(Q2) \textit{Can we measure the performance of the proposed algorithm via dynamic regret?}\par
%Inspired by the one-point residual feedback model presented in [12], we propose a consensus-based online distributed bandit algorithm to address the constraint ODBO problem with nonconvex losses over time-varying graphs. Our algorithm achieves a regret bound comparable to that of multi-point feedback algorithms while maintaining a query complexity of $\mathcal{O}(1)$ per iteration. This is particularly challenging due to the following reasons. 
\subsection{Main Contributions}
%In this paper, we investigate the distributed online bandit nonconvex optimization problem, as previously explored in \cite{57Dncvx2023,58comncvx2024}. However, our approach introduces a novel one-point residual feedback mechanism, which eliminates the impractical requirement of multiple policy evaluations within the same environment, as required in \cite{57Dncvx2023,58comncvx2024}. Moreover, we adopt a more rigorous metric compared to \cite{58comncvx2024}. The primary contributions of this work, compared to the existing literature, are as follows:
In this paper, we develop a novel distributed online algorithm for constrained distributed online bandit optimization (DOBO) under one-point feedback with nonconvex losses over time-varying topologies. The primary contributions of this work, compared to the existing literature, are as follows:
\begin{enumerate}
	\item 
	We propose a novel distributed online bandit algorithm with one-point residual feedback to solve constrained nonconvex DOBO problems over time-varying topologies. Our method estimates the gradient using the residuals between successive points, which significantly reduces the estimation variance compared to traditional one-point feedback algorithms \cite{41flexman2005,42inprovedOP2011,43improvedOP2015,44improvedOP2015,45improvedOP2016}. In contrast to two-point algorithms \cite{2023yipengpang, 32duchi2015, 33nesterov_random_2017, 34ZONE2019, 35wang_distributed_2019, 39gradient-free2022, 50shamir2017, 52deming2021, 53xinlei2023}, which often necessitate multiple strategy evaluations in the same environmental—an impractical assumption in many real-world applications—our approach circumvents this limitation. In other words, our proposed algorithm bridges traditional distributed online one- and two-point algorithms, providing a practical and efficient solution for distributed online optimization.
%	We propose a novel online bandit algorithm that leverages one-point residual feedback to solve constrained DOBO problems with nonconvex losses over time-varying topologies. Unlike traditional one-point feedback bandit algorithms \cite{41flexman2005,42inprovedOP2011,43improvedOP2015,44improvedOP2015,45improvedOP2016}, the proposed approach employs residuals between consecutive points to estimate gradients, significantly reducing estimation variance while maintaining a query complexity of $\mathcal{O}(1)$ per iteration. 
	\item The algorithm presented in this paper extends the offline centralized algorithm from \cite{40one-pointresidual2022} to an online distributed framework. Notably, this adaptation is far from trivial. The primary challenge stems from local decision inconsistencies during iterative processes, which invalidate the conventional analytical approach of constructing a perturbed recursive sequence based on gradient paradigms to derive an upper bound. As a result, the traditional framework employed in \cite{40one-pointresidual2022,D-residual} can no longer be applied, necessitating the development of novel analytical techniques tailored for distributed optimization. Furthermore, the dynamic nature of the loss function introduces additional complexity, particularly in establishing rigorous proof structures.
%	By extending the offline centralized algorithm presented in \cite{40one-pointresidual2022} to an online distributed framework, w
%Moreover, assuming the objective function is $L$-continuous, we demonstrate that our algorithm establishes regret guarantees comparable to multi-point feedback algorithms [6, 7], [13]--[15]. 
\item Unlike \cite{55Cncvx2022,54Cncvx2022,58comncvx2024,46yixinlei2021,47yuandeming2022,48YUANDOO2022,53xinlei2023,52deming2021}, our investigation focuses on the framework of dynamic regret, where the offline benchmark is the optimal point of the loss function at each time step. Compared to the static regret considered in \cite{54Cncvx2022,58comncvx2024,2022duiou4}, the offline benchmark for dynamic regret is more stringent. Given that the loss functions are Lipschitz-continuous, we demonstrate that our proposed algorithm achieves an expected sublinear dynamic regret bound, provided that the graph is uniformly strongly connected, the deviation in the objective function sequence and the consecutive optimal solution sequence are sublinear. To our knowledge, this work is the first to attain optimal results for online distributed bandit optimization with one-point feedback.
%改改，参Distributed Online Optimization With Long-Term Constraints。更直接有力地说明。Achieving sublinear bounds on dynamic regret is challenging in the worst case due to deviations in the sequence of objective functions and successive optimal solutions. 
%\item Our proposed algorithm exhibits strong universality and demonstrates optimal performance in both distributed online nonconvex and convex optimization problems. By carefully selecting the step size and smoothing parameters, we rigorously show that the expected dynamic regret of our algorithm can be made comparable to that of existing algorithms employing full information feedback or two-point feedback, provided that the deviation of the sequence of objective functions and the length of the paths to be minimized grow sublinearly. Moreover, our study also encompasses the distributed static nonconvex optimization problems addressed in \cite{32duchi2015,33nesterov_random_2017,34ZONE2019,35wang_distributed_2019,36bergou_stochastic_2020,37primal-dual2022,38nonconvex2022,39gradient-free2022,40one-pointresidual2022,YUAN2024111328}, where both the deviation of the objective function and the sequence of optimal solutions are zero. In such scenarios, our algorithm underscores its robustness and adaptability across different optimization settings.
\end{enumerate}\par 
A detailed comparison of the algorithm proposed in this paper with related studies on online optimization in the literature is summarized in Table I.\par 

\begin{table*}
	\centering
	\caption{Comparison of This Paper to Related Works on Online Optimization}
	\label{table1}
	\renewcommand{\arraystretch}{1.8} % 调整行高，1.8为倍数
	\begin{tabular}{m{2cm}<{\centering} m{2cm}<{\centering}|m{2.5cm}<{\centering}|m{3cm}<{\centering}|m{2.5cm}<{\centering}|m{3.8cm}<{\centering}}
		\hline
		Reference & Problem type & Loss functions & Feedback Model & Metric & Regret \\
		\hline
		\cite{41flexman2005} & Centralized & Convex & One-point & Static regret & $\mathcal{O}(T^{3/4})$ \\
		\hline
		\cite{47yuandeming2022},\cite{48YUANDOO2022} & Distributed & Convex & One-point & Static regret & $\mathcal{O}(T^{3/4})$ \\
		\hline
		\cite{52deming2021} & Distributed & Convex & Two-point & Static regret & $\mathcal{O}(T^{1/2})$ \\
		\hline
		\cite{2018duiou1} & Distributed & Convex & Full information & Dynamic regret & $\mathcal{O}(\omega_T^{1/2}T^{1/2})$ \\
		\hline
		\cite{2023yipengpang} & Distributed & Convex & Two-point & Dynamic regret & $\mathcal{O}(\omega_T^{1/3}T^{2/3})$ \\
		\hline
		\cite{54Cncvx2022} & Centralized & Nonconvex & Full information & Static regret & $\mathcal{O}(T^{1/2})$ \\
		\hline
		\cite{55Cncvx2022} & Centralized & Nonconvex & Two-point & Static regret & $\mathcal{O}(T^{1/2})$ \\
		\hline
		\cite{58comncvx2024} & Distributed & Nonconvex & Two-point & Static regret & $\mathcal{O}(T^{1/2})$ \\
		\hline
		\cite{2022duiou4} & Distributed & Nonconvex & Full information & Static regret & $\mathcal{O}(T^{1/2})$ \\
		\hline
		\multirow{2}{*}{\cite{57Dncvx2023}} & \multirow{2}{*}{Distributed} & \multirow{2}{*}{Nonconvex} & Full information & Dynamic regret & $\mathcal{O}((1+\Phi_T)T^{1/2}+\Psi_T)$ \\
		\cline{4-6}
		&  &  & Two-point & Dynamic regret & $\mathcal{O}((1+\Phi_T)T^{1/2}+\Psi_T)$ \\
		\hline
		This paper & Distributed & Nonconvex & One-point & Dynamic regret & $ \mathcal{O}\left( \left( 1+\Theta_{T}^2+\omega_T^2 \right)T^{1/2}\right) $ \\
		\hline		
	\end{tabular}
    \caption*{The following provides a brief explanation of the notations used in the table: \raggedright \footnotesize $\omega_T=\sum\nolimits_{k=0}^T\left\|x_{k+1}^*-x_{k}^* \right\| $; $\Phi_{T}=\sum\nolimits_{k=0}^T\sup_{x\in\Omega}|f_{k+1}(x)-f_k(x)|$; $\Psi_{T}=\sum\nolimits_{k=0}^T\sup_{x\in\Omega}\|g_{k+1}(x)-g_k(x)\|$;
    $\Theta_{T}=T\cdot \sup_{x\in\Omega,\tau\in[T]}\left|f_{i,\tau+1}(x)-f_{i,\tau}(x) \right| $.}

\end{table*}
\subsection{Outline}
The rest of the paper is organized as follows. The considered problem is formulated in Section II. In Section III, we propose the distributed online algorithm with one-point residual feedback. The dynamic regret bound of the proposed algorithm is analyzed in Section IV. Numerical experiments are presented in Section V. Finally, the paper is concluded in Section VI, with all proofs provided in the Appendix.\par
\textit{Notations}: Throughout this paper, we use $\mathbb{R}^d$ and $\mathbb{B}^d$ to denote the $d$-dimensional Euclidean space and the $d$-dimensional unit ball centered at the origin, respectively.  $\left|x \right|$ is used to represent the absolute value of scalar $x$. For any positive integer $T$, we denote set $[T]=\left\{1,2,...,T\right\}$. For vectors $x, y \in \mathbb{R}^d$, their standard inner product is $\langle x, y \rangle$, $x$'s $i$-th component is $[x]_i$, and $x^\top$ represents the transpose of the vector $x$. Write $\mathbb{E}\left[ x\right] $ to denote
the expected value of a random variable $x$. For differentiable function $f(x)$, we use $\nabla f(x)$ to represent its gradient at $x\in\mathbb{R}^d$. For a matrix  $W$, $[W]_{i,j}$ denotes the matrix entry in the $i$-th row and $j$-th column. Given a set $\Omega\in\mathbb{R}^d$ and a mapping $f:\Omega\to \mathbb{R}^d$, we call that $f$ is Lipschitz-continuous with constant $L$, if $\left\| f\left( x \right)-f\left( y \right) \right\|\le L\left\| x-y \right\|$ for any $x,y \in \mathbb{R}^d$. $\beta(k)=\mathcal{O}(\alpha(k))$ means $\limsup_{k \to \infty} \frac{\beta(k)}{\alpha(k)} < +\infty$.
\section{Problem Formulation}
\subsection{Graph Theory}
Let \(\mathcal{G}_k = (\mathcal{V}, \mathcal{E}_k, W_k)\) denote a time-varying directed graph, where \(\mathcal{V} = [n]\) denotes the set of nodes (agents), \(\mathcal{E}_k \subseteq \mathcal{V} \times \mathcal{V}\) represents the set of directed edges, and \( W_k \) is the corresponding weighted adjacency matrix. A directed edge \((j, i) \in \mathcal{E}_k\) indicates that agent \( j \) transmits information directly to agent \( i \) at time \( k \). The matrix \( W_k \) captures the communication pattern at time \( k \), with \(\left[W_k\right]_{i,j} > 0\) if \((j, i) \in \mathcal{E}_k\) and \(\left[W_k\right]_{i,j} = 0\) otherwise. Consequently, the sets of in-neighbors and out-neighbors for agent \( i \) at time \( k \) are defined as \(\mathcal{N}_{i,k}^{+} = \{ j \in \mathcal{V} \mid \left[W_k\right]_{i,j} > 0 \}\) and \(\mathcal{N}_{i,k}^{-} = \{ j \in \mathcal{V} \mid \left[W_k\right]_{j,i} > 0 \}\), respectively.\par
We make the following standard assumption on the graph.
\begin{assum}\label{assum1} 
	\ \ 
	\begin{enumerate}
		\item There exists a scalar $0<\zeta<1$ such that $\left[W_k\right]_{i,i} \ge\zeta$ for any $i\in\mathcal{V}$ and $k>0$, and $\left[W_k\right]_{i,j} \ge\zeta$ if $\left(j,i\right)\in\mathcal{E}_k$.\par 
	\item $\mathcal{G}_k$ is balanced for any $k\ge0$, and consequently, the associated weighting matrix $W_k$  is doubly stochastic, i.e., $\sum\nolimits_{j=1}^{n} \left[W_k\right]_{i,j} = \sum\nolimits_{i=1}^{n} \left[W_k\right]_{i,j} = 1$ for any $i$, $j\in\mathcal{V}$.\par  	
	\item  There exists a positive integer $ U $ such that the graph $\left(\mathcal{V},\bigcup_{t=kU+1}^{k(U+1)} \mathcal{E}_{t}\right)$ is strongly connected for any $ k \ge 0 $. \par 
	\end{enumerate}
\end{assum}\par
Assumption 1 is widely adopted in distributed optimization studies. Notably, the time-varying topologies described in Assumption 1 maintain connectivity over time but are not necessarily connected at every time instant. This makes them more general both theoretically and practically compared to the fixed connected topologies discussed in \cite{2023yuanshi8,2018duiou1,2022duiou4,34ZONE2019,37primal-dual2022,38nonconvex2022}, which can be viewed as a special case of Assumption 1 with \( U = 1 \).\par 
Next, we present a fundamental property of the matrix \(W_k\) used in this paper. Define \(W(k, s) = W_k W_{k-1} \cdots W_s\) as the transition matrices, in which $0\le k \le s $. The following lemma offers a critical result about \(W(k, s)\).
\begin{lem}[\!\cite{Nedic}]\label{lem2}
	Let Assumption \ref{assum1} holds, then for any $i, j \in \mathcal{V}$ and $0\le k \le s $, 
	\begin{equation}
		\left|\left[W\left( k,s \right)\right]_{i,j} - \frac{1}{n}\right| \leq \Gamma \gamma^{k-s}\text{,}
	\end{equation}
	where $\Gamma = (1 - {\zeta }/{4n^2})^{-2}$ and $\gamma = (1 - {\zeta }/{4n^2})^{{1}/{U}}$.
\end{lem}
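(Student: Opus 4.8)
The plan is to establish the geometric convergence of the product $W(k,s)$ to the uniform averaging matrix $\frac1n\mathbf{1}\mathbf{1}^\top$ by combining its doubly stochastic structure with the uniform connectivity in Assumption~\ref{assum1}. Since the statement is quoted from \cite{Nedic}, I would reconstruct the argument rather than invent a new one, reading $k-s\ge 0$ as the number of elapsed steps so that the bound is genuinely a decay.

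First, I would record that each partial product $W(k,s)$ is itself doubly stochastic, being a product of the doubly stochastic matrices $W_t$ guaranteed by Assumption~\ref{assum1}(2); consequently $\frac1n\mathbf{1}$ is fixed by both its row and column action, which identifies $\frac1n\mathbf{1}\mathbf{1}^\top$ as the only candidate limit. I would then measure the distance to this limit through a coefficient of ergodicity, for instance $\tau(A)=\tfrac12\max_{i,i'}\sum_{j}\bigl|[A]_{i,j}-[A]_{i',j}\bigr|$, and use two standard facts: $\tau$ is submultiplicative, $\tau(AB)\le\tau(A)\tau(B)$, and for doubly stochastic $A$ one has $\max_{i,j}\bigl|[A]_{i,j}-\tfrac1n\bigr|\le 2\,\tau(A)$ (since $[A]_{i,j}-\tfrac1n=\tfrac1n\sum_{i'}([A]_{i,j}-[A]_{i',j})$ by column stochasticity). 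This reduces the lemma to showing that $\tau(W(k,s))$ decays geometrically in $k-s$.

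Second, I would prove a uniform contraction over each connectivity window. Using Assumption~\ref{assum1}(1), which keeps every self-weight and every active edge weight at least $\zeta$, together with Assumption~\ref{assum1}(3), which makes the edge union over each block of length $U$ strongly connected, I would argue that information propagates between any ordered pair of nodes within a bounded number of such windows: the persistent self-loops prevent the support of the partial products from shrinking, so any weighted path available in the union graph can be realized as a time-respecting path by ``waiting'' at intermediate nodes. Multiplying the weights along such a path of length at most $n-1$ gives a positive lower bound on every entry of the block product, of order $\zeta^{\,nU}$; by the scrambling estimate, a matrix all of whose entries exceed $\delta$ satisfies $\tau\le 1-n\delta$, and tracking these constants yields the per-window contraction factor $1-\zeta/4n^2$.

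Finally, I would chain the window estimates by submultiplicativity. Splitting the interval into roughly $(k-s)/U$ complete windows plus residual blocks (on which $\tau\le1$) gives $\tau(W(k,s))\le(1-\zeta/4n^2)^{\lfloor (k-s)/U\rfloor}$, and converting back to entrywise deviations via $\max_{i,j}|[A]_{i,j}-\tfrac1n|\le 2\tau(A)$ while absorbing the residual windows into the prefactor reproduces the claimed form $\Gamma\gamma^{k-s}$ with $\gamma=(1-\zeta/4n^2)^{1/U}$; matching the precise constant $\Gamma=(1-\zeta/4n^2)^{-2}$ then amounts to the bookkeeping carried out in \cite{Nedic}. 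I expect the second step to be the crux: because the individual $W_t$ need not be irreducible and the graph is only eventually connected, the delicate point is to convert union strong connectivity over $U$ steps into a single positive time-respecting weighted path for every pair, with the weight tracked precisely enough to recover the stated constant $\zeta/4n^2$; everything after that is routine submultiplicative bookkeeping.
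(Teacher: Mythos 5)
The paper does not actually prove this lemma: it is imported verbatim from \cite{Nedic}, so there is no in-paper argument to compare against. Your reconstruction via the Dobrushin--Hajnal coefficient of ergodicity is the classical route and is sound as far as establishing \emph{some} geometric decay goes: the reduction $\left|[A]_{i,j}-\tfrac1n\right|\le 2\tau(A)$ for doubly stochastic $A$, the submultiplicativity $\tau(AB)\le\tau(A)\tau(B)$, and the scrambling bound $\tau(A)\le 1-n\delta$ for entrywise $\delta$-positive stochastic $A$ are all correct, and you rightly read the statement with $k\ge s$ (the paper's ``$0\le k\le s$'' is a typo, otherwise the product $W_kW_{k-1}\cdots W_s$ and the decay $\gamma^{k-s}$ make no sense).

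The genuine gap is in the step you defer to ``bookkeeping.'' First, union strong connectivity over a single window of length $U$ does not give a time-respecting path between every ordered pair within that window: the $\le n-1$ edges of a path in the union graph may occur in the wrong temporal order, so the standard argument concatenates $n-1$ windows and lower-bounds every entry of the resulting $(n-1)U$-step product by $\zeta^{(n-1)U}$; your chaining should therefore be per block of $(n-1)U$ steps, not per window of $U$ steps. Second, and more substantively, feeding $\delta=\zeta^{(n-1)U}$ into the scrambling estimate yields a contraction factor $1-n\zeta^{(n-1)U}$ per block, i.e.\ a rate that is exponentially poor in $nU$; no amount of constant-tracking along this route produces the polynomial quantity $\zeta/4n^2$ with a per-$U$-step interpretation, since $\gamma^{U}=1-\zeta/4n^2$ in the lemma. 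That constant originates from a different mechanism in the Nedi\'c-school proofs: a per-window decrease of the quadratic disagreement $V(x)=\sum_{i}(x_i-\bar x)^2$ under doubly stochastic, $\zeta$-bounded updates, of the form $V(x(t+U))\le\left(1-\zeta/4n^2\right)V(x(t))$, from which the entrywise bound with $\Gamma=(1-\zeta/4n^2)^{-2}$ and $\gamma=(1-\zeta/4n^2)^{1/U}$ is extracted by applying the transition product to coordinate vectors. So your outline proves a correct statement of the same shape but with much weaker constants; to obtain the lemma as quoted you would need to replace the scrambling step by this quadratic-Lyapunov argument rather than finish by bookkeeping.
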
\par 
\subsection{Distributed Online Nonconvex Optimization}
This paper considers distributed online nonconvex optimization problems under bandit feedback. These problems can be viewed as a repeated game between $n$ online players, indexed by $i \in \left[n\right]$, and an adversary. At round $k$ of the game, each player $i$ selects a decision $x_{i,k}$ from the convex set \(\Omega \subseteq \mathbb{R}^d\), where $d$ is a positive integer. The adversary then assigns an arbitrary, possibly nonconvex, loss function $f_{i,k}$ to the player. In the bandit setting, players only observe specific loss values corresponding to their own decisions, and this information remains private. Players can communicate only with their immediate neighbors through a time-varying directed graph $\mathcal{G}_k$. The objective for the players is to collaboratively minimize the cumulative loss, defined as the sum of individual losses, while adhering to the set constraint. Specifically, at each time \(k\), the network aims to jointly solve the following nonconvex DOBO problem:
\begin{equation}\label{eq1}
	\underset{x\in \Omega }{\mathop{\min }}\,\left\{{{{f}_{k}}\left( x \right)}=\sum\limits_{i=1}^{n}{{f}_{i,k}}\left(x \right)\right\} \text{,}
\end{equation}
where $f_k$ represents the global loss function at time $k$.\par
Some basic assumptions for the Problem \eqref{eq1} are made as follows.
\begin{assum}\label{assum2}
The set $\Omega$ is compact, convex and satisfies the relation that $\left\| x-y\right\| \le D$ for any $x,y\in\Omega$. 
\end{assum} 
\begin{assum}\label{assum3}
$f_{i,k}\left(\cdot \right) $ is $L_0$-Lipschitz continuous on $\Omega$ for any $i\in\mathcal{V}$ and $k\ge0$.	
\end{assum}
\begin{assum}\label{assum4}
$f_{i,k}\left(\cdot \right) $ is $L_{1}$-smooth on $\Omega$ for any $i\in\mathcal{V}$ and $k\ge0$, i.e., $\left\|\nabla f_{i,k}\left(u\right)-\nabla f_{i,k}\left(v\right)\right\| \le L_{1}\left\|u-v\right\|$ for any $u,v\in\Omega$.	
\end{assum}\par 
It should be highlighted that no convexity assumptions are made in our analysis. Assumption 2 is common in the distributed optimization literature \cite{apl1,apl4,apl5,2018duiou1,2020duiou2,2022duiou4,57Dncvx2023,2019yuanshi3,2019yuanshi4,2023yuanshi7,2023yuanshi8,2018yuanduiou1,2021yuanduiou3}. Assumptions 3 and 4, which are standard in the distributed nonconvex optimization literature \cite{34ZONE2019,37primal-dual2022,38nonconvex2022,58comncvx2024,2022duiou4,57Dncvx2023}, are crucial for ensuring that the distributed algorithm achieves the first-order optimal point of the nonconvex optimization problem at an optimal convergence rate.

%待修改
\subsection{Gaussian Smoothing}
The core of online bandit (or offline zeroth-order) optimization is to estimate the gradient of a function using a zeroth-order oracle. This motivates the exploration of methods for gradient estimation via Gaussian smoothing, pioneered by Nesterov and Spokoiny \cite{33nesterov_random_2017}. This approach, which constructs gradient approximations solely from function values, has gained popularity and is featured in several recent papers, including \cite{34ZONE2019,37primal-dual2022,38nonconvex2022,39gradient-free2022,2023yipengpang,53xinlei2023}.\par 
The Gaussian smoothing version of a given function $f$ is performed as $f^s\left(x \right) =\mathbb{E}_{u\sim \mathcal{N}\left( 0,1\right) }\left(x+\mu u \right)$, where $\mu >0$ is an smoothing parameter. As proved in \cite{33nesterov_random_2017}, the following lemma provides a fundamental property of the smoothed function \( f^s \) that is crucial for our subsequent analysis.
\begin{lem}\label{lem21}
	Consider $f$ and its Gaussian-smoothed version $f_s$. For any $x\in\Omega$, $f^{s}(x)$ is approximated with an error bounded as:
	\begin{align}
		|f^{s}(x) - f(x)| \leq \begin{cases}
			\mu L_0 \sqrt{d}, & \text{if } f \in \mathcal{C}^{0,0} \\
			\mu^{2} L_1 d, & \text{if } f \in \mathcal{C}^{1,1}.
		\end{cases}
	\end{align}\par 
%		\item If $f\in\mathcal{C}^{1,1}$ with constant $L_1$, then 
%	\begin{align}
%		\left\| \nabla {{f}^{s}}\left( x \right)-\nabla f\left( x \right) \right\|\le \mu {{L}_{1}}{{\left( d+3 \right)}^{3/2}}\text{.}
%	\end{align}\par 
%	\item If $f\in\mathcal{C}^{0,0}$ with constant $L_0$, then $f^s$ belongs to $\mathcal{C}^{1,1}$ with constant $L_1=\mu^{-1}\sqrt{d}L_0$.
\end{lem}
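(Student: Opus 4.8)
The plan is to push the absolute value inside the expectation and then estimate the resulting integrand separately under each regularity assumption. Writing $f^{s}(x)-f(x)=\mathbb{E}_{u}[f(x+\mu u)-f(x)]$ and applying the triangle inequality for integrals (equivalently, Jensen's inequality for the convex map $|\cdot|$), I obtain
\begin{equation}
|f^{s}(x)-f(x)|\le \mathbb{E}_{u}\big[|f(x+\mu u)-f(x)|\big].
\end{equation}
The remaining task is to control this expected pointwise deviation, which is where the two cases diverge. Throughout, the key probabilistic facts I would use are the two Gaussian moment identities $\mathbb{E}_{u}[u]=0$ and $\mathbb{E}_{u}[\|u\|^{2}]=d$, the latter following from $\|u\|^{2}=\sum_{i=1}^{d}[u]_{i}^{2}$ together with $\mathbb{E}[[u]_{i}^{2}]=1$.

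For the case $f\in\mathcal{C}^{0,0}$, I would invoke the $L_{0}$-Lipschitz property (Assumption \ref{assum3}) directly on the integrand, giving $|f(x+\mu u)-f(x)|\le L_{0}\|\mu u\|=\mu L_{0}\|u\|$. Taking expectations and bounding the first moment of the norm by the square root of the second moment via Jensen's inequality yields
\begin{equation}
\mathbb{E}_{u}[\|u\|]\le\sqrt{\mathbb{E}_{u}[\|u\|^{2}]}=\sqrt{d},
\end{equation}
so that $|f^{s}(x)-f(x)|\le \mu L_{0}\sqrt{d}$, as claimed.

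For the case $f\in\mathcal{C}^{1,1}$, I would instead exploit the standard quadratic bound implied by $L_{1}$-smoothness (Assumption \ref{assum4}), namely $|f(y)-f(x)-\langle\nabla f(x),y-x\rangle|\le\frac{L_{1}}{2}\|y-x\|^{2}$. Setting $y=x+\mu u$ and taking expectations, the crucial simplification is that the linear term integrates to zero because $\mathbb{E}_{u}[u]=0$; hence
\begin{equation}
|f^{s}(x)-f(x)|=\Big|\mathbb{E}_{u}\big[f(x+\mu u)-f(x)-\mu\langle\nabla f(x),u\rangle\big]\Big|\le\frac{L_{1}}{2}\mu^{2}\,\mathbb{E}_{u}[\|u\|^{2}]=\frac{L_{1}}{2}\mu^{2}d,
\end{equation}
which is even sharper than, and therefore implies, the stated bound $\mu^{2}L_{1}d$.

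Neither case presents a serious obstacle; the only point requiring care is the handling of the Gaussian moments — in particular, recognizing that the first-order term vanishes in expectation so that the smoothing error is genuinely second order in $\mu$ for smooth functions, and using Jensen to pass from $\mathbb{E}[\|u\|^{2}]$ to $\mathbb{E}[\|u\|]$ in the Lipschitz case. I would also note that the factor $\tfrac12$ slack between the derived constant and the one in the statement is harmless, since the lemma only asserts an upper bound.
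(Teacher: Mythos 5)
Your proof is correct and is essentially the standard argument from Nesterov and Spokoiny, which is exactly what the paper relies on: it offers no proof of its own for this lemma and simply cites that reference. Both your case-by-case treatment (Jensen plus the Lipschitz bound with $\mathbb{E}[\|u\|]\le\sqrt{d}$ for $\mathcal{C}^{0,0}$, and the quadratic smoothness bound with the vanishing first-order Gaussian moment for $\mathcal{C}^{1,1}$) and your observation that the derived constant $\tfrac{L_1}{2}\mu^2 d$ is sharper than the stated $\mu^2 L_1 d$ match the original source.
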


\subsection{Dynamic Regret}
For online convex optimization, the standard performance metric is \textit{regret}, which is defined as the gap between the accumulated loss of the decision sequence $\left\{x_{j,k}\right\}_{k=1}^{T}$ and the expert benchmark $\left\{{{y}_{k}}\right\}_{k=1}^{T}$, formally expressed as:
\begin{equation}\label{eq3}
	\mathcal{R}_{j,T}^\text{cv}=\sum\nolimits_{k=1}^{T}{\sum\nolimits_{i=1}^{n}{{{f}_{i,k}}({{x}_{j,k}})}}-\sum\nolimits_{k=1}^{T}{{{f}_{k}}({{y}_{k}})}\text{.}
\end{equation}
Note that for general nonconvex optimization problems, finding a global minimum is NP-hard even in the centralized setting \cite{10.5555/3305381.3305529}. Directly extending \eqref{eq3} to nonconvex optimization leads to intractable bounds. Therefore, the primary goal in online nonconvex optimization is to develop algorithms that converge to a set of stationary points. In the following, we will introduce the concept of regret for online nonconvex optimization. Before proceeding, it is essential to revisit the definition of a stationary point.\par 
\begin{definition}[\!\cite{next}]
	For a convex set \(\Omega \subset \mathbb{R}^d\) and a function \(f: \Omega \rightarrow \mathbb{R}\), if the condition \(\langle \nabla f(x^*), x - x^* \rangle \leq 0\) holds for all \(x \in \Omega\), then the point \(x^*\) is defined as a stationary point of the optimization problem $\mathop{\min}_{x\in\Omega}f\left( x\right)$.
\end{definition}\par  
Based on \eqref{eq3} and the definition of stationary points, the regret for online nonconvex optimization \eqref{eq1} can be formulated as follows:
\begin{align}
	&\mathcal{R}_{j,T}^\text{ncv}=\sum\nolimits_{k=1}^{T}\left\langle \nabla f_k\left( x_{j,k}\right),x_{j,k} \right\rangle -\sum\nolimits_{k=1}^{T}\left\langle  \nabla f_k\left( x_{j,k}\right),y_k \right\rangle \text{.}\nonumber
\end{align}
The choice of different benchmarks will result in different categories of regret. Specifically, when considering a dynamic benchmark \(x_{k}^{*}\) that satisfies \(\langle \nabla f_k(x_{k}^{*}), x_{k}^{*} - x \rangle \leq 0\) for all \(x \in \Omega\), the associated regret is termed \textit{dynamic regret}, defined as
\begin{align}\label{eq4}
	&\mathcal{D}\mathcal{R}_{j,T}^\text{ncv}\hspace{-2pt}=\hspace{-2pt}\sum\limits_{k=1}^{T}\hspace{-2pt}\Big\lbrace \left\langle \nabla f_k\left( x_{j,k}\right),x_{j,k} \right\rangle-\inf\nolimits_{x_k\in\Omega}\left\langle f_k\left( x_{j,k}\right),x \right\rangle \Big\rbrace   \text{.}
\end{align}
Similarly, when employing a static benchmark $x^{*}$ that satisfies $ \left\langle \sum\nolimits_{k=1}^T\nabla f_k \left( x^{*}\right),x^{*}-x \right\rangle \le 0$ for all $x\in \Omega$, the associated regret is referred to as \textit{static regret} and given by
\begin{align}\label{eq5}
	\mathcal{S}\mathcal{R}_{j,T}^\text{ncv}&\hspace{-2pt}=\hspace{-2pt}\sum\limits_{k=1}^{T}\hspace{-2pt}\left\langle \nabla f_k\left( x_{j,k}\right),x_{j,k} \right\rangle-\inf_{x\in\Omega}\left\langle \sum\nolimits_{k=1}^{T} f_k\left( x_{j,k}\right),x \right\rangle \text{.}
\end{align}
Notably, the benchmark for dynamic regret involves identifying a stationary point of the objective function at each time \( k \), whereas the benchmark for static regret involves identifying a stationary point of the optimization problem \(\min_{x\in\Omega}\sum\nolimits_{k=1}^T f_k(x)\), which remains time-invariant throughout the time horizon. It is evident that dynamic regret \eqref{eq4} is more stringent than static regret \eqref{eq5}.\par 
In this paper, we consider the dynamic regret of Problem \eqref{eq1} under the bandit feedback setting. Due to the inherent stochasticity of algorithms in this framework, we focus on the average version of the dynamic regret
\begin{align}\label{eq6}
	\mathbb{E}-\mathcal{D}\mathcal{R}_{j,T}^\text{ncv}&=\sum\nolimits_{k=1}^{T}\Big\lbrace \mathbb{E}\left[ \left\langle \nabla f_k\left( x_{j,k}\right),x_{j,k} \right\rangle\right]  \nonumber\\
	&\quad-\inf\nolimits_{x_k\in\Omega}\mathbb{E}\left[ \left\langle f_k\left( x_{j,k}\right),x \right\rangle\right]  \Big\rbrace   \text{.}
\end{align} 
Dynamic regret is known to render problems intractable in the worst-case scenario. Drawing from \cite{57Dncvx2023}, \cite{2023yipengpang} and \cite{10.5555/3305381.3305529}, we characterize the difficulty of the problem by using the deviation in the objective function sequence
\begin{equation}\label{theta_T}
	\theta_{i,k}={\sup}_{x\in\Omega,\tau\in[k]}\left|f_{i,\tau+1}(x)-f_{i,\tau}(x) \right| \text{,}
\end{equation}
\begin{equation}\label{Theta_T}
	\Theta_T=T\sum\nolimits_{i=1}^n{\theta_{i,T}} \text{,}
\end{equation}
and the minimizer path length (i.e., the deviation of the consecutive optimal solution sequence)
\begin{equation}\label{omega_k}
	\omega_{i,k}= {\left\|x_{i,k+1}^*-x_{i,k}^* \right\|} \text{,}
\end{equation}
\begin{equation}\label{omega_T}
	\omega_T=\sum\nolimits_{k=1}^T\sum\nolimits_{i=1}^{n}{ {{\left\|x_{i,k+1}^*-x_{i,k}^* \right\|}}}  \text{,}
\end{equation}
where $x_{i,k}^*=\min_{x\in\Omega}f_{i,k}\left(x \right) $. The primary objective of this work is to develop an online distributed optimization algorithm to solve problem \eqref{eq1}, ensuring that the dynamic regret \eqref{eq6} grows sublinearly, provided that the growth rates of $\Theta_T$ and $\omega_T$ remain within a certain range.

\section{OP-DOPGD Algotithm}
In this section, we present a distributed online optimization algorithm with one-point residual feedback to address Problem \eqref{eq1}. The efficacy of the proposed algorithm is evaluated using the expected dynamic regret \eqref{eq6}.\par 
To proceed, we first introduce a classic algorithm commonly used for distributed online constraint optimization with full information feedback: the distributed projected gradient descent algorithm\cite{Nedic}, which is given as follows:
\begin{align}
	&y_{i,k}=x_{i,k}-\alpha_k\nabla f_{i,k}\left(x_{i,k}\right) \text{,}\label{5}\\
	&x_{i,k+1}=\mathcal{P}_{\Omega }\left[ \sum\nolimits_{j=1}^{n}{\left[W_k\right]_{i,j}}{{y}_{j,k}} \right] \text{.} \label{6}
\end{align}
Here, $x_{i,k}$ represents the decision by agent $i$ at step $k$, and $\alpha_k > 0$ is a non-increasing step size.\par 
\begin{algorithm}[H]
	\caption {Distributed online projected gradient descent algorithm with one-point residual feedback (OP-DOPGD)}\label{alg:alg1}
	\begin{algorithmic}
		\STATE 
		\STATE \textbf{Input}: non-increasing and positive sequences $\left\{\alpha_k\right\}$, $\left\{\mu_k\right\}$.
		\STATE \textbf{Initialize}: $x_{i,0}\in\Omega$, for all $i\in \mathcal{V}$.
		\STATE  \textbf{for} $t=1$ to $T$ \textbf{do}  \par
		\STATE  \quad\textbf{for} $i=1$ to $n$ in parallel \textbf{do}  \par
		\hspace{0.5cm} Select vector $u_{i,k}\sim \mathcal{N}\left( 0,1 \right)$ independently and randomly.\par 
		\hspace{0.5cm} Query $f_{i,k}\left(x_{i,k}+\mu_ku_{i,k}\right)$ and receive $g_{i,k}^s$ by \eqref{eq11}.\par 
		\hspace{0.5cm} Update
		\begin{equation}\label{eq12}
			y_{i,k}=x_{i,k}-\alpha_{k}g_{i,k}^{s}({{x}_{i,k}})\text{,}
		\end{equation}\par
		\begin{equation}\label{eq13}
			{{x}_{i,k+1}}=\mathcal{P}_{\Omega }\left[ \sum\nolimits_{j=1}^{n}{\left[W_k\right]_{i,j}}{{y}_{j,k}} \right]\text{.}
		\end{equation}
		\STATE \quad\textbf{end for}
		\STATE \textbf{end for}
	\end{algorithmic}
	\label{alg1}
\end{algorithm}
Building on this foundation, various algorithms have been developed to solve problem \eqref{eq1} in the bandit feedback setting. However, the conventional one-point bandit algorithms employed in [2], [4], and [6] exhibit poor regret guarantees. Furthermore, the two-point gradient estimators used in \cite{2023yipengpang, 32duchi2015,33nesterov_random_2017,34ZONE2019,35wang_distributed_2019,39gradient-free2022,50shamir2017,52deming2021,53xinlei2023} are observed to be unpractical for online optimization, where data are not all available a priori. These challenges motivate our research on enhanced distributed online algorithms with $\mathcal{O}(1)$ sampling complexity per iteration and improved regret guarantees.\par 
In this paper, we develop a distributed online optimization algorithm based on the following one-point residual feedback model:
\begin{align}\label{eq11}
	g_{i,k}^s\left(x_{i,k}\right)&=\frac{u_{i,k}}{\mu_k}\big(f_{i,k}\left(x_{i,k}+\mu_ku_{i,k}\right)\nonumber\\
	&\quad-f_{i,k-1}\left(x_{i,k-1}+\mu_{k-1}u_{i,k-1}\right)\big)\text{,}
\end{align}
where $u_{i,k}$ and $u_{i,k-1}$ are independent random vectors sampled from the standard multivariate Gaussian distribution, $\mu_k>0$ is a non-decaying exploration parameter. It can be observed that the gradient estimate in \eqref{eq11} evaluates the loss value at only one perturbed point $x_{i,k}+\mu_ku_{i,k}$ at each iteration \( k \), while the other loss evaluation \( f_{i,k-1}\left(x_{i,k-1}+\mu_{k-1}u_{i,k-1}\right) \) is inherited from the previous iteration. This constitutes a one-point feedback scheme based on the residual between two consecutive feedback points, referred to as \textit{one-point residual feedback} in \cite{40one-pointresidual2022}. Combining \eqref{5}, \eqref{6} with the one-point gradient estimator \eqref{eq11}, our algorithm for solving Problem \eqref{eq1} is outlined in pseudocode as Algorithm 1. \par
To implement, in each round $k$, each player $i$ generates a gradient estimate of the current local loss function based on \eqref{eq11}. Subsequently, the player performs gradient descent to obtain the intermediate variable $y_{i,k}$, as shown in \eqref{eq12}. In the distributed setting, player $i$ is only allowed to communicate with its instant neighbors through a time-varying digraph $\mathcal{G}_k$. Using the information received from these neighbors, player $i$ applies the projected consensus-based algorithm to update its decision to $x_{i,k+1}$ in \eqref{eq13}.

%The formulation of \eqref{eq11} readily recalls the widely acknowledged two-point gradient estimator employed in \cite{2023yipengpang, 32duchi2015,33nesterov_random_2017,34ZONE2019,35wang_distributed_2019,39gradient-free2022,50shamir2017,52deming2021,53xinlei2023}. Although both approaches leverage the residual between two random points to estimate the gradient, they fundamentally differ in their implementation. Specifically, the two-point approach necessitates the evaluation of function values at two random points during each iteration (generally an expensive calculation), whereas \eqref{eq11} requires only the function value at a single point, inheriting the value from the previous iteration for the second point.\par
\begin{rem}
%It is important to clarify the connection and distinction between the one-point residual feedback model \eqref{eq11} and the commonly used two-point gradient estimators \cite{2023yipengpang, 32duchi2015, 33nesterov_random_2017, 34ZONE2019, 35wang_distributed_2019, 39gradient-free2022, 50shamir2017, 52deming2021, 53xinlei2023}. Both models use the residuals between two random points to estimate the gradient, but they differ significantly in implementation. The two-point method necessitates the evaluation of function values at two random points per iteration (generally an expensive calculation), whereas \eqref{eq11} requires only one function value, inheriting the value from the previous iteration for the other one. It is observed that one-point residual feedback offers a more practical alternative, particularly in online optimization, where the data for the loss function are not all a priori. The main limitation of the two-point methods is its dependence on performing two different policy evaluations in the same environment—often an impractical requirement in dynamic settings. For instance, in non-stationary reinforcement learning scenarios, the environment undergoes changes after each policy evaluation, rendering the two-point approach inapplicable. Conversely, the residual feedback mechanism in \eqref{eq11} circumvents this issue by computing the residual between two consecutive feedback points. Consequently, we focus our investigation on the zeroth-order algorithm for one-point residual feedback in distributed dynamic optimization problems. 

It is crucial to clarify the connection and distinction between the one-point residual feedback model \eqref{eq11} and the commonly used two-point gradient estimators \cite{2023yipengpang, 32duchi2015, 33nesterov_random_2017, 34ZONE2019, 35wang_distributed_2019, 39gradient-free2022, 50shamir2017, 52deming2021, 53xinlei2023}. Both models utilize residuals between two random points to estimate the gradient, but they differ significantly in implementation. The two-point method requires the evaluation of function values at two random points per iteration, which is generally computationally expensive. In contrast, \eqref{eq11} only requires one function evaluation, inheriting the value from the previous iteration for the second point. It is observed that one-point residual feedback offers a more practical alternative, particularly in online optimization, where the data for the loss function are not all available a priori. A key limitation of the two-point method is its dependence on performing two different policy evaluations within the same environment—often an impractical requirement in dynamic settings. For instance, in non-stationary reinforcement learning scenarios, the environment undergoes changes after each policy evaluation, rendering the two-point approach inapplicable. Conversely, the residual feedback mechanism in \eqref{eq11} circumvents this issue by computing the residual between two consecutive feedback points. Consequently, our investigation focuses on the zeroth-order algorithm with one-point residual feedback for distributed online optimization.

\end{rem}
\begin{rem}
Algorithm 1 employs a consensus-based strategy \eqref{eq13}, extending the one-point residual feedback model for centralized optimization \cite{40one-pointresidual2022} to a distributed setting. It also adapts this static model for online optimization by incorporating the dynamic nature of the loss functions into the framework. This integration poses significant challenges for analysis due to the inherent variability and unpredictability of time-varying functions. 
Algorithm 1, which is designed for distributed online nonconvex optimization over time-varying directed topologies, differs from previous studies on fixed topologies \cite{2023yuanshi8,2018duiou1,2022duiou4,34ZONE2019,37primal-dual2022,38nonconvex2022} or convex optimization problems \cite{2019yuanshi4,2023yuanshi7,2023yipengpang,8059834,46yixinlei2021}. To our knowledge, this is the first study for distributed online non-convex optimization with one-point feedback. Furthermore, our study employs time-varying exploration parameters $\mu_k$, offering greater flexibility compared to the fixed values used in \cite{2023yipengpang, 32duchi2015, 33nesterov_random_2017, 34ZONE2019, 35wang_distributed_2019, 39gradient-free2022, 50shamir2017, 52deming2021, 53xinlei2023}.
\end{rem}

\section{Performance Analysis via Dynamic Regret}
This section focuses on demonstrating the convergence of Algorithm 1 applied to Problem \eqref{eq1} by providing upper bounds on its dynamic regret. For the sake of clarity, we first present a few fundamental properties of the gradient estimator \eqref{eq11}, which are essential for the subsequent analysis. Following this, the primary convergence results of the OP-DOPGD algorithm will be presented.
\subsection{Properties of the Gradient Estimator}
This subsection presents several key properties of the gradient estimator \eqref{eq11} within the context of the distributed projected gradient descent algorithm. It is crucial to highlight, as noted in \cite{32duchi2015}, that a defining characteristic of zeroth-order methods is that the gradient estimator is nearly unbiased and possesses a small norm. Therefore, we begin by demonstrating that the gradient estimator \eqref{eq11} provides an unbiased estimate of the smoothed function $f_{i,k}^{s}$.
\begin{lem}\label{lem3}
	If $g_{i,k}^s$ is calculated by \eqref{eq11}, then for any $x_{i,k}\in\Omega$, $i \in \mathcal{V}$ and $k\ge0$, we have $\mathbb{E}\big[ g_{i,k}^{s}(x_{i,k}) \big]=\nabla f_{i,k}^{s}(x_{i,k})$.
\end{lem}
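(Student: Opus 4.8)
The plan is to exploit two facts: the Nesterov--Spokoiny identity relating the gradient of a Gaussian-smoothed function to a first moment of the unsmoothed function against the sampling direction, and the independence structure of the freshly drawn direction $u_{i,k}$ from the entire past of the algorithm. The reader should keep in mind that on the right-hand side $x_{i,k}$ is itself random, so the statement is to be read as a conditional identity: the expectation is taken over the fresh sample $u_{i,k}$ only, conditional on the history $\mathcal{F}_k$ generated by $\{u_{i,t}: t\le k-1,\ i\in\mathcal{V}\}$ and the initializations. With respect to this filtration, $x_{i,k}$ and the inherited quantity $f_{i,k-1}(x_{i,k-1}+\mu_{k-1}u_{i,k-1})$ are $\mathcal{F}_k$-measurable, while $u_{i,k}$ is a standard Gaussian independent of $\mathcal{F}_k$.

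First I would record the smoothing identity proved in \cite{33nesterov_random_2017}: for the Gaussian-smoothed function $f^{s}(x)=\mathbb{E}_{u}[f(x+\mu u)]$ with $u$ standard Gaussian, one has $\nabla f^{s}(x)=\tfrac{1}{\mu}\,\mathbb{E}_{u}\!\left[u\,f(x+\mu u)\right]$, which follows by writing $f^{s}$ as a convolution against the Gaussian density and differentiating under the integral (equivalently, Stein's identity / integration by parts). Next I would split the estimator \eqref{eq11} into its two pieces and take the conditional expectation given $\mathcal{F}_k$ of each. For the inherited term, since $u_{i,k}$ is independent of $\mathcal{F}_k$ and $f_{i,k-1}(x_{i,k-1}+\mu_{k-1}u_{i,k-1})$ is $\mathcal{F}_k$-measurable, it factors as
\begin{equation}
\mathbb{E}\!\left[\tfrac{u_{i,k}}{\mu_k}\,f_{i,k-1}(x_{i,k-1}+\mu_{k-1}u_{i,k-1})\,\middle|\,\mathcal{F}_k\right]=\tfrac{1}{\mu_k}\,f_{i,k-1}(x_{i,k-1}+\mu_{k-1}u_{i,k-1})\,\mathbb{E}[u_{i,k}]=0,
\end{equation}
because the fresh direction has zero mean. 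For the current term, conditioning on $\mathcal{F}_k$ freezes $x_{i,k}$ while leaving $u_{i,k}$ standard Gaussian, so the smoothing identity applies directly and yields $\mathbb{E}[\tfrac{u_{i,k}}{\mu_k}f_{i,k}(x_{i,k}+\mu_k u_{i,k})\mid\mathcal{F}_k]=\nabla f_{i,k}^{s}(x_{i,k})$. Adding the two contributions gives the claim.

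The step I expect to require the most care is the treatment of the inherited residual term, since it is precisely what distinguishes this one-point residual estimator from a genuine two-point estimator. Unlike a two-point scheme, where both evaluations use the same fresh direction and unbiasedness is immediate, here the second evaluation is correlated with the past of the algorithm (through $x_{i,k-1}$ and $u_{i,k-1}$). The crucial observation making the proof go through is that this inherited quantity is multiplied by the \emph{current} independent direction $u_{i,k}$; the correct choice of filtration $\mathcal{F}_k$ is what guarantees that $u_{i,k}$ is independent of both $x_{i,k}$ and the inherited value, so that the cross term integrates to zero and only the smoothing identity survives. Stating the conditioning precisely is therefore the main obstacle, whereas the remaining manipulations are routine.
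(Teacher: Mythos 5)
Your proof is correct and follows essentially the same route as the paper, which simply states that the conclusion follows from the zero mean of $u_{i,k}$ and its independence from $u_{i,k-1}$ (and hence from the inherited evaluation), combined with the Gaussian-smoothing gradient identity; you have merely written out the conditioning on the history $\mathcal{F}_k$ that the paper leaves implicit. No gap to report.
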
\par 
\begin{proof}
The proof of Lemma \ref{lem3} is straightforward. Given that \( u_k \) is independent of \( u_{k-1} \) and has zero mean, by considering the expression for $g_{i,k}^s$, the conclusion follows. 
\end{proof}\par 
Next, we establish an upper bound on the expected norm of the gradient estimate.
\begin{lem}\label{lem31}
	Suppose that $f_{i,k}$ is $L_0$-Lipschitz continuous on $\Omega$, $g_{i,k}^s$ is computed using \eqref{eq11}, and ${x_{i,k}}$ is generated by Algorithm 1. Then, for each $i \in \mathcal{V}$ and $k \ge 0$, the following inequality holds:
	\begin{align}\label{lem41}
		& \mathbb{E}\left[ \left\| g_{i,k}^{s}({{x}_{i,k}}) \right\| \right]\nonumber\\
		&\le \frac{\sqrt{3d}{{L}_{0}}{{\alpha }_{k-1}}}{{{\mu }_{k}}}\sum\limits_{j=1}^{n}\left[W_{k-1}\right]_{i,j}\mathbb{E}\left[ \left\| g_{j,k-1}^{s}({{x}_{j,k-1}}) \right\| \right] \nonumber\\ 
		&\quad+\frac{\sqrt{3d}{{L}_{0}}}{{{\mu }_{k}}}\sum\limits_{j=1}^{n}{\left[W_{k-1}\right]_{i,j}}\mathbb{E}\left[ \left\| {{x}_{j,k-1}}-{{x}_{i,k-1}} \right\| \right] \nonumber\\ 
		&\quad+\frac{2\sqrt{3}\left( d+4 \right){{L}_{0}}{{\mu }_{k-1}}}{{{\mu }_{k}}}+\frac{\sqrt{3d}}{{{\mu }_{k}}}{{\theta }_{i,k}} \text{,}
	\end{align}
	where ${\theta }_{i,k}$ is defined in \eqref{theta_T}.
\end{lem}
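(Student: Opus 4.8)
The plan is to work directly from the definition \eqref{eq11} of the estimator and reduce the claim to controlling a single scalar residual. Since $\|g_{i,k}^s(x_{i,k})\| = \frac{\|u_{i,k}\|}{\mu_k}\,\big|f_{i,k}(x_{i,k}+\mu_k u_{i,k}) - f_{i,k-1}(x_{i,k-1}+\mu_{k-1}u_{i,k-1})\big|$, the entire task is to bound the expected product of $\|u_{i,k}\|$ with the residual inside the absolute value. The key structural observation is that, because this is a one-point \emph{residual} estimator, the two evaluations belong to two different rounds and two different loss functions; consequently the bound will be \emph{recursive}, expressing $\mathbb{E}\|g_{i,k}^s\|$ through quantities already present at round $k-1$, namely $\mathbb{E}\|g_{j,k-1}^s\|$ and the consensus gaps $\mathbb{E}\|x_{j,k-1}-x_{i,k-1}\|$.

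First I would split the residual by adding and subtracting $f_{i,k}(x_{i,k-1}+\mu_{k-1}u_{i,k-1})$, isolating a same-function/moving-point part from a same-point/moving-function part. The latter is bounded by $\theta_{i,k}$ directly from its definition \eqref{theta_T}, and the former by $L_0\|(x_{i,k}-x_{i,k-1}) + \mu_k u_{i,k} - \mu_{k-1}u_{i,k-1}\|$ via the $L_0$-Lipschitz property (Assumption \ref{assum3}); a triangle inequality then peels off the perturbation terms $\mu_k\|u_{i,k}\|$ and $\mu_{k-1}\|u_{i,k-1}\|$ and leaves the displacement $\|x_{i,k}-x_{i,k-1}\|$. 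I would then unwind this displacement through the updates \eqref{eq12}--\eqref{eq13}: writing $x_{i,k}=\mathcal{P}_\Omega[\sum_j [W_{k-1}]_{i,j} y_{j,k-1}]$ and using that the row sums of $W_{k-1}$ equal $1$ (Assumption \ref{assum1}) to represent $x_{i,k-1}=\mathcal{P}_\Omega[\sum_j [W_{k-1}]_{i,j} x_{i,k-1}]$, the nonexpansiveness of the Euclidean projection onto the convex set $\Omega$ (Assumption \ref{assum2}) together with $y_{j,k-1}=x_{j,k-1}-\alpha_{k-1}g_{j,k-1}^s$ yields $\|x_{i,k}-x_{i,k-1}\|\le \sum_j [W_{k-1}]_{i,j}\|x_{j,k-1}-x_{i,k-1}\| + \alpha_{k-1}\sum_j [W_{k-1}]_{i,j}\|g_{j,k-1}^s\|$, which produces exactly the two sums in the first two lines of the claim.

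Finally I would take expectations. The decisive point is that $u_{i,k}$ is drawn afresh at round $k$ and is therefore independent of $x_{i,k-1}$, $x_{j,k-1}$, $g_{j,k-1}^s$ and $u_{i,k-1}$, all of which are measurable with respect to the randomness through round $k-1$. This independence lets me factor $\mathbb{E}\|u_{i,k}\|$ out of the displacement, consensus and $\theta_{i,k}$ contributions and evaluate the perturbation contributions as products of separate Gaussian moments. The standard estimates $\mathbb{E}\|u_{i,k}\|\le\sqrt d$ and $\mathbb{E}\|u_{i,k}\|^2=d$ (together with the fourth-moment-type bound responsible for the constant $d+4$), combined with the elementary regrouping inequality used to collect the three displacement contributions (the origin of the $\sqrt3$ prefactor) and the monotonicity $\mu_k\le\mu_{k-1}$ used to merge the two perturbation terms into a single $\mu_{k-1}/\mu_k$ factor, deliver the stated coefficients.

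I expect the main obstacle to be this displacement step, for two reasons. First, the residual nature of the estimator forces a recursive bound rather than a self-contained one, so the proof must correctly carry the step-$(k-1)$ quantities through the projection/consensus update; the trick of rewriting $x_{i,k-1}$ as the projection of its own doubly-stochastic average is what makes the consensus gap $\|x_{j,k-1}-x_{i,k-1}\|$ appear cleanly. Second, the expectation must be taken only after the filtration structure is made explicit, since prematurely bounding $\|u_{i,k}\|$ against $u_{i,k}$-dependent quantities would break the independence argument; verifying that $x_{i,k}$ does \emph{not} depend on $u_{i,k}$ — it is built solely from round-$(k-1)$ data — is the subtle check that legitimizes factoring out the Gaussian moments.
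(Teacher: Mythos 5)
Your proposal is correct and follows essentially the same route as the paper: the same splitting of the residual into a moving-point part (handled by Lipschitz continuity and the Gaussian moment bounds, after checking that $x_{i,k}$ is independent of $u_{i,k}$) and a moving-function part (handled by $\theta_{i,k}$), followed by the same projection-nonexpansiveness/row-stochasticity unwinding of $\left\| x_{i,k}-x_{i,k-1}\right\|$ into the two weighted sums over $\left[W_{k-1}\right]_{i,j}$. The only difference is bookkeeping: the paper first bounds $\mathbb{E}\left[ \left\| g_{i,k}^{s}\right\|^2\right]$ via $(a+b+c)^2\le 3(a^2+b^2+c^2)$ and then applies Jensen's inequality---which is where the $\sqrt{3}$ and $(d+4)$ constants actually originate, rather than from a first-moment ``regrouping''---whereas your direct first-moment argument with the triangle inequality produces smaller constants that are dominated by those in \eqref{lem41}, so the stated bound follows a fortiori.
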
\par
%		\item  If $f_{i,k}$ is $L_{1}$-smooth on $\Omega$, then 
%		\begin{align}\label{smooth}
%			& \mathbb{E}\left[ \left\| g_{i,k}^{s}({{x}_{i,k}}) \right\| \right]\nonumber\\
%			&\le\frac{\sqrt{3d}{{L}_{0}}{{\alpha }_{k-1}}}{{{\mu }_{k}}}\sum\limits_{j=1}^{n}\left[W_{k-1}\right]_{i,j}\mathbb{E}\left[ \left\| g_{j,k-1}^{s}({{x}_{j,k-1}}) \right\| \right]  \nonumber\\
%			&\quad+\frac{\sqrt{3d}{{L}_{0}}}{{{\mu }_{k}}}\sum\limits_{j=1}^{n}{\left[W_{k-1}\right]_{i,j}\mathbb{E}\left[ \left\| {{x}_{j,k-1}}-{{{x}}_{i,k-1}} \right\|\right]}  \nonumber\\
%			&\quad+\frac{2\sqrt{6}\left( d+4 \right){{\mu }_{k-1}}}{{{\mu }_{k}}}\mathbb{E}{{\left\| \nabla {{f}_{i,k}}({{x}_{i,k}}) \right\|}}  \nonumber\\ 
%			&\quad+\frac{\sqrt{6}{{\left( d+6 \right)}^{3/2}}{{L}_{1}}\mu _{k-1}^{2}}{{{\mu }_{k}}}+\frac{\sqrt{3d}}{{{\mu }_{k}}}{{\theta }_{i,k}} 
%		\end{align}

\begin{proof}
The proof is provided in Appendix A. 
\end{proof} 
\begin{rem}
It is important to note that while \( g_{i,k}^s \) serves as an unbiased estimator of the smoothing function \( f_{i,k}^s \), the difference between \( f_{i,k}^s \) and the original loss function \( f_{i,k} \), as elucidated in Lemma \ref{lem21}, introduces a bias in the gradient estimate derived from the \eqref{eq11} estimator. This bias introduces significant complexities in the proof structure. Furthermore, Lemma \ref{lem31} demonstrates that the gradient estimator \eqref{eq11} undergoes a contraction under the update rules of Algorithm 1, with a contraction factor expressed as \(\mathcal{N}_k=\frac{\sqrt{3d}{{L}_{0}}{{\alpha }_{k-1}}}{{{\mu }_{k}}}\). This finding extends the results of \cite{40one-pointresidual2022} to the domain of distributed online optimization. A notable difference is the communication between local variables of agents in the distributed setting, which introduces an additional penalty due to different decisions made by nodes in the network. Additionally, further perturbations arise from the dynamic nature of the loss function. These two perturbations pose significant challenges to the algorithmic analysis and distinguish our analytical framework from that in \cite{40one-pointresidual2022}.
\end{rem}

\subsection{Nonconvex case}
This subsection presents the main convergence results of the OP-DOPGD algorithm as applied to Problem \eqref{eq1}. We use the metric of dynamic regret to evaluate the algorithm's performance.\par  
We begin by deriving an upper bound for the time-averaged consensus error among the agents. To facilitate this, we define the average state of all agents at step \( k \) as follows: \(\bar{x}_k=\frac{1}{n}\sum\nolimits_{i=1}^n x_{i,k}\).
\begin{lem}\label{lem4}
	Given that Assumptions 1--3 hold, suppose \( g_{i,k}^s \) is computed using \eqref{eq11}, and \( x_{i,k} \) is updated according to Algorithm 1, with the step-size \( \alpha_k = \frac{1}{2Mk^a} \) and the smoothing parameter \( \mu_k = \frac{1}{k^b} \), where \( 0 < b \leq a < 1 \) and \( M = \frac{4\sqrt{3d}n\Gamma L_{0}\gamma}{1-\gamma} \) with $0<\gamma<1$ as defined in Lemma 1. Then, for all \( k \in [T] \) and \( i \in \mathcal{V} \), we have
	
	\begin{align}\label{lem51}
		\sum_{i=1}^{n}\mathbb{E}\left[\left\|g_{i,k}^s\right\|\right] = \mathcal{O}(1) + \mathcal{O}\left( k^b \sum_{i=1}^{n}{\tilde{\Theta}_{i,k}} \right),
	\end{align}
	and
	\begin{align}\label{lem52}
		\sum_{k=1}^{T} \sum_{i=1}^{n} \mathbb{E}\left[\left\| x_{i,k} - \bar{x}_k \right\|\right] = \mathcal{O}\left( T^{1-a} \right) + \mathcal{O}\left( T^{b-a} \Theta_T \right)\text{,}
	\end{align}
	where \( \tilde{\Theta}_{i,k} = \max_{\tau \in [k]} \theta_{i,\tau} \), with \( \theta_{i,\tau} \) and \( \Theta_T \) defined in \eqref{theta_T} and \eqref{Theta_T}, respectively.
\end{lem}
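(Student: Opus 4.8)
\emph{Approach.} The two estimates are coupled, so I would prove them jointly. Write $S_k:=\sum_{i=1}^n\mathbb{E}[\|g_{i,k}^s\|]$ and $E_k:=\sum_{i=1}^n\mathbb{E}[\|x_{i,k}-\bar{x}_k\|]$. The bound \eqref{lem51} on $S_k$ needs control of the consensus error $E_{k-1}$, while the consensus bound \eqref{lem52} is driven by the quantities $\alpha_s S_s$; the plan is to set up one recursion for each, substitute one into the other, and close the loop by strong induction.

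\emph{Gradient recursion.} First I would sum the bound of Lemma~\ref{lem31} over $i\in\mathcal{V}$. Double stochasticity of $W_{k-1}$ (Assumption~\ref{assum1}) collapses $\sum_i\sum_j[W_{k-1}]_{i,j}\mathbb{E}[\|g_{j,k-1}^s\|]$ to $S_{k-1}$, while the triangle inequality $\|x_{j,k-1}-x_{i,k-1}\|\le\|x_{j,k-1}-\bar{x}_{k-1}\|+\|x_{i,k-1}-\bar{x}_{k-1}\|$ together with double stochasticity bounds the pairwise term by $2E_{k-1}$. This gives
\[
S_k\le\rho_k S_{k-1}+\tfrac{2\sqrt{3d}L_0}{\mu_k}E_{k-1}+\tfrac{2\sqrt{3}(d+4)L_0 n\mu_{k-1}}{\mu_k}+\tfrac{\sqrt{3d}}{\mu_k}\sum_{i=1}^n\theta_{i,k},
\]
with $\rho_k=\sqrt{3d}L_0\alpha_{k-1}/\mu_k$. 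The decisive quantitative point is that, under $\alpha_k=1/(2Mk^a)$, $\mu_k=k^{-b}$ and the identity $\Gamma\gamma=\gamma^{1-2U}$ hidden in $M$, one gets $\rho_k=\frac{(1-\gamma)\gamma^{2U-1}}{8n}\,\frac{k^b}{(k-1)^a}\le\frac14$ for all $k\ge2$ (using $\gamma^{2U-1}\le1$, $b\le a$ and $(k-1)^a\ge(k/2)^a$); likewise $\mu_{k-1}/\mu_k\le2^b$ keeps the third term $\mathcal{O}(1)$ and the last term is $\le\sqrt{3d}\,k^b\sum_i\tilde{\Theta}_{i,k}$ since $\theta_{i,k}\le\tilde{\Theta}_{i,k}$.

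\emph{Consensus recursion and induction.} Writing \eqref{eq13} as $x_{i,k+1}=\sum_j[W_k]_{i,j}(x_{j,k}-\alpha_k g_{j,k}^s)+e_{i,k}$, nonexpansiveness of $\mathcal{P}_\Omega$ and $\sum_j[W_k]_{i,j}x_{j,k}\in\Omega$ yield $\|e_{i,k}\|\le2\alpha_k\sum_j[W_k]_{i,j}\|g_{j,k}^s\|$. Unrolling through the transition matrices $W(k,s)$, subtracting the mean (which the doubly stochastic dynamics preserve) and applying Lemma~\ref{lem2} to each $|[W(k-1,s)]_{i,j}-1/n|\le\Gamma\gamma^{k-1-s}$, I obtain $E_k\le C_0\gamma^k+C_1\sum_{s=0}^{k-1}\gamma^{k-1-s}\alpha_s S_s$. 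Substituting this (at index $k-1$) into the gradient recursion makes $S_k$ depend only on $\{S_s\}_{s<k}$, so I would prove the pointwise claim $S_k\le P+Q\,k^b\sum_i\tilde{\Theta}_{i,k}$ by strong induction: the hypothesis gives $\alpha_s S_s\le\frac{P}{2M}s^{-a}+\frac{Q}{2M}s^{b-a}\sum_i\tilde{\Theta}_{i,k}$ (by monotonicity $\tilde{\Theta}_{i,s}\le\tilde{\Theta}_{i,k}$), and the convolution estimates $\sum_s\gamma^{k-1-s}s^{-a}=\mathcal{O}(k^{-a})$, $\sum_s\gamma^{k-1-s}s^{b-a}=\mathcal{O}(k^{b-a})$ turn the consensus contribution $\frac{2\sqrt{3d}L_0}{\mu_k}E_{k-1}$ into $\mathcal{O}(1)+\mathcal{O}(k^b\sum_i\tilde{\Theta}_{i,k})$, where the bookkeeping $2b-a\le b$, $b-a\le0$ and $k^b\gamma^k=\mathcal{O}(1)$ is exactly what consumes $b\le a$. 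With $\rho_k\le\frac14$ the constants $P,Q$ can then be enlarged to absorb all $\mathcal{O}(1)$ and $\mathcal{O}(k^b\sum_i\tilde{\Theta}_{i,k})$ terms, establishing \eqref{lem51}.

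\emph{Summation and main obstacle.} For \eqref{lem52} I would sum $E_k\le C_0\gamma^k+C_1\sum_{s<k}\gamma^{k-1-s}\alpha_s S_s$ over $k\le T$ and swap the order of summation, so that $\sum_{k=1}^T E_k\le\frac{C_0}{1-\gamma}+\frac{C_1}{1-\gamma}\sum_{s=0}^{T-1}\alpha_s S_s$; inserting \eqref{lem51} and using $\sum_s s^{-a}=\mathcal{O}(T^{1-a})$ together with $\sum_i\tilde{\Theta}_{i,s}\le\sum_i\theta_{i,T}=\Theta_T/T$ and $\sum_s s^{b-a}=\mathcal{O}(T^{b-a+1})$ yields the two advertised terms $\mathcal{O}(T^{1-a})+\mathcal{O}(T^{b-a}\Theta_T)$. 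The main obstacle is precisely this mutual coupling together with showing that the contraction factor stays below one: a naive bound $\|x_{i,k}-\bar{x}_k\|\le D$ makes the consensus term $\mathcal{O}(k^b)$ and destroys the $\mathcal{O}(1)$ leading order of \eqref{lem51}, so the geometric-mixing/convolution estimates and the exact calibration of $M$ (via $\Gamma\gamma=\gamma^{1-2U}$) that carry $\rho_k\le\frac14$ are the technical heart of the argument.
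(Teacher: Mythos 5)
Your proposal is correct and shares the paper's overall architecture: sum Lemma~\ref{lem31} over agents, bound the projection error by $2\alpha_k\sum_j[W_k]_{i,j}\|g_{j,k}^s\|$, unroll the consensus dynamics through the transition matrices and Lemma~\ref{lem2} to express the disagreement as a geometric convolution of $\{\alpha_s\sum_i\|g_{i,s}^s\|\}$, substitute this back into the gradient recursion, and only then sum over $k$ to get \eqref{lem52}. Where you genuinely diverge is in how the resulting self-contained recursion for $S_k=\sum_i\mathbb{E}[\|g_{i,k}^s\|]$ is solved. The paper multiplies by $\alpha_k$, sums over $k$, interchanges the order of summation to bound the kernel $A_k=\sum_{\tau>k}\alpha_\tau\gamma^{\tau-k}/\mu_\tau$ by a telescoping argument, and then unrolls the aggregate inequality for $V_k=\alpha_k S_k$ into nested products of the factors $M\alpha_k/\mu_k\le\tfrac12$, controlling the two resulting series $Z_1,Z_2$ via ratio tests on factorial-type terms $P_k,Q_k$. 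You instead posit the ansatz $S_k\le P+Q\,k^b\sum_i\tilde{\Theta}_{i,k}$ and close it by strong induction using the standard convolution estimates $\sum_s\gamma^{k-1-s}s^{-a}=\mathcal{O}(k^{-a})$; this is cleaner and makes the roles of $b\le a$ and of the contraction factor $\rho_k\le\tfrac14$ (via $\Gamma\gamma=\gamma^{1-2U}$) more transparent than the paper's product expansion. The one point to be careful about is the same one the paper glosses over: closing the induction requires the coefficient multiplying $P$ (namely $\rho_k$ plus the $P$-proportional part of the consensus contribution $\tfrac{2\sqrt{3d}L_0}{\mu_k}E_{k-1}$) to stay strictly below $1$, and whether the advertised $M=\tfrac{4\sqrt{3d}n\Gamma L_0\gamma}{1-\gamma}$ suffices depends on the hidden constant in the convolution estimate; the paper itself silently switches to $M=\tfrac{\sqrt{3d}(8+6n\Gamma)L_0}{\gamma(1-\gamma)}$ in its appendix, so you should either verify or enlarge $M$ at that step, but this is a calibration issue, not a gap in the argument.
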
\par 
\begin{proof}
The explicit expressions on the right-hand side of \eqref{lem51} and \eqref{lem52}, along with the details of the proof, are provided in Appendix B.
\end{proof}\par
%\begin{lem}\label{lem5}
%	Assuming Assumptions 1--4 hold, if $g_{i,k}^s$ is calculated by \eqref{eq11} and $x_{i,k}$ is generated by Algorithm 1 with $\alpha_k=\frac{1}{2Mk^{1/2}}$, $\mu_k=\frac{1}{k^{1/2}}$, where $M = \frac{4\sqrt{3d}n\Gamma L_{0}\gamma}{1-\gamma}$, then for all $k \in [T]$ and $i \in \mathcal{V}$, we have:
% \begin{align}
%	&\sum_{i=1}^{n}	\mathbb{E}\left[\left\|g_{i,k}^s\right\|\right]\nonumber\\
%	&\le12\sqrt{6}\left( d+4 \right)\Phi_k
%	+6\sqrt{6}{{\left( d+6 \right)}^{3/2}}+4\sqrt{3d}{k}^{b}\sum\nolimits_{i=1}^{n}{\tilde{\Theta}_{i,k}} \text{.}\nonumber
%\end{align}
%where $\Phi_k={\max }_{\tau \in \left[k\right]}\sum\nolimits_{i=1}^{n}\mathbb{E}\left[ \left\| \nabla {{f}_{i,\tau }}({{x}_{i,\tau }}) \right\| \right]$, $\tilde{\Theta}_{i,k}=\max_{\tau\in[k]}\theta_{i,\tau}$, and $\theta_{i,k}$ is defined in \eqref{theta_T}.
%\end{lem}\par  
\begin{rem}
The original intention of designing the one-point residual feedback model \eqref{eq11} was to reduce the large variance caused by traditional one-point gradient estimation, thereby achieving a better regret guarantee while avoiding the higher query complexity associated with multi-point gradient estimation. The upper bound on the norm of the gradient for traditional one-point gradient estimation is on the order of $\mathcal{O}(\mu_k^{-1})$. As demonstrated in \eqref{lem51}, our approach achieves a significantly lower bound. Note that if the deviation in the sequence of loss functions $\sum\nolimits_{i=1}^{n}{\tilde{\Theta}_{i,k}}$ is known a priori and grows slower than $\mathcal{O}(k^{-b})$, then the norm can be bounded by a constant, analogous to the results of two-point gradient estimation. Additionally, in the distributed online setting, our results require a distinct analytical framework due to the increased complexity compared to centralized static optimization, as discussed in [40]. The primary difference lies in the interaction between the consensus error among distributed agents and the expected norm of the gradient estimation. Grasping this intrinsic connection and establishing tight upper bounds for the gradient is a significant challenge.
\end{rem}

Now we are ready to establish a bound for the expected dynamic regret $\mathbb{E}-\mathcal{D}\mathcal{R}_{j,T}^\text{ncv}$ of Algorithm 1 for the distributed online nonconvex optimization Problem \eqref{eq1}.\par 
%		\begin{align}
%			& \sum\limits_{k=1}^{T}{\sum\limits_{i=1}^{n}{\mathbb{E}\left[ \left\langle {{x}_{j,k}}-{{x}_{k}},\nabla f_{i,k}\left( {{x}_{j,k}} \right) \right\rangle  \right]}} \nonumber\\ 
%			& \le nD{{\left( d\hspace{-2pt}+\hspace{-2pt}3 \right)}^{3/2}}\hspace{-2pt}\sum\limits_{k=1}^{T}\hspace{-2pt}{{{\mu }_{k}}} 
%			+\frac{n{{D}^{2}}}{2{{\alpha }_{T}}}\nonumber\\
%			&+\left\lbrace 48\left( d+4\right)^2+\frac{(32+24n\Gamma )(L_0+L_1D)\sqrt{3}\left( d+4\right) }{1-\gamma}\right\rbrace \sum\limits_{k=1}^{T} {{\alpha }_{k }}\nonumber\\
%			&+48nd\sum_{k=1}^T\frac{\alpha_{k}}{\mu_k^2} \sum\limits_{i=1}^{n}{\tilde{\Theta}_{i,k}^2}\nonumber \\
%			&+\frac{(32+24n\Gamma )(L_0+L_1D)\sqrt{3d}}{1-\gamma}\sum\limits_{k=1}^{T} \frac{{\alpha }_{k }}{\mu_k}\sum\limits_{i=1}^{n}{\tilde{\Theta}_{i,k}} \nonumber \\
%			&+\hspace{-2pt}\sum\limits_{k=1}^{T}\hspace{-2pt}{\frac{1}{{\alpha }_{k}}\hspace{-2pt}\sum\limits_{i=1}^{n}\hspace{-2pt}{\mathbb{E}\left[ {{\left\|x_{k}\hspace{-2pt}-\hspace{-2pt}x_{k+1} \right\|}^{2}} \right]}} \hspace{-4pt}+\hspace{-4pt}\sum\limits_{k=1}^{T}\hspace{-2pt}{\frac{D}{{\alpha }_{k}}\hspace{-2pt}\sum\limits_{i=1}^{n}{\mathbb{E}\left[ {{\left\|x_{k}\hspace{-2pt}-\hspace{-2pt}x_{k+1} \right\|}} \right]}}. \nonumber
%		\end{align}

\begin{thm}
Consider the constrained DOBO problem \eqref{eq1} under Assumptions 1--4, with nonconvex loss functions. Let the decision sequences $\left\{x_{i,k}\right\}^{T}_{k=1}$ and  $\left\{y_{i,k}\right\}^{T}_{k=1}$ be generated by Algorithm 1, where the step size \(\alpha_k\) and smoothing parameter \(\mu_k\) are defined as
\[ \alpha_k = \frac{1}{2Mk^a} \quad \text{and} \quad \mu_k = \frac{1}{k^b}\text{,} \]
with parameters $a, b \in (0,1)$ satisfy $0 \le \delta \le a - b $, and the constant $M = \frac{4\sqrt{3d}n\Gamma L_{0}\gamma}{1-\gamma}$ with $0<\gamma<1$ as defined in Lemma 1.
Then, for any $T\ge1$ and $j\in \mathcal{V}$, the resulting dynamic regret satisfies:
\begin{align}\label{thm1}
	\mathbb{E}-\mathcal{D}\mathcal{R}_{j,T}^\text{ncv}\le\mathcal{O}\bigg(&\max\bigg\{ T^{\max\left\{a,1-b\right\}},T^{2b-a}\Theta_{T}^2,\nonumber\\
	&T^{b-a}\Theta_{T} ,T^{a}\omega_T^2\bigg\}\bigg).
\end{align} 
Moreover, setting \(a = \frac{1}{2}\) yields an improved bound on the dynamic regret:
\begin{align}\label{thm2}
	\mathbb{E}-\mathcal{D}\mathcal{R}_{j,T}^\text{ncv}\le \mathcal{O}\left(\max\left\{T^{1/2 + \delta},T^{1/2-2\delta}\Theta_{T}^2,T^{1/2}\omega_T^2 \right\}\right)
\end{align}
 for any $\delta \in [0, \frac{1}{2})$.  
In the specific case where $a = b $, the dynamic regret simplifies to
$ \mathcal{O}\left(\max\left\{T^{1/2},T^{1/2}\Theta_{T}^2,T^{1/2}\omega_T^2 \right\}\right)$. Here, \( \tilde{\Theta}_{i,k} = \max_{\tau \in [k]} \theta_{i,\tau} \), and  \( \theta_{i,\tau} \), \( \Theta_T \) and \( \omega_T \) are defined in \eqref{theta_T}, \eqref{Theta_T} and \eqref{omega_T}, respectively.
\end{thm}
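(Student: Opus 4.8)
The plan is to reduce the regret—defined through the \emph{global} gradient $\nabla f_k=\sum_i\nabla f_{i,k}$ evaluated at one agent's iterate $x_{j,k}$—to a single scalar recursion driven by the network-average iterate $\bar x_k$, and to charge every discrepancy to the three error sources already quantified: consensus error, Gaussian-smoothing bias, and estimation variance. Writing $x_k^\dagger$ for a point attaining $\inf_{x\in\Omega}\langle\nabla f_k(x_{j,k}),x\rangle$, each summand is $\langle\nabla f_k(x_{j,k}),x_{j,k}-x_k^\dagger\rangle$. First I would insert $\bar x_k$ and $\nabla f_k(\bar x_k)$: the substitutions $\nabla f_k(x_{j,k})\to\nabla f_k(\bar x_k)$ and $x_{j,k}\to\bar x_k$ cost terms that, by $L_0$-Lipschitzness and $L_1$-smoothness (Assumptions~\ref{assum3}--\ref{assum4}) together with the diameter $D$ (Assumption~\ref{assum2}), are linear in the consensus quantity $\|x_{i,k}-\bar x_k\|$, whose cumulative expectation Lemma~\ref{lem4} bounds by $\mathcal{O}(T^{1-a})+\mathcal{O}(T^{b-a}\Theta_T)$.

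Next I would pass from the true local gradients to what the algorithm manipulates. By the unbiasedness $\mathbb{E}[g_{i,k}^s(x_{i,k})]=\nabla f_{i,k}^s(x_{i,k})$ (Lemma~\ref{lem3}) and the $\mathcal{O}(\mu_k)$ smoothing gap between $\nabla f_{i,k}^s$ and $\nabla f_{i,k}$ controlled via Lemma~\ref{lem21}, the averaged estimator $v_k=\tfrac1n\sum_i g_{i,k}^s$ obeys $n\,\mathbb{E}[v_k]=\nabla f_k(\bar x_k)$ up to a smoothing/consensus correction. The regret thus reduces, modulo the already-accounted errors, to controlling $\sum_k\mathbb{E}\big[\langle v_k,\bar x_k-x_k^\dagger\rangle\big]$, the factor $n$ being absorbed into the $\mathcal{O}$-constants.

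For the core term I would sidestep the non-commutativity of projection and averaging by working with $\Lambda_k(x):=\tfrac1n\sum_i\|x_{i,k}-x\|^2$. Applying non-expansiveness of $\mathcal{P}_\Omega$ agent-wise, convexity of $\|\cdot\|^2$ under the row-stochastic $W_k$, and the column-stochastic property of $W_k$ when averaging gives, for every $x\in\Omega$,
\begin{align}
\tfrac1n\sum_i\langle g_{i,k}^s,x_{i,k}-x\rangle\le\tfrac1{2\alpha_k}\big(\Lambda_k(x)-\Lambda_{k+1}(x)\big)+\tfrac{\alpha_k}{2n}\sum_i\|g_{i,k}^s\|^2.\nonumber
\end{align}
Replacing each $x_{i,k}$ by $\bar x_k$ converts the left-hand side into $\langle v_k,\bar x_k-x\rangle$ up to consensus remainders (Lemma~\ref{lem4}). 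Summing over $k$ with the time-varying comparator $x_k^\dagger$ and telescoping $\Lambda_k$ produces the path dependence: bounding $\Lambda_{k+1}(x_k^\dagger)$ against $\Lambda_{k+1}(x_{k+1}^\dagger)$ by Young's inequality (rather than a crude diameter bound) yields a residual $\sum_k\alpha_k^{-1}\omega_k^2\le 2M\,T^a\omega_T^2$, where $\omega_k=\sum_i\omega_{i,k}$ and $\sum_k\omega_k^2\le(\sum_k\omega_k)^2=\omega_T^2$; this is the source of the $T^a\omega_T^2$ term. The variance term $\sum_k\tfrac{\alpha_k}{2n}\sum_i\|g_{i,k}^s\|^2$ is handled by squaring the norm estimate of Lemma~\ref{lem4}: its $\Theta$-dependent part (scaling like $k^b\tilde\Theta_{i,k}$) contributes $\mathcal{O}(T^{2b-a}\Theta_T^2)$, while the $\mathcal{O}(1)$ part together with the smoothing bias contributes $\mathcal{O}(T^{\max\{a,1-b\}})$.

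Finally I would substitute $\alpha_k=\tfrac1{2Mk^a}$ and $\mu_k=k^{-b}$, evaluate every residual series through $\sum_{k=1}^T k^{-p}=\mathcal{O}(T^{1-p})$, and take the maximum of the four resulting orders to obtain \eqref{thm1}; specializing $a=\tfrac12$ (and then $a=b$) gives \eqref{thm2} and the stated simplification. The main obstacle is the core step: nonconvexity forbids turning $\langle v_k,\bar x_k-x_k^\dagger\rangle$ into a function-value gap, so the whole argument must remain at the level of the first-order/gradient-mapping inequality while simultaneously (i) keeping the factor $n$ and the smoothing bias consistent across the substitutions, (ii) propagating the one-point \emph{residual} coupling between steps $k$ and $k-1$—exactly what loads the norm estimate of Lemma~\ref{lem4} with the extra $k^b$ and $\Theta$ factors and forces the variance term to carry $\Theta_T^2$—and, most delicately, (iii) choosing the benchmark $x_k^\dagger$ compatibly with the \emph{local}-minimizer path $\omega_T$ of \eqref{omega_T}, so that the telescoping closes with the squared path length instead of a non-summable remainder.
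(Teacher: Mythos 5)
Your architecture matches the paper's almost step for step: the same Lyapunov function $\sum_i\|x_{i,k}-x_k^*\|^2$ with a time-varying comparator, the same projected-gradient recursion and telescoping, the same charging of comparator drift to a squared-path-length term of order $\alpha_T^{-1}\omega_T^2\sim T^a\omega_T^2$ (the paper uses an add-and-subtract of $x_k^*$ plus the diameter bound where you use Young's inequality, but the orders coincide and the linear $T^a\omega_T$ term is absorbed by the max), the same consensus-error and smoothing-bias corrections via Lemmas \ref{lem21}, \ref{lem3} and \ref{lem4}, and the same final substitution of $\alpha_k,\mu_k$. Routing through $\bar x_k$ rather than directly through $x_{j,k}$ is a cosmetic difference.

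There is, however, one step that fails as written: you propose to control the variance term $\sum_k\frac{\alpha_k}{2n}\sum_i\mathbb{E}\left[\|g_{i,k}^s\|^2\right]$ by ``squaring the norm estimate of Lemma~\ref{lem4}.'' Lemma~\ref{lem4} bounds the first moment $\sum_i\mathbb{E}\left[\|g_{i,k}^s\|\right]$, and Jensen's inequality gives $\left(\mathbb{E}\left[\|g\|\right]\right)^2\le\mathbb{E}\left[\|g\|^2\right]$ — the wrong direction — so squaring the first-moment bound does not bound the second moment. The paper avoids this by deriving a direct second-moment recursion (the inequality \eqref{apd4} in the proof of Lemma~\ref{lem31}, with the $\frac{3d\theta_k^2}{\mu_k^2}$ and $\frac{12(d+4)^2L_0^2\mu_{k-1}^2}{\mu_k^2}$ terms) and feeding $\mathbb{E}\left[\|x_k-x_{k-1}\|^2\right]$ back through the update rule; this is what produces the $\mathcal{H}_2\sum_k\frac{\alpha_k}{\mu_k^2}\sum_i\tilde\Theta_{i,k}^2$ term and hence $T^{2b-a}\Theta_T^2$. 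The resulting orders happen to agree with what you predicted, but you need the genuine second-moment argument, not the squared first-moment one. Separately, your point (iii) about reconciling the linearized benchmark $x_k^\dagger$ with the minimizer path $\omega_T$ of \eqref{omega_T} is a real issue, but it is one the paper itself leaves unresolved, so it does not distinguish your argument from theirs.
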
 
	\begin{proof}
The explicit expressions on the right-hand side of \eqref{thm1} and \eqref{thm2}, along with the details of the proof, are provided in Appendix C.
	\end{proof}
\begin{rem}
Theorem 1 shows that Algorithm 1 achieves improved performance compared to the dynamic regret bound $\mathcal{O}(\omega_T^{1/3}T^{2/3})$ established by the distributed online optimization algorithm for convex optimization in \cite{2023yipengpang}. Furthermore, \eqref{thm2} shows that Algorithm 1 recovers the regret bound of $\mathcal{O}(T^{1/2+c})$, where $c\in (0,{1}/{2})$, established by the online optimization algorithm under full information feedback in \cite{8059834}, even though \cite{8059834} uses the standard static regret metric rather than the stricter dynamic metric. However, it is important to note that the bounds in Theorem 1 are slightly worse than the $\mathcal{O}(T^{1/2})$ static regret bound of the centralized online algorithms described in \cite{54Cncvx2022,55Cncvx2022}. This small difference is justified because these algorithms make trade-offs in query complexity and are centralized. Algorithm 1 is more suitable for online optimization where the data is not known a priori, offering a practical balance between performance and complexity.
\end{rem}
\begin{rem}
It should be emphasized that when \(\Theta_T = 0\) and \(\omega_T = 0\), the problem described by \eqref{eq1} simplifies to a static distributed optimization problem. In this case, Theorem 1 provides an optimization error bound of \(\mathcal{O}(d^2T^{\max\left\lbrace a,1-b\right\rbrace })\) with \(0 < b \le a < 1\), derived from the one-point residual feedback model under static distributed optimization. This bound surpasses the one obtained by the one-point algorithm \cite{YUAN2024111328} in distributed convex optimization, \(\mathcal{O}(d^2T^{-1/3})\). Moreover, when \(a = b = 1/2\), it reaches the bound of \(\mathcal{O}(d^2T^{-1/2})\) achieved by the two-point zeroth-order algorithms \cite{37primal-dual2022,38nonconvex2022}, marking a significant and unexpected improvement, largely due to the low query complexity and the inherent distributed characteristics of the model. To our knowledge, this is the first result for one-point zeroth-order algorithms solving distributed nonconvex optimization.

\end{rem}
\subsection{Convex case}
In this subsection, we consider convex loss functions. For the constrained convex DOBO problem \eqref{eq1}, we introduce the expected dynamic network regret \cite{2023yipengpang} for an arbitrary node \( j \) to evaluate Algorithm 1's performance:
\begin{equation*}
	\mathbb{E}-\mathcal{DR}_{j,T}^\text{cv} = \sum\nolimits_{k=1}^{T} \mathbb{E}\left[ f_{k}(x_{j,k}) \right] - \sum\nolimits_{k=1}^{T} \mathbb{E}\left[f_{k}(x_{k}^*)\right] \text{,}
\end{equation*}
where \( x_{k}^* = \arg\min_{x \in \Omega} f_k(x) \). This measures the cumulative loss discrepancy between the decisions made by agent \( j \) and the optimal solutions.\par 
Next, we provide an upper bound for the expected dynamic regret of Algorithm 1 for the constrained convex DOBO Problem \eqref{eq1}.
\begin{thm}
	Under Assumptions 1--3, we consider the constrained DOBO Problem \eqref{eq1} with convex losses. Let the decision sequences $\left\{x_{i,k}\right\}^{T}_{k=1}$ and  $\left\{y_{i,k}\right\}^{T}_{k=1}$ be generated by Algorithm 1 and take $\alpha_k=\frac{1}{2M\sqrt{k}}$, $\mu_k=\frac{1}{\sqrt{k}}$ with the constant $M = \frac{4\sqrt{3d}n\Gamma L_{0}\gamma}{1-\gamma}$ for all $k>0$, where $0<\gamma<1$ is defined in Lemma 1. Then, for all $T\ge1$ and $j\in \mathcal{V}$, the resulting dynamic regret satisfies
	\begin{align}\label{convex}
		&\mathbb{E}-\mathcal{D}\mathcal{R}_{j,T}^\text{cv}\le\left\lbrace  {nD^{2}M}+2nD{{\left( d+3 \right)}^{3/2}}+\frac{\mathcal{H}_1}{2M}\right\rbrace\sqrt{T} \nonumber\\
		&+{nM\sqrt{T}\omega_T^2}+{2nDM\sqrt{T}\omega_T}+\frac{\mathcal{H}_2\sqrt{T}}{2M}{\Theta}_{T}^2+ \frac{\mathcal{H}_3}{2M} \Theta_T\text{,}
	\end{align}
	where $\mathcal{H}_1=48\left( d+4\right)^2+\frac{\sqrt{3}(32+24n\Gamma )\left( d+4\right)L_0 }{1-\gamma}$, $\mathcal{H}_2=48nd$, $\mathcal{H}_3=\frac{(32+24n\Gamma )L_0\sqrt{3d}}{1-\gamma}$, $\tilde{\Theta}_{i,k}=\max_{\tau\in[k]}\theta_{i,k}$, and $\theta_{i,k}$, $\omega_{T}$ are defined in \eqref{theta_T} and \eqref{omega_k}, respectively. 
\end{thm}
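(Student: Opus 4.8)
The plan is to split the per-node regret $\sum_k\mathbb E[f_k(x_{j,k})-f_k(x_k^*)]$ into a network-disagreement part controlled by Lemma~\ref{lem4} and an averaged projected-gradient descent part analysed through the squared distance to the moving minimizer. First I would write, for a fixed $j$, $f_k(x_{j,k})-f_k(x_k^*)=[f_k(x_{j,k})-f_k(\bar x_k)]+\sum_{i}[f_{i,k}(\bar x_k)-f_{i,k}(x_{i,k})]+\sum_i[f_{i,k}(x_{i,k})-f_{i,k}(x_k^*)]$. By $L_0$-Lipschitz continuity (Assumption~\ref{assum3}) the first two groups are each bounded by a multiple of the consensus error $\sum_i\|x_{i,k}-\bar x_k\|$; summing over $k$ and invoking \eqref{lem52} with $a=b=\tfrac12$ converts them into $\mathcal O(\sqrt T)$ and $\mathcal O(\Theta_T)$ contributions.

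For the core term $\sum_i[f_{i,k}(x_{i,k})-f_{i,k}(x_k^*)]$ I would run the energy recursion. Nonexpansiveness of $\mathcal P_\Omega$ with $x_k^*\in\Omega$, convexity of $\|\cdot\|^2$, and double stochasticity of $W_k$ (Assumption~\ref{assum1}) give $\sum_i\|x_{i,k+1}-x_k^*\|^2\le\sum_i\|y_{i,k}-x_k^*\|^2$; substituting \eqref{eq12}, taking conditional expectation, and using Lemma~\ref{lem3} to replace $g_{i,k}^s$ by $\nabla f_{i,k}^s(x_{i,k})$, then the convexity of the smoothed $f_{i,k}^s$ and the smoothing bias bound of Lemma~\ref{lem21}, yields the one-step inequality $\sum_i\mathbb E[f_{i,k}(x_{i,k})-f_{i,k}(x_k^*)]\le\frac{1}{2\alpha_k}(P_k-Q_k)+\frac{\alpha_k}{2}\sum_i\mathbb E\|g_{i,k}^s\|^2+2n\mu_kL_0\sqrt d$, with $P_k=\sum_i\mathbb E\|x_{i,k}-x_k^*\|^2$ and $Q_k=\sum_i\mathbb E\|x_{i,k+1}-x_k^*\|^2$.

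Summing over $k$ forces the moving-minimizer expansion $P_{k+1}\le Q_k+2nD\,\omega_k+n\,\omega_k^2$, where $\omega_k=\|x_{k+1}^*-x_k^*\|$, obtained by Cauchy--Schwarz on the cross term together with $\|x_{i,k+1}-x_k^*\|\le D$ (Assumption~\ref{assum2}). This makes $\frac{1}{2\alpha_k}(P_k-Q_k)$ telescope into a head term plus a linear and a quadratic drift; with $\alpha_k=\tfrac{1}{2M\sqrt k}$ the head gives $\tfrac{nD^2}{2\alpha_T}=nD^2M\sqrt T$, the linear drift gives $2nDM\sqrt T\,\omega_T$, and the quadratic drift gives $nM\sqrt T\,\omega_T^2$ via $\sum_k\omega_k^2\le\omega_T^2$ and $\tfrac1{\alpha_k}\le 2M\sqrt T$; the bias sum $\sum_k 2n\mu_kL_0\sqrt d$ is $\mathcal O(\sqrt T)$.

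The remaining and hardest ingredient is $\sum_k\tfrac{\alpha_k}{2}\sum_i\mathbb E\|g_{i,k}^s\|^2$. I expect this to be the main obstacle: unlike two-point feedback, the residual estimator \eqref{eq11} has variance that does not vanish with $\mu_k$, and its second moment couples the previous iterate, the consensus gap, and the temporal deviation $\theta_{i,k}/\mu_k$, so I must derive a second-moment analogue of the contraction in Lemma~\ref{lem31} and keep these couplings from degrading the $\sqrt T$ rate. The Gaussian moment estimates $\mathbb E\|u\|^3\le(d+3)^{3/2}$ and $\mathbb E\|u\|^4\le(d+4)^2$ supply the constants $\mathcal H_1,\mathcal H_2,\mathcal H_3$ and the $(d+3)^{3/2}$ factor; summed against $\alpha_k=\mathcal O(k^{-1/2})$ and $\mu_k=k^{-1/2}$, the variance's base part contributes $\tfrac{\mathcal H_1}{2M}\sqrt T$ and $2nD(d+3)^{3/2}\sqrt T$, while its $\theta$-dependent part contributes $\tfrac{\mathcal H_2\sqrt T}{2M}\Theta_T^2+\tfrac{\mathcal H_3}{2M}\Theta_T$. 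Collecting every contribution produces the stated bound \eqref{convex}.
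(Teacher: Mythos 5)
Your proposal is correct and follows essentially the same route as the paper: an energy recursion on $\sum_i\|x_{i,k}-x_k^*\|^2$ using nonexpansiveness of $\mathcal{P}_\Omega$ and double stochasticity, unbiasedness of $g_{i,k}^s$ plus convexity and the smoothing bias of Lemma 2, a shift argument for the moving minimizer producing the $\omega_T$ and $\omega_T^2$ drift terms, the consensus-error lemma to pass from the network average to node $j$, and the second-moment bound on $g_{i,k}^s$ (already available as an intermediate step in the proof of Lemma 4) to control $\sum_k\alpha_k\sum_i\mathbb{E}\|g_{i,k}^s\|^2$. The only cosmetic difference is that you route the disagreement through $\bar{x}_k$ while the paper adds and subtracts $f_{i,k}^s(x_{j,k})$ directly, which is the same estimate.
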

\begin{proof}
	The explicit expressions on the right-hand side of \eqref{convex} and the details of the proof are provided in Appendix D.
\end{proof}
\begin{rem}
	First, the dimensional dependence of the proposed method is \(\mathcal{O}(d^2)\), which is common for distributed zeroth-order algorithms and consistent with the results in \cite{52deming2021,35wang_distributed_2019,37primal-dual2022,38nonconvex2022}. Dimensional dependence is a crucial performance metric as it directly impacts the scalability of the algorithm in high-dimensional settings. Recently, significant work has focused on analyzing the dimensional dependence of various zeroth-order methods. In \cite{32duchi2015}, Duchi et al. demonstrated that the lower bounds on the convergence rate of zeroth-order stochastic approximation can be \(\mathcal{O}(d)\) in smooth cases and \(\mathcal{O}(d \ln d)\) in non-smooth cases. Second, Algorithm 1 achieves a regret bound of \(\mathcal{O}({T}^{1/2})\) in distributed online bandit convex optimization, which matches the regret guarantee of the two-point mirror descent algorithm in \cite{52deming2021}. However, unlike the two-point methods, Algorithm 1 requires only a single function query per iteration, making it more practical in online optimization.
\end{rem}

\section{Simulation}
We evaluate the performance of the proposed algorithm through numerical simulations. Specifically, Algorithm 1 is applied to both convex and nonconvex DOBO problems, with a focus on analyzing its dynamic regret bounds.
\subsection{Convex Case}
In this subsection, we evaluate the performance of Algorithm 1 in solving a distributed dynamic tracking problem. We consider a sensor network consisting of ten sensors, labeled as $\mathcal{V}=\{1,\dots,10\}$. Each sensor communicates with its neighbors through a time-varying communication topology, which can take one of the four possible configurations shown in Fig. 1. For each time step $k$, the weighted matrix $W_k$ is defined such that $\left[W_k\right]_{i,j} = \frac{1}{|\mathcal{N}_{i,k}^{+}|}$ if $j$ is an in-neighbor of $i$, where $|\mathcal{N}_{i,k}^{+}|$ denotes the number of in-neighbors of sensor $i$ at time $k$. Notably, the union of the four possible graphs in Fig. 1 forms a strongly connected graph. The four graphs switch periodically with a period of $B=4$.

\begin{figure}
	\centering
	\includegraphics[width=0.85\linewidth]{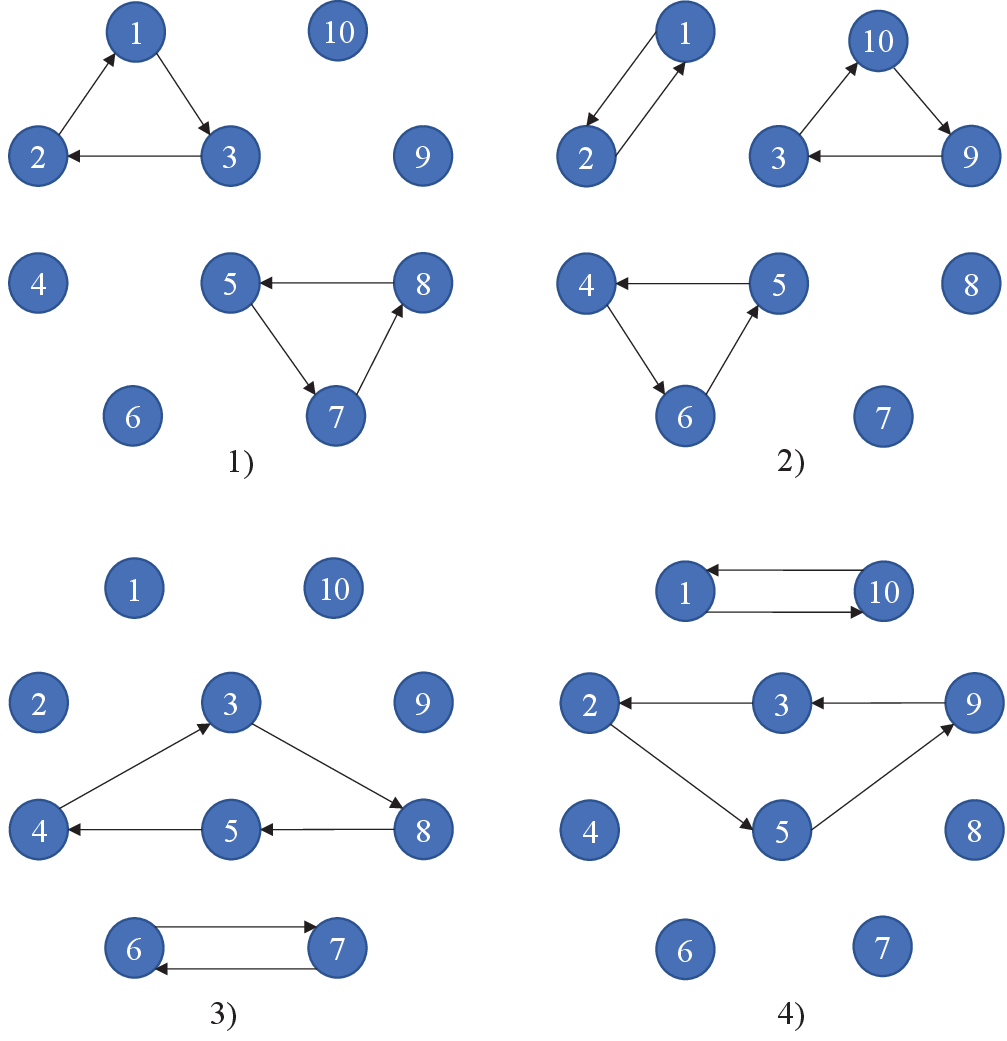}
	\caption{Time-varying graph configurations}
	\label{fig:haha}
\end{figure}

We consider a slowly moving target in a 2-D plane. The target's position at each time $k$ is denoted by $x_k^*$, and it evolves dynamically according to the following equation \cite{57Dncvx2023}:
\[
x_{k+1}^* = x_k^* + \begin{bmatrix}
	\frac{(-1)^{q_k} \sin(k/50)}{10k} \\
	\frac{-q_k \cos(k/70)}{40k}
\end{bmatrix}\text{,}
\]
where $q_k \sim \text{Bernoulli}(0.5)$, and the initial position is $x_0^* = [0.8, 0.95]^\top$.

At time $k$, each sensor $i$ observes the distance measurement $z_{i,k}$ between its position $s_i$ and the target position $x_k^*$, given by $z_{i,k} = \|x_k^* - s_i\|^2$. The positions of the sensors are: $s_1 = [1, 3]^\top$, $s_2 = [2, 5]^\top$, $s_3 = [5, 1]^\top$, $s_4 = [2, 4]^\top$, $s_5 = [3, 1]^\top$, $s_6 = [2, 3]^\top$, $s_7 = [2, 6]^\top$, $s_8 = [4, 2]^\top$, $s_9 = [1, 2]^\top$, $s_{10} = [1, 1]^\top$. The local square loss function for each sensor $i$ is defined as
\[
f_{i,k}(x) = \frac{1}{4}\left(\|x - s_i\|^2 - z_{i,k}\right)^2.
\]
The sensors collaboratively solve the following optimization problem with a quartic objective function:
\[
\min_{x \in \Omega} f_k(x) = \frac{1}{10} \sum_{i=1}^{10} f_{i,k}(x),
\]
where $\Omega \subset \mathbb{R}^2$ is a compact and convex set representing the geographical boundary of the target's position, defined as $\Omega = \{x \in \mathbb{R}^2 \mid \|x_1\| + \|x_2\| \leq 3\}$.\par 
\begin{figure}
	\centering
	\includegraphics[width=1\linewidth]{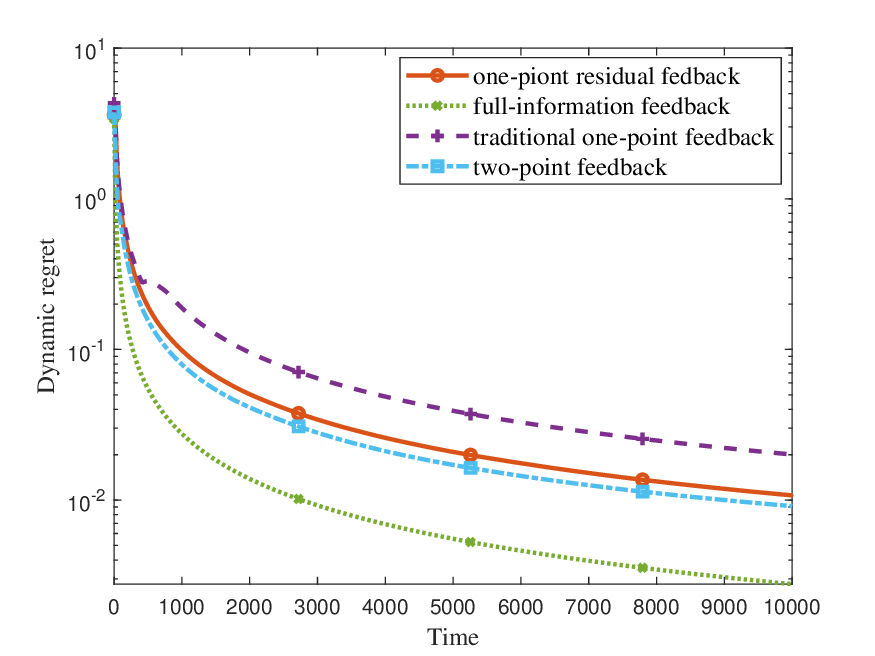}
	\captionsetup{font=footnotesize}
	\caption{The dynamic bounds of applying the proposed residual one-point feedback \eqref{eq11}, the two-point oracle \cite{33nesterov_random_2017}, and the traditional one-point oracle \cite{41flexman2005} to the convex DOBO problem.}
	\label{fig:huatu1}
\end{figure}
We evaluate the performance of different feedback models in solving the above policy optimization problem, specifically applying one-point feedback \cite{41flexman2005,48YUANDOO2022}, two-point feedback \cite{50shamir2017,2023yipengpang,52deming2021}, one-point residual feedback \eqref{eq11}, and full information feedback. For this simulation, we set the step size $\alpha_k = \frac{1}{500\sqrt{k +1}}$ and the smoothing parameter $\mu_k = \frac{1}{\sqrt{k+1}}$. The evolution of dynamic regret $\mathcal{D}\mathcal{R}_{j,T}^\text{cv}$ over time is depicted in Fig. 2, illustrating the convergence behavior of distributed online projected gradient descent algorithms under different feedback models.

As shown in Fig. 2, the one-point residual feedback \eqref{eq11} demonstrates significantly faster convergence compared to the traditional one-point oracle feedback. Moreover, the dynamic regret bound achieved with one-point residual feedback is comparable to that of the two-point feedback and full information feedback. This observation is consistent with our theoretical analysis presented in Section IV, further validating the advantages of the one-point residual feedback model in accelerating convergence while maintaining low query complexity.

\subsection{Nonconvex Case}
We construct a numerical example to evaluate the performance of the proposed OP-DOPGD algorithm for constrained DOBO with nonconvex losses. The system under consideration comprises ten agents, denoted as \(\mathcal{V} = \{1, \dots, 10\}\). The communication between agents is modeled using the time-varying graph depicted in Fig. 1. For each sensor \(i \in \mathcal{V}\) and time step \(k \in [T]\), the local, time-varying loss function is defined as the formulation in \cite{58comncvx2024}:
\[
f_{i,k}(x) = \frac{i}{63}x_1^3 + \frac{i-1}{15}(x_1^2 + x_2^2) - \frac{2(i-3)}{3}\tilde{r}_{i,k} \cos(x_2)\text{,}
\]
where \(\tilde{r}_{i,k} = \frac{1}{2} \arctan(k) + \frac{1}{2}\xi_{i,k}\), with \(\xi_{i,k} \sim \mathcal{N}(0, 1)\). For any \(k \in [T]\), it is evident that \(f_{i,k}\) is nonconvex with respect to \(x\). The agents collectively aim to solve the following optimization problem:
\[
\min_{x \in \Omega} f_k(x) = \sum_{i=1}^{10} f_{i,k}(x),
\]
where \(\Omega \subset \mathbb{R}^2 = [-3, 3] \times [-3, 3]\) represents a compact and convex set.
\begin{figure}
	\centering
	\includegraphics[width=1\linewidth]{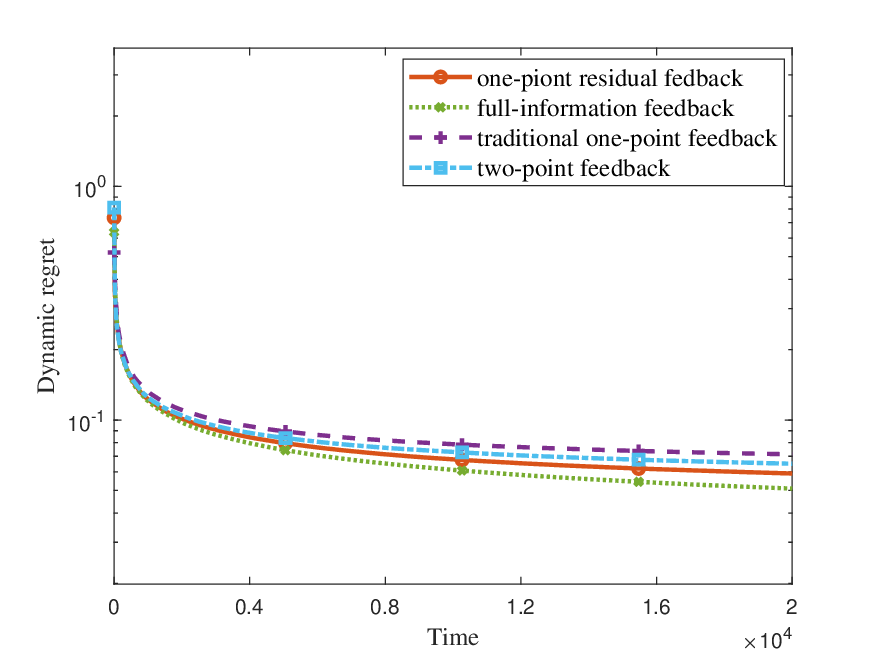}
\captionsetup{font=footnotesize}
\caption{The dynamic bounds of applying the proposed residual one-point feedback \eqref{eq11}, the two-point oracle \cite{33nesterov_random_2017}, and the traditional one-point oracle \cite{41flexman2005} to the nonconvex DOBO problem.}
	\label{fig:huatu2}
\end{figure}
To address this problem, we employ distributed online projected gradient descent methods under various feedback models, with the algorithm parameters set as \(\alpha = 5 \times 10^{-3}\) and \(\mu = 10^{-3}\). The initial points for all nodes are randomly selected within \([-3, 3] \times [-3, 3]\).

Fig. 3 illustrates the evolution of the dynamic regret \(\mathcal{D}\mathcal{R}_{j,T}^\text{ncv}\) over time, where the optimization process utilizing the proposed residual-feedback gradient performs comparably to that using the two-point gradient estimator and the exact gradient. Both estimators significantly outperform the traditional one-point gradient estimator, similar to the behavior observed in the convex case.
\section{Conclusion}
In this paper, we discuss the distributed online bandit optimization problem with nonconvex loss functions over a time-varying communication topology. We propose an online distributed optimization algorithm based on one-point residual feedback to solve this problem. We theoretically analyze the explicit dynamic regret bounds of the proposed method for both nonconvex and convex DOBOs, demonstrating that the algorithm significantly improves convergence speed while maintaining \(\mathcal{O}(1)\) sampling complexity compared to existing algorithms. In this paper, only simple ensemble constraints are considered. Exploring online optimization on dynamic constraint sets would be an interesting and challenging future direction.

\section*{Appendix}

\subsection{Proof of Lemma \ref{lem31}}
\begin{proof}
To conserve space, we abbreviate \( g_{i,k}(x_{i,k}) \) as \( g_{i,k} \) and omit the subscripts \( i \) from \( f_{i,k} \), \( x_{i,k} \), and \( u_{i,k} \). It is important to clarify that \( f_k \) and \( x_k \) in this subsection should not be confused with the global loss functions and decision vectors defined in Problem \eqref{eq1}.\par 
By considering the expression of \( g_{i,k}^s \) in \eqref{eq11} and applying the inequality \(\left(a + b + c \right)^2 \le 3\left( a^2 + b^2 + c^2 \right) \), we have
\begin{align}\label{apd1}
	&\mathbb{E}\left[ \left\|g_{i,k}^s \right\|^2\right]\nonumber\\
	&=\hspace{-2pt}\frac{1}{\mu_k^2}\mathbb{E}\hspace{-1pt}\left[\hspace{-2pt} \left( f_{k}(x_{k}+\mu_{k}u_{k})\hspace{-2pt}-\hspace{-2pt}f_{k-1}(x_{k-1}+\mu_{k-1}u_{k-1})\right)^2 \left\|u_k\right\|^2\hspace{-1pt}\right] \nonumber\\
	&\le\frac{3}{\mu_k^2}\mathbb{E}\hspace{-1pt}\left[\hspace{-1pt} \left( f_{k}(x_{k}+\mu_{k}u_{k})\hspace{-2pt}-\hspace{-2pt}f_{k}(x_{k}+\mu_{k-1}u_{k-1})\right)^2 \left\|u_k\right\|^2\hspace{-1pt}\right] \nonumber\\
	&+\frac{3}{\mu_k^2}\hspace{-1pt}\mathbb{E}\hspace{-1pt}\left[\hspace{-1pt} \left( f_{k}(x_{k}+\mu_{k\hspace{-1pt}-\hspace{-1pt}1}u_{k\hspace{-1pt}-\hspace{-1pt}1})\hspace{-2pt}-\hspace{-2pt}f_{k}(x_{k\hspace{-1pt}-\hspace{-1pt}1}+\mu_{k\hspace{-1pt}-\hspace{-1pt}1}u_{k\hspace{-1pt}-\hspace{-1pt}1})\right)^2 \left\|u_k\right\|^2\hspace{-1pt}\right] \nonumber\\
	&+\frac{3}{\mu_k^2}\mathbb{E}\Big[ ( f_{k}(x_{k-1}+\mu_{k-1}u_{k-1})\nonumber\\
	&\quad-f_{k-1}(x_{k-1}+\mu_{k-1}u_{k-1}))^2 \left\|u_k\right\|^2\Big].
\end{align}\par  
We first focus on the case where \( f_k \) is Lipschitz-continuous with constant \( L_0 \). Notice that \(\mathbb{E}\left[\| u_k \|^2\right] = d\) and 
\(\mathbb{E}\left[\| u_k - u_{k-1} \| \cdot \| u_k \|^2 \right] \le 2 \mathbb{E}\left[\| u_k \|^2 + \| u_{k-1} \|^2\right]\| u_k \|^2 \le 4 \left( d+4\right) ^2 \), as proven in \cite{33nesterov_random_2017}. Then, we have
\begin{align}\label{apd111}
	&\mathbb{E}\left[ \left( f_{k}(x_{k}+\mu_{k}u_{k}) - f_{k}(x_{k}+\mu_{k-1}u_{k-1}) \right)^2 \| u_{k} \|^2 \right]\nonumber\\
	&\le L_0^2\mathbb{E}\left[ \left( \mu_{k}u_{k}-\mu_{k-1}u_{k-1} \right)^2 \| u_{k} \|^2 \right]\nonumber\\
	&\le L_0^2\mu_{k-1}^2\mathbb{E}\left[ \left( u_{k}-u_{k-1} \right)^2 \| u_{k} \|^2 \right]\nonumber\\
	&\le 4\left(d+4 \right)^2L_0^2\mu_{k-1}^2\text{,} \nonumber\\
	&\mathbb{E}\left[ \left( f_{k}(x_{k-1}+\mu_{k-1}u_{k-1}) \hspace{-2pt}-\hspace{-2pt} f_{k-1}(x_{k}+\mu_{k-1}u_{k-1}) \right)^2 \hspace{-2pt}\| u_{k} \|^2\right]\nonumber\\
	&\le d\theta_{k}^2\text{,}
\end{align}
and
\begin{align}\label{apd112}
	&\mathbb{E}\left[ \left( f_{k}(x_{k}+\mu_{k-1}u_{k-1}) - f_{k}(x_{k-1}+\mu_{k-1}u_{k-1}) \right)^2 \| u_{k} \|^2 \right]\nonumber\\
	&\le L_0^2\mu_{k-1}^2\mathbb{E}\left[ \left\|x_{k}-x_{k-1} \right\|^2 \| u_{k-1} \|^2 \right]\nonumber\\
	&\le dL_0^2\mu_{k-1}^2\mathbb{E}\left[\left\|x_{k}-x_{k-1} \right\|^2\right]\text{,} 
\end{align}
where the second inequality holds by using the definition of $\theta_{i,k}$. Based on the preceding inequalities, it follows that
\begin{align}\label{apd4}
	\mathbb{E}\left[ \| g_{k}^{s} \|^2 \right] 
	& \le \frac{3dL_{0}^2}{\mu_{k}^2} \mathbb{E}\left[ \| x_{k} - x_{k-1} \|^2 \right]\nonumber\\
	&\quad + \frac{12(d+4)^2L_{0}^2\mu_{k-1}^2}{\mu_{k}^2} + \frac{3d\theta_{k}^2}{\mu_{k}^2} \text{.}
\end{align}
By applying Jensen’s inequality to \eqref{apd4} and restoring the subscript $i$, we obtain
\begin{align}\label{apd5}
	\mathbb{E}\left[ \| g_{i,k}^{s} \|\right] &\le \frac{\sqrt{3d}L_{0}}{\mu _{k}}\mathbb{E}\left[ \| x_{i,k} - x_{i,k-1} \| \right] \nonumber\\
	&\quad\hspace{-2pt}+\hspace{-2pt}\frac{2\sqrt{3}(d+4)L_{0} \mu _{k-1}}{\mu _{k}}  \hspace{-2pt}+\hspace{-2pt} \frac{\sqrt{3d}\theta _{i,k}}{\mu _{k}} .
\end{align}
We turn our attention to the term $ \mathbb{E}\left[ \| x_{i,k} - x_{i,k-1} \|\right]  $. To facilitate the analysis, we denote \(\tilde{x}_{i,k} = \sum_{j=1}^{n} [W_k]_{i,j} y_{j,k}\) and define the projection error as \(\|s_{i,k}\| = \|\mathcal{P}_{\Omega} \left[ \tilde{x}_{i,k} \right] -  \tilde{x}_{i,k}\|\). The evolution of $x_{i,k}$ allows us to derive
\begin{align}
	\left\| {{x}_{i,k}}-{{x}_{i,k-1}} \right\|& \le \sum\limits_{j=1}^{n} \left[W_{k-1}\right]_{i,j} \left\| x_{j,k-1} - x_{i,k-1} \right\|\nonumber\\
	&\quad+\left\| {{s}_{i,k-1}}-{{\alpha }_{k-1}}\sum\limits_{j=1}^{n}{\left[W_{k-1}\right]_{i,j}}g_{j,k-1}^{s}\right\|\text{,} \nonumber
\end{align}
where we used the double stochasticity of $W_{k-1}$.
For the second term on the right-hand side of the above equation, by the definition of $s_{i,k-1}$, the non-expansiveness of the Euclidean projection $\mathcal{P}_\Omega\left(\cdot\right)$ and the fact that $\sum\nolimits_{j=1}^{n} \left[W_{k-1}\right]_{i,j} x_{j,k}\in\Omega$, we obtain
\begin{align}
	& \left\| {{s}_{i,k-1}}-{{\alpha }_{k-1}}\sum\limits_{i=1}^{n}{[W_{k\hspace{-1pt}-\hspace{-1pt}1}]_{i,j}}g_{j,k-1}^{s} \right\| \nonumber\\ 
	& =\left\| \mathcal{P}_{\Omega }\left[ {{{\tilde{x}}}_{i,k-1}} \right]-\sum\limits_{i=1}^{n}{[W_{k\hspace{-1pt}-\hspace{-1pt}1}]_{i,j}}{{x}_{j,k-1}} \right\|\nonumber \\ 
	& \le\left\|\hspace{-1pt}  {{{\tilde{x}}}_{i,k-1}}\hspace{-3pt}-\hspace{-4pt}\sum\limits_{i=1}^{n}{[W_{k\hspace{-1pt}-\hspace{-1pt}1}]_{i,j}}{{x}_{j,k-1}}\hspace{-1pt}  \right\|\hspace{-4pt}\le\hspace{-2pt}{{\alpha }_{k-1}}\hspace{-2pt}\sum\limits_{j=1}^{n}\hspace{-2pt}{[W_{k\hspace{-1pt}-\hspace{-1pt}1}]_{i,j}}\hspace{-2pt}\left\| \hspace{-1pt}g_{j,k-1}^{s}\hspace{-1pt} \right\|\hspace{-2pt} .\nonumber
\end{align} Combining the two inequalities above gives
\begin{align}
	\left\| {{x}_{i,k}}-{{x}_{i,k-1}} \right\|& \le \sum\limits_{j=1}^{n} \left[W_{k-1}\right]_{i,j} \left\| x_{j,k-1} - x_{i,k-1} \right\|\nonumber\\
	&\quad+{\alpha }_{k-1}\sum\limits_{j=1}^{n}{\left[W_{k-1}\right]_{i,j}}\left\| g_{j,k-1}^{s}\right\|\text{.} \nonumber
\end{align}
Substituting it into \eqref{apd5} yields the desired result.

We further examine the case where $f_k$ is smooth with constant $L_1$. Adding
and subtracting $\left\langle \nabla {{f}_{k}}({{x}_{k}}), {{\mu }_{k}}{{u}_{k}} \right\rangle$, ${{f}_{k}}({{x}_{k}})$ and $\left\langle \nabla {{f}_{k}}({{x}_{k}}),{{\mu }_{k-1}}{{u}_{k-1}} \right\rangle$ inside the square term ${{\left( {{f}_{k}}({{x}_{k}}+{{\mu }_{k}}{{u}_{k}})-{{f}_{k}}({{x}_{k}}+{{\mu }_{k-1}}{{u}_{k-1}}) \right)}^{2}}$, we have
\begin{align}
	& {{\left( {{f}_{k}}({{x}_{k}}+{{\mu }_{k}}{{u}_{k}})-{{f}_{k}}({{x}_{k}}+{{\mu }_{k-1}}{{u}_{k-1}}) \right)}^{2}} \nonumber\\ 
	& \le 2{{\left\langle \nabla {{f}_{k}}({{x}_{k}}),{{\mu }_{k}}{{u}_{k}}-{{\mu }_{k-1}}{{u}_{k-1}} \right\rangle }^{2}}\nonumber\\
	&\quad +\hspace{-2pt}4{{\left( {{f}_{k}}({{x}_{k}}\hspace{-2pt}+\hspace{-2pt}{{\mu }_{k-1}}{{u}_{k-1}})\hspace{-2pt}-\hspace{-2pt}{{f}_{k}}({{x}_{k}})\hspace{-2pt}-\hspace{-2pt}\left\langle \nabla {{f}_{k}}({{x}_{k}}),{{\mu }_{k-1}}{{u}_{k-1}} \right\rangle  \right)}^{2}} \nonumber\\ 
	&\quad +4{{\left( {{f}_{k}}({{x}_{k}}+{{\mu }_{k}}{{u}_{k}})-{{f}_{k}}({{x}_{k}})-\left\langle \nabla {{f}_{k}}({{x}_{k}}),{{\mu }_{k}}{{u}_{k}} \right\rangle  \right)}^{2}}\nonumber\\
	& \le 2{{\left\langle \nabla {{f}_{k}}({{x}_{k}}),{{\mu }_{k}}{{u}_{k}}-{{\mu }_{k-1}}{{u}_{k-1}} \right\rangle }^{2}}\nonumber\\
	&\quad+L_{1}^{2}\mu _{k-1}^{4}{{\left\| {{u}_{k-1}} \right\|}^{4}}+L_{1}^{2}\mu _{k}^{4}{{\left\| {{u}_{k}} \right\|}^{4}}\text{,}\nonumber 
\end{align}
where the last inequality follows from the fact that the $L_1$-smooth function $f$ satisfies
$\left| f(y)-f(x)-\left\langle \nabla f(x),y-x \right\rangle  \right|\le \frac{{L}_{1}}{2}{{\left\| x-y \right\|}^{2}}$ for all $x,y\in\Omega$.
By once again using the properties of the random vector \( u_k \): \(\mathbb{E}\left[\| u_k \|^2\right] = d\) and \(\mathbb{E}\left[\| u_k - u_{k-1} \| \cdot \| u_k \|^2 \right] \le 4 \left( d+4 \right)^2\), we obtain
\begin{align}
	& \mathbb{E}\left[ {{\left( {{f}_{k}}({{x}_{k}}+{{\mu }_{k}}{{u}_{k}})-{{f}_{k}}({{x}_{k}}+{{\mu }_{k-1}}{{u}_{k-1}}) \right)}^{2}}{{\left\| {{u}_{k}} \right\|}^{2}} \right] \nonumber\\ 
	& \le 8\mu _{k-1}^{2}{{\left\| \nabla {{f}_{k}}({{x}_{k}}) \right\|}^{2}}{{\left( d+4 \right)}^{2}}+2L_{1}^{2}\mu _{k-1}^{4}{{\left( d+6\right)}^{3}}\text{.}\nonumber 
\end{align}
This, in combination with \eqref{eq13}, \eqref{apd1}, \eqref{apd111} and \eqref{apd112}, gives the desired result.
\end{proof}

\subsection{Proof of Lemma \ref{lem4}}
\begin{proof}
For the sake of simplicity, we will abbreviate \( g_{i,k}(x_{i,k}) \) as \( g_{i,k} \) without causing any confusion in this subsection. First, we consider the projection error \(\|s_{i,k}\| = \|\mathcal{P}_{\Omega} \left[ \tilde{x}_{i,k} \right] -  \tilde{x}_{i,k}\|\), which can be bounded as follows:
\begin{align}
	\left\| s_{i,k} \right\|&= \left\|\mathcal{P}_{\Omega} \left[ \tilde{x}_{i,k}  \right]-\sum_{j=1}^{n} [W_k]_{i,j} x_{j,k}+\alpha_k\sum_{j=1}^{n} [W_k]_{i,j} g_{j,k}^s \right\|\nonumber\\
	&\le \left\| \mathcal{P}_{\Omega} \left[ \tilde{x}_{i,k}  \right] \hspace{-2pt} -\hspace{-3pt} \sum_{j=1}^{n} \left[W_k\right]_{i,j} x_{j,k} \right\|\hspace{-2pt}+\hspace{-2pt}\alpha_{k}\hspace{-2pt} \sum_{j=1}^{n} \left[W_k\right]_{i,j} \left\| g_{j,k}^{s} \right\|\text{,}\nonumber
\end{align}
where the equality is derived from \eqref{eq12} in Algorithm 1, and the inequality follows from the fundamental properties of norms. Based on the non-expansiveness of 
 $\mathcal{P}_\Omega\left[\cdot\right]$ and the fact that $ \sum_{j=1}^{n} \left[W_k\right]_{i,j} x_{j,k}\in\Omega$, we have
  \begin{align}
 	\left\| \mathcal{P}_{\Omega} \left[ \tilde{x}_{i,k}  \right] \hspace{-2pt} -\hspace{-2pt} \sum_{j=1}^{n} \left[W_k\right]_{i,j} x_{j,k} \right\|&\le \left\|  \tilde{x}_{i,k} - \sum_{j=1}^{n} \left[W_k\right]_{i,j} x_{j,k} \right\|\nonumber\\
 	&\le \alpha_{k} \sum_{j=1}^{n} \left[W_k\right]_{i,j} \left\| g_{j,k}^{s} \right\|\text{,}\nonumber
 \end{align}
which allows us to further obtain
 \begin{align}\label{s_{i,k}}
 \left\| s_{i,k} \right\|	&\le 2\alpha_{k} \sum_{j=1}^{n} \left[W_k\right]_{i,j} \left\| g_{j,k}^{s} \right\|.
 \end{align}\par
We now derive the general evolution of ${x}_{i,k}$ by separately presenting the expressions of ${x}_{i,k}$ and $\bar{x}_{i,k}$. For $x_{i,k}$,
\begin{align}
	\hspace{-1pt}x_{i,k}\hspace{-2pt}=\hspace{-2pt} s_{i,k-1}\hspace{-2pt} +\hspace{-4pt} \sum\limits_{j=1}^{n} \hspace{-2pt}\left[W_{k-1}\right]_{i,j}\hspace{-1pt} x_{j,k-1}\hspace{-2pt}-\hspace{-4pt} \sum\limits_{j=1}^{n}\hspace{-2pt} \left[W_{k-1}\right]_{i,j}\hspace{-1pt} \alpha_{j,k-1}\hspace{-1pt} g_{j,k-1}^{s}.\nonumber
\end{align}
Applying the preceding equality recursively gives
\begin{align}\label{apd2}
	 {x}_{i,k}
 &\hspace{-2pt}=\hspace{-2pt}\sum\limits_{j=1}^{n}{\left[W\hspace{-1pt}(k\hspace{-2pt}-\hspace{-2pt}1,\hspace{-1pt}0)\right]_{i,j}{{x}_{j,0}}}\hspace{-2pt}-\hspace{-2pt}\sum\limits_{\tau =0}^{k-1}{\sum\limits_{j=1}^{n}\left[W\hspace{-1pt}(k\hspace{-2pt}-\hspace{-2pt}1,\hspace{-1pt}\tau)\right]_{i,j}{{\alpha }_{\tau }}g_{j,\tau }^{s}}\nonumber\\
 &\quad+\sum\limits_{\tau =0}^{k-2}{\sum\limits_{j=1}^{n}\left[W(k-1,\tau+1)\right]_{i,j}{{s}_{j,\tau }}}+{{s}_{i,k-1}}.
\end{align}
By taking average of both sides of \eqref{apd2} and applying the double stochasticity of the transition matrices, we obtain
\begin{align}\label{apd3}
	{{{\bar{x}}}_{k}}=\frac{1}{n}\sum\limits_{i=1}^{n}{{{x}_{i,0}}}-\frac{1}{n}\sum\limits_{\tau =0}^{k-1}{{\alpha }_{\tau }}{\sum\limits_{j=1}^{n}{g_{j,\tau }^{s}}}+\frac{1}{n}\sum\limits_{\tau =0}^{k-1}{\sum\limits_{j=1}^{n}{{{s}_{j,\tau }}}} \text{.} 
\end{align}
Combining the results in \eqref{apd2}, \eqref{apd3} and Lemma \ref{lem2} leads to
\begin{align}
	& \left\| {{x}_{i,k}}-{{{\bar{x}}}_{k}} \right\|\le \Gamma \sum\limits_{\tau =0}^{k-2}{{{\gamma }^{k-\tau -2}}\sum\limits_{j=1}^{n}{\left\| {{s}_{j,\tau }} \right\|}}+\left\| {{s}_{i,k-1}} \right\|\nonumber\\
	&+\Gamma \sum\limits_{\tau =0}^{k-1}{{{\gamma }^{k-\tau -1}}{{\alpha }_{\tau }}\sum\limits_{j=1}^{n}{\left\| g_{j,\tau }^{s} \right\|}} +\frac{1}{n}\sum\limits_{j=1}^{n}{\left\| {{s}_{j,k-1}} \right\|}\nonumber \text{.}
\end{align}
This, in conjunction with \eqref{s_{i,k}}, yields
\begin{align}
	& \left\| {{x}_{i,k}}-{{{\bar{x}}}_{k}} \right\|\le 3\Gamma \sum\limits_{\tau =0}^{k-2}{{{\gamma }^{k-\tau -1}}{{\alpha }_{\tau }}\sum\limits_{i=1}^{n}{\left\| g_{i,\tau }^{s} \right\|}} \nonumber\\ 
	& +\hspace{-2pt}2{{\alpha }_{k-1}}\hspace{-2pt}\sum\limits_{j=1}^{n}\hspace{-2pt}{W_{i,j}^{k-1}\left\| g_{j,k-1}^{s} \right\|}\hspace{-2pt}+\hspace{-2pt}\left( \frac{2}{n}\hspace{-2pt}+\hspace{-2pt}\Gamma  \right){{\alpha }_{k-1}}\hspace{-2pt}\sum\limits_{i=1}^{n}{\left\| g_{i,k-1}^{s} \right\|} \nonumber\text{,}
\end{align}
which further implies that 
\begin{align}
	& \left\| {{x}_{i,k}}-{{x}_{j,k}} \right\|\le 6\Gamma \sum\limits_{\tau =0}^{k-2}{{{\gamma }^{k-\tau -1}}{{\alpha }_{\tau }}\sum\limits_{i=1}^{n}{\left\| g_{i,\tau }^{s} \right\|}}\nonumber \\ 
	& +\hspace{-2pt}4{{\alpha }_{k-1}}\hspace{-2pt}\sum\limits_{j=1}^{n}\hspace{-2pt}{W_{i,j}^{k-1}\left\| g_{j,k-1}^{s} \right\|}\hspace{-2pt}+\hspace{-2pt}\left( \frac{4}{n}\hspace{-2pt}+\hspace{-2pt}2\Gamma  \right){{\alpha }_{k-1}}\hspace{-2pt}\sum\limits_{i=1}^{n}\hspace{-2pt}{\left\| g_{i,k-1}^{s} \right\|} \nonumber\text{.}
\end{align}
By summing over $i$ from $1$ to $n$, we obtain
\begin{align}\label{apd23}
	&\sum\limits_{i=1}^{n}{\left\| {{x}_{i,k}}-{{x}_{j,k}} \right\|}\nonumber\\
	&\le 6n\Gamma\hspace{-2pt} \sum\limits_{\tau =0}^{k-2}\hspace{-2pt}{{{\gamma }^{k-\tau -1}}{{\alpha }_{\tau }}\hspace{-2pt}\sum\limits_{i=1}^{n}{\left\| g_{i,\tau }^{s} \right\|}}\hspace{-2pt}+\hspace{-2pt}\left( 8\hspace{-2pt}+\hspace{-2pt}2n\Gamma  \right){{\alpha }_{k-1}}\hspace{-2pt}\sum\limits_{i=1}^{n}\hspace{-2pt}{\left\| g_{i,k-1}^{s} \right\|}\nonumber\\
	&\le (8+6n\Gamma )\sum\limits_{\tau =0}^{k-1}{{{\gamma }^{k-\tau -1}}{{\alpha }_{\tau }}\sum\limits_{i=1}^{n}{\left\| g_{i,\tau }^{s} \right\|}} \text{.} 
\end{align}\par 
We now establish a tight upper bound on the expected norm of $g_{i,k}^s$. Summing the inequalities \eqref{lem41} in Lemma \ref{lem31} from \( i= 1 \) to \( n \) and applying \eqref{apd23} yields
\begin{align}\label{Lemma51}
&\mathbb{E}\left[ \sum\limits_{i=1}^{n}{\left\| g_{i,k}^{s} \right\|} \right]\hspace{-2pt}\hspace{-2pt} \le\hspace{-2pt} \frac{\sqrt{3d}(8\hspace{-2pt}+\hspace{-2pt}6n\Gamma ){{L}_{0}}}{{{\mu }_{k}}}\sum\limits_{\tau =0}^{k-1}{{{\gamma }^{k-\tau -2}}{{\alpha }_{\tau }}\mathbb{E}\hspace{-2pt}\left[\hspace{-2pt}\sum\limits_{i=1}^{n}\hspace{-2pt}\left\| g_{i,\tau }^{s}\right\|\hspace{-2pt}\right] }\nonumber \\ 
&+\frac{2\sqrt{3}\left( d+4 \right)n{{L}_{0}}{{\mu }_{k-1}}}{{{\mu }_{k}}}+\frac{\sqrt{3d}}{{{\mu }_{k}}}\sum\limits_{i=1}^{n}{{{\theta }_{i,k}}}\text{.}
\end{align}
By multiplying both sides by \(\alpha_k\) and summing over \(k\) from $k=1$ to $T$, we obtain
\begin{align}\label{apd8}
	& \sum\limits_{k=1}^{T}{{{\alpha }_{k}}\mathbb{E}\left[ \sum\limits_{i=1}^{n}{\left\| g_{i,k}^{s}\right\|} \right]} \nonumber\\ 
	& \le\hspace{-2pt} \sqrt{3d}(8\hspace{-2pt}+\hspace{-2pt}6n\Gamma ){{L}_{0}}\sum\limits_{k=1}^{T}\left\{\frac{{{\alpha }_{k}}}{{{\mu }_{k}}}\sum\limits_{\tau =0}^{k-1}{{{\gamma }^{k-\tau -2}}{{\alpha }_{\tau }}\mathbb{E}\left[ \sum\limits_{i=1}^{n}{\left\| g_{i,\tau }^{s} \right\|} \right]}\right\} \nonumber\\ 
	& \quad+2\sqrt{3}\left( d+4 \right)n{{L}_{0}}\sum\limits_{k=1}^{T}{\frac{{{\alpha }_{k}}{{\mu }_{k-1}}}{{{\mu }_{k}}}}+\sqrt{3d}\sum\limits_{k=1}^{T}{\frac{{{\alpha }_{k}}}{{{\mu }_{k}}}\sum\limits_{i=1}^{n}{{{\theta }_{i,k}}}}\text{.}  
\end{align}\par
We now turn our attention to the term 
$\sum\nolimits_{k=1}^{T} \left\{\frac{{{\alpha }_{k}}}{{{\mu }_{k}}} \sum\nolimits_{\tau =0}^{k-1} {{\gamma }^{k-\tau -2} \alpha _{\tau} \mathbb{E} \left[ \sum\nolimits_{i=1}^{n} \left\| g_{i,\tau }^{s} \right\| \right]}\right\}$. By using the fact that $\gamma<1$ and interchanging the order of summation, we find that
\begin{align}
	& \sum\limits_{k=1}^{T}\left\{{\frac{{{\alpha }_{k}}}{{{\mu }_{k}}}\sum\limits_{\tau =0}^{k-1}{{{\gamma }^{k-\tau -2}}{{\alpha }_{\tau }}\mathbb{E}\left[ \sum\limits_{i=1}^{n}{\left\| g_{i,\tau }^{s} \right\|} \right]}}\right\} \nonumber\\ 
	& =\frac{1}{{{\gamma }^{2}}}\sum\limits_{k=1}^{T}\left\{{\frac{{{\alpha }_{k}}{{\gamma }^{k}}}{{{\mu }_{k}}}\sum\limits_{\tau =0}^{k-1}{{{\gamma }^{-\tau }}{{\alpha }_{\tau }}\mathbb{E}\left[ \sum\limits_{i=1}^{n}{\left\| g_{i,\tau }^{s} \right\|} \right]}}\right\}  \nonumber\\ 
	& =\frac{1}{{{\gamma }^{2}}}\sum\limits_{k=0}^{T-1}\left\{{\left( \sum\limits_{\tau =k+1}^{T}{\frac{{{\alpha }_{\tau }}{{\gamma }^{\tau -k}}}{{{\mu }_{\tau }}}} \right){{\alpha }_{k}}\mathbb{E}\left[ \sum\limits_{i=1}^{m}{\left\| g_{i,k}^{s} \right\|} \right]}\right\} \nonumber
\end{align}
To simplify the notations, we denote ${{A}_{k}}=\sum\nolimits_{\tau =k+1}^{T}{\frac{{{\alpha }_{\tau }}{{\gamma }^{\tau -k}}}{{{\mu }_{\tau }}}}$ and $\tau-k=s$, we have ${{A}_{k}}=\sum\nolimits_{s=1}^{T-k}{\frac{{{\alpha }_{s+k}}{{\gamma }^{s}}}{{{\mu }_{s+k}}}}$, $\gamma {{A}_{k}}=\sum\nolimits_{s=1}^{T-k}{\frac{{{\alpha }_{s+k}}{{\gamma }^{s+1}}}{{{\mu }_{s+k}}}}=\sum\nolimits_{s=2}^{T-k+1}{\frac{{{\alpha }_{s+k-1}}{{\gamma }^{s}}}{{{\mu }_{s+k-1}}}}$. And hence, we get 
\begin{align}
 (1-\gamma ){{A}_{k}}	&=\sum\limits_{s=1}^{T-k}{\frac{{{\alpha }_{s+k}}{{\gamma }^{s}}}{{{\mu }_{s+k}}}}-\sum\limits_{s=2}^{T-k+1}{\frac{{{\alpha }_{s+k-1}}{{\gamma }^{s}}}{{{\mu }_{s+k-1}}}} \nonumber\\ 
	& =\hspace{-2pt}\frac{{{\alpha }_{k+1}}\gamma }{{{\mu }_{k+1}}}\hspace{-2pt}+\hspace{-2pt}\sum\limits_{s=2}^{T-k}\hspace{-2pt}{{{\gamma }^{s}}\hspace{-2pt}\left(\hspace{-2pt} \frac{{{\alpha }_{s+k}}}{{{\mu }_{s+k}}}\hspace{-2pt}-\hspace{-2pt}\frac{{{\alpha }_{s+k-1}}}{{{\mu }_{s+k-1}}} \hspace{-2pt}\right)}\hspace{-2pt}-\hspace{-2pt}\frac{{{\alpha }_{T}}{{\gamma }^{T\hspace{-1pt}-\hspace{-1pt}k+1}}}{{{\mu }_{T}}} \nonumber\\ 
	& \le \frac{{{\alpha }_{k+1}}\gamma }{{{\mu }_{k+1}}}\text{,}\nonumber
\end{align}
where the inequality holds due to the fact that when the sequence \(\{\frac{{\alpha }_{k}}{{\mu }_{k}}\}_{k=1}^T\) is non-increasing, \(\frac{{\alpha }_{s+k}}{{\mu }_{s+k}}-\frac{{\alpha }_{s+k-1}}{{\mu }_{s+k-1}} \le 0\) always holds for any \(k, s \in [T]\). This allows us to directly derive that ${{A}_{k}}\le \frac{{{\alpha }_{k+1}}\gamma }{\left( 1-\gamma  \right){{\mu }_{k+1}}}$. As a result,
\begin{align}
	& \sum\limits_{k=1}^{T}\left\{{\frac{{{\alpha }_{k}}}{{{\mu }_{k}}}\sum\limits_{\tau =0}^{k-1}{{{\gamma }^{k-\tau -2}}{{\alpha }_{\tau }}\mathbb{E}\left[ \sum\limits_{i=1}^{n}{\left\| g_{i,\tau }^{s} \right\|} \right]}} \right\}\nonumber\\ 
	& \le \frac{1}{\gamma \left( 1-\gamma\right)  }\sum\limits_{k=0}^{T-1}{\frac{{{\alpha }_{k+1}}}{{{\mu }_{k+1}}}{{\alpha }_{k}}\mathbb{E}\left[ \sum\limits_{i=1}^{n}{\left\| g_{i,k}^{s} \right\|} \right]}\nonumber
\end{align}
By substituting it into \eqref{apd8} and setting $g_{i,0}^s=0$ for any $i\in \mathcal{V}$, we have
\begin{align}\label{lemma52}
	& \sum\limits_{k=1}^{T}{{{\alpha }_{k}}\mathbb{E}\left[ \sum\limits_{i=1}^{n}{\left\| g_{i,k}^{s}\right\|} \right]} \nonumber\\ 
	& \le \frac{ \sqrt{3d}(8\hspace{-2pt}+\hspace{-2pt}6n\Gamma ){{L}_{0}}}{\gamma\left( 1-\gamma\right)  }\sum\limits_{k=1}^{T-1}{\frac{{{\alpha }_{k+1}}}{{{\mu }_{k+1}}}{{\alpha }_{k}}\mathbb{E}\left[ \sum\limits_{i=1}^{n}{\left\| g_{i,k}^{s} \right\|} \right]}\nonumber\\
	&\quad+2\sqrt{3}\left( d+4 \right)n{{L}_{0}}\sum\limits_{k=1}^{T}{\frac{{{\alpha }_{k}}{{\mu }_{k-1}}}{{{\mu }_{k}}}}+\sqrt{3d}\sum\limits_{k=1}^{T}{\frac{{{\alpha }_{k}}}{{{\mu }_{k}}}\sum\limits_{i=1}^{n}{{{\theta }_{i,k}}}}\text{.}
\end{align}\par 
Define $V_k = {{{\alpha }_{k}}\mathbb{E}\left[ \sum\nolimits_{i=1}^{n}{\left\| g_{i,k}^{s}\right\|} \right]}$, the constant factor $M =\frac{\sqrt{3d}(8+6n\Gamma){{L}_{0}}}{\gamma\left( 1-\gamma\right) }$ and the perturbation term $C_T = 2\sqrt{3}(d+4)n{{L}_{0}}\sum\nolimits_{k=1}^{T}{\frac{{{\alpha }_{k}}{{\mu }_{k-1}}}{{{\mu }_{k}}}} + \sqrt{3d}\sum\nolimits_{k=1}^{T}{\frac{{{\alpha }_{k}}}{{{\mu }_{k}}}\sum\nolimits_{i=1}^{n}{{{\theta }_{i,k}}}}$. Then,
\begin{equation}
\sum\limits_{k=1}^{T}{{{V}_{k}}}\le M\sum\limits_{k=1}^{T-1}{\frac{{{\alpha }_{k+1}}}{{{\mu }_{k+1}}}{{V}_{k}}}+{{C}_{T}}\text{.}\nonumber
\end{equation}
It further suggests that \[{{V}_{T}}\le \sum\limits_{k=1}^{T-1}{\left( M\frac{{{\alpha }_{k+1}}}{{{\mu }_{k+1}}}-1 \right){{V}_{k}}}+{{C}_{T}}.\]
Next, we derive the general evolution of ${V}_{T}$:
\begin{align}\label{apd9}
{{V}_{T}}	& \le \sum\limits_{k=1}^{T-2}{\left( M\frac{{{\alpha }_{T}}}{{{\mu }_{T}}}\left( M\frac{{{\alpha }_{k+1}}}{{{\mu }_{k+1}}}-1 \right) \right){{V}_{k}}}\nonumber\\
	&+\left( M\frac{{{\alpha }_{T}}}{{{\mu }_{T}}}-1 \right){{C}_{T-1}}+{{C}_{T}} \nonumber\\ 
%	& \le \sum\limits_{k=1}^{T-3}{\left( M\frac{{{\alpha }_{T-1}}}{{{\mu }_{T-1}}} \right)\left( M\frac{{{\alpha }_{T}}}{{{\mu }_{T}}} \right)\left( M\frac{{{\alpha }_{k+1}}}{{{\mu }_{k+1}}}-1 \right){{V}_{k}}}\nonumber \\ 
%	& +M\frac{{{\alpha }_{T}}}{{{\mu }_{T}}}\left( M\frac{{{\alpha }_{T-1}}}{{{\mu }_{T-1}}}-1 \right){{C}_{T-2}}+\left( M\frac{{{\alpha }_{T}}}{{{\mu }_{T}}}-1 \right){{C}_{T-1}}+{{C}_{T}} \nonumber \\ 
	& \le \cdots \nonumber \\  
	& \le \sum\limits_{k=1}^{T-l}\bigg\{\left( M\frac{{{\alpha }_{T-l+2}}}{{{\mu }_{T-l+2}}} \right)\left( M\frac{{{\alpha }_{T-l+3}}}{{{\mu }_{T-l+3}}} \right)\cdots\nonumber\\
	&\quad\left( M\frac{{{\alpha }_{T-1}}}{{{\mu }_{T-1}}} \right)\left( M\frac{{{\alpha }_{T}}}{{{\mu }_{T}}} \right)\left( M\frac{{{\alpha }_{k+1}}}{{{\mu }_{k+1}}}-1 \right){{V}_{k}}\bigg\}\nonumber \\  
	& +\hspace{-1pt}\left( \hspace{-1pt}M\frac{{{\alpha }_{T-l+3}}}{{{\mu }_{T-l+3}}} \hspace{-1pt}\right)\hspace{-1pt}\cdots\hspace{-1pt}\left( \hspace{-1pt}M\frac{{{\alpha }_{T}}}{{{\mu }_{T}}}\hspace{-1pt} \right)\left(\hspace{-1pt} M\frac{{{\alpha }_{T-l+2}}}{{{\mu }_{T-l+2}}}\hspace{-1pt}-\hspace{-1pt}1 \hspace{-1pt}\right)\hspace{-1pt}{{C}_{T-l+1}}\nonumber \\ 
	& +\cdots+M\frac{{{\alpha }_{T}}}{{{\mu }_{T}}}\left( M\frac{{{\alpha }_{T-1}}}{{{\mu }_{T-1}}}-1 \right){{C}_{T-2}}\nonumber\\
	&+\left( M\frac{{{\alpha }_{T}}}{{{\mu }_{T}}}-1 \right){{C}_{T-1}}+{{C}_{T}} \nonumber \\ 
	&\le\cdots\nonumber\\
	& \le \underbrace{\left( M\frac{{{\alpha }_{2}}}{{{\mu }_{2}}}-1 \right)\left( M\frac{{{\alpha }_{3}}}{{{\mu }_{3}}} \right)\left( M\frac{{{\alpha }_{4}}}{{{\mu }_{4}}} \right)\cdots\left( M\frac{{{\alpha }_{T}}}{{{\mu }_{T}}} \right){{V}_{1}}}_{ \triangleq {{T}_{1}}} \nonumber \\  
	& +\left( M\frac{{{\alpha }_{3}}}{{{\mu }_{3}}}-1 \right)\left( M\frac{{{\alpha }_{4}}}{{{\mu }_{4}}} \right)\cdots\left( M\frac{{{\alpha }_{T-1}}}{{{\mu }_{T-1}}} \right)\left( M\frac{{{\alpha }_{T}}}{{{\mu }_{T}}} \right){{C}_{2}} \nonumber \\  
	& +\cdots+M\frac{{{\alpha }_{T}}}{{{\mu }_{T}}}\left( M\frac{{{\alpha }_{T-1}}}{{{\mu }_{T-1}}}-1 \right){{C}_{T-2}}\nonumber\\
	&+\left( M\frac{{{\alpha }_{T}}}{{{\mu }_{T}}}-1 \right){{C}_{T-1}}+{{C}_{T}} \triangleq {{T}_{1}}+{{T}_2}  
\end{align}
%We are now ready to derive upper bounds for \(\mathbb{E}[\|g_{i,k}^s\|^2 ] \). 
Let \(\alpha_k = \frac{1}{2Mk^a}\) and \(\mu_k = \frac{1}{k^b}\). Consequently, it follows that \( \frac{M\alpha_{2}}{\mu_{2}}=\frac{1}{2} \), which directly implies \( T_1 < 0 \). Our focus now shifts to developing an upper bound for the expected norm of \( g_{i,k}^s \) by examining the term \(T_2 \). By applying the relation \( C_{k+1} = C_{k} + 2\sqrt{3}(d+4)nL_{0} \frac{\alpha_{k+1} \mu_{k}}{\mu_{k}} + \sqrt{3d} \frac{\alpha_{k+1}}{\mu_{k+1}} \sum_{i=1}^{n} \theta_{i,k+1} \) to \eqref{apd9}, it becomes evident that certain terms will cancel each other out. This leads to 
\begin{align}\label{g_{i,k}}
	&\mathbb{E}\left[\sum_{i=1}^{n}\left\|g_{i,T}^s\right\|\right] 
	\le2A\underbrace{\sum\limits_{k=1}^{T}{\left\{ {{M}^{T-k}}\frac{{\mu }_{k-1}}{\alpha_T}\frac{{{\alpha }_{k}}}{{{\mu }_{k}}}\cdots\frac{{{\alpha }_{T-1}}}{{{\mu }_{T-1}}}\frac{{{\alpha }_{T}}}{{{\mu }_{T}}} \right\}}}_{Z_1} \nonumber\\
	&+2B\underbrace{\sum\limits_{k=1}^{T}{\left\{ {{M}^{T-k}}\frac{1}{\alpha_T}\sum\limits_{i=1}^{n}{{{\theta }_{i,k}}}\frac{{{\alpha }_{k}}}{{{\mu }_{k}}}\frac{{{\alpha }_{k+1}}}{{{\mu }_{k+1}}}\frac{{{\alpha }_{k+2}}}{{{\mu }_{k+2}}}\cdots\frac{{{\alpha }_{T-1}}}{{{\mu }_{T-1}}}\frac{{{\alpha }_{T}}}{{{\mu }_{T}}} \right\}}}_{Z_2}\text{,}
\end{align}
where $A=2\sqrt{3}(d+4)$ and $B=\sqrt{3d}$. We will further establish a tight upper bound for \( \mathbb{E}\left[\sum_{i=1}^{n}\left\|g_{i,T}^s\right\|\right] \) by examining terms $Z_1$ and $Z_2$ separately.\par 
Substituting the explicit expressions of \(\alpha_k = \frac{1}{2Mk^a}\) and \(\mu_k = \frac{1}{k^b}\) into the term $Z_1$, we find that
\begin{align}
%	& =\frac{1}{M}\sum\limits_{k=1}^{T}{\left\{ \frac{{{\mu }_{k-1}}}{{{\alpha }_{T}}}\frac{1}{2{{k}^{a-b}}}\frac{1}{2{{\left( k+1 \right)}^{a-b}}}\frac{1}{2{{\left( k+2 \right)}^{a-b}}}...\frac{1}{2{{\left( T-1 \right)}^{a-b}}}\frac{1}{2*{{T}^{a-b}}} \right\}} \nonumber\\ 
	Z_1=\frac{{T}^{a}}{\left( {T!}\right)^{a-b} }\sum\limits_{k=1}^{T}{ \frac{{\left( {\left( k-1 \right)!} \right)}^{a-b}}{{{2}^{T-k}}{{\left( k-1 \right)}^{b}}} }.\nonumber
\end{align}
For ease of analysis, we denote $P_k=\frac{{\left( {\left( k-1 \right)!} \right)}^{a-b}}{{{2}^{T-k}}{{\left( k-1 \right)}^{b}}}$. It is easy to deduce that 
\begin{equation}\label{P_k}
	\frac{{{P}_{k}}}{{{P}_{k+1}}}=\frac{{{k}^{2b-a}}}{2{{\left( k-1 \right)}^{b}}}.
\end{equation}
Following similar lines as that of  term $Z_1$, we immediately have
\begin{align}\label{Z_2}
	{Z_2}&=\frac{{{T}^{b}}}{{{\left( (T-1)! \right)}^{a-b}}}\sum\limits_{k=1}^{T}{\left\{2^{k-T} {{\left( \left( k-1 \right)! \right)}^{a-b}}\sum\limits_{i=1}^{n}{{{\theta }_{i,k}}} \right\}} \nonumber\\ 
	& \le \frac{{{T}^{b}}}{{{\left( (T-1)! \right)}^{a-b}}}\sum\limits_{i=1}^{n}{\tilde{\Theta}_{i,T}}\sum\limits_{k=1}^{T}2^{k-T}{{{\left( \left( k-1 \right)! \right)}^{a-b}}} \nonumber\\
	& \le \frac{{{T}^{b}}}{{{\left( (T-1)! \right)}^{a-b}}}\sum\limits_{i=1}^{n}{\tilde{\Theta}_{i,T}}\sum\limits_{k=1}^{T}{Q_k} \text{,}
\end{align}
where $\tilde{\Theta}_{i,T}={\max }_{k\in[T]}$, $Q_k=2^{k-T}{{{\left( \left( k-1 \right)! \right)}^{a-b}}}$. It is obvious that 
\begin{equation}\label{Q_k}
	\frac{Q_k}{Q_{k+1}}=\frac{1}{2k^{a-b}}.
\end{equation}
Due to \( a \le b \), \eqref{P_k} simplifies to \(\frac{P_k}{P_{k+1}} \le \frac{k^a}{2(k-1)^a}\). Observing that \(\lim_{k \to \infty}(1 + \frac{1}{k-1})^a = 1\) for any \(a \in (0,1)\), we find that $(1+\frac{1}{k-1})^a<\frac{4}{3}$ for any $k>(\frac{4}{3})^{\frac{1}{a}}-1$. Specifically, when $ a=b=\frac{1}{2}$, $(1+\frac{1}{k-1})^a<\frac{4}{3}$ always
holds for any $k\in [T]$, which further implies $\frac{{{P}_{k}}}{{{P}_{k+1}}}\le\frac{2}{3}$ for any $k\in [T]$. This, combined with $\sum_{k=1}^{T-1}\left(\frac{2}{3}\right) ^k<3$, yields
\begin{align}
	&\sum\limits_{k=1}^{T}{{{P}_{k}}}=P_2+\cdots+P_{T-1}+P_T\nonumber\\
	&<\left( \frac{2}{3}\right) ^{T-1}P_T+\cdots+\left( \frac{2}{3}\right) ^2P_T+\frac{2}{3}P_T+P_T<3P_T\nonumber
\end{align}
As a result,
\begin{align}\label{z_1}
	Z_1=\frac{{T}^{a}}{\left( {T!}\right)^{a-b} }\sum\limits_{k=1}^{T}{P_k}\le \frac{3T^b}{{\left( T-1 \right)}^{b}}<4
\end{align}
for any $T>1$. By substituting $a\le b$ into \eqref{Q_k}, we arrive at $\frac{Q_k}{Q_{k+1}}\le\frac{1}{2}$. Following an argument similar to $Z_1$, we get that 
\begin{align}
	&\sum\limits_{k=1}^{T}{Q_k}=Q_2+\cdots+Q_{T-1}+Q_T\nonumber\\
	&\le\frac{1}{2^{T-1}}P_T+\cdots+\frac{1}{2 ^2}P_T+\frac{1}{2}P_T+P_T<2P_T\nonumber
\end{align}
This, combined with \eqref{Z_2}, yields 
\begin{align}\label{z_2}
	Z_2 &\le \frac{{2{T}^{b}Q_T}}{{{\left( (T-1)! \right)}^{a-b}}}\sum\limits_{i=1}^{n}{\tilde{\Theta}_{i,T}}\nonumber\\
	&\le{{2{T}^{b}}}\sum\limits_{i=1}^{n}{\tilde{\Theta}_{i,T}}.
\end{align}
We then combined the results in \eqref{g_{i,k}}, \eqref{z_1} and \eqref{z_2} to get
\begin{align}\label{g_{i,T}}
	&\sum_{i=1}^{n}\mathbb{E}\left[\left\|g_{i,T}^s\right\|\right] 
	\le4\sqrt{3}\left( d+4\right)+4\sqrt{3d}{T}^{b}\sum\limits_{i=1}^{n}{\tilde{\Theta}_{i,T}}\text{,}
\end{align}

%\begin{align}
%	&\sum_{k=1}^T\alpha_k\mathbb{E}\left[\sum_{i=1}^{n}\left\|g_{i,k}^s\right\|\right] \nonumber\\
%	&\le\frac{2\gamma(1-\gamma)\left( d+4\right)}{\sqrt{d}(8+6n\Gamma)L_0}\sum_{k=1}^Tk^{-a}+\frac{\gamma(1-\gamma)}{(4+3n\Gamma)L_0}\sum_{k=1}^T{k}^{b-a}\sum\limits_{i=1}^{n}{\tilde{\Theta}_{i,k}}\text{,}\nonumber\\
%	&\le\frac{\gamma(1-\gamma)\left( d+4\right)}{\sqrt{d}(4+3n\Gamma)L_0}T^{1-a}+\frac{\gamma(1-\gamma)}{(4+3n\Gamma)L_0}\Theta_{T}{T}^{1+b-a}\text{,}
%\end{align}
This, together with \eqref{apd23}, further gives
\begin{align}\label{38}
	&	\sum\limits_{k=1}^{T}\sum \limits_{i=1}^{n} \mathbb{E}\left[\left\| {{x}_{i,k}}-{{x}_{j,k}} \right\|\right] \nonumber\\
	&\le (8+6n\Gamma )\sum\limits_{k=1}^{T} \sum\limits_{\tau =0}^{k-1}{{\gamma }^{k-\tau -1}}{{\alpha }_{\tau }}\sum\limits_{i=1}^{n}\mathbb{E}\left[{\left\| g_{i,\tau }^{s} \right\|}\right]  \nonumber\\
	&\le \frac{(8+6n\Gamma )}{1-\gamma}\sum\limits_{k=0}^{T-1} {{\alpha }_{k }}\sum\limits_{i=1}^{n}\mathbb{E}\left[{\left\| g_{i,k }^{s} \right\|} \right] \nonumber\\
	&\le \frac{(32+24n\Gamma )\sqrt{3}\left( d+4\right) }{1-\gamma}\sum\limits_{k=0}^{T-1} {{\alpha }_{k }}\nonumber\\
	&\quad+\frac{(32+24n\Gamma )\sqrt{3d}}{1-\gamma}\sum\limits_{k=0}^{T-1} \frac{{\alpha }_{k }}{\mu_k}\sum\limits_{i=1}^{n}{\tilde{\Theta}_{i,k}} .
\end{align}
Substituting the explicit expressions for \(\alpha_k = \frac{1}{2Mk^a}\), \(\mu_k = \frac{1}{k^b}\) with $a\le b$, and ${M} =\frac{\sqrt{3d}(8+6n\Gamma){{L}_{0}}}{\gamma\left( 1-\gamma\right) }$ into \eqref{38}, we have 
\begin{align}
\sum\limits_{k=1}^{T}\hspace{-2pt}\mathbb{E}\hspace{-2pt}\left[\hspace{-1pt}\sum \limits_{i=1}^{n}\hspace{-2pt}\left\|\hspace{-1pt} {{x}_{i,k}}\hspace{-2pt}-\hspace{-2pt}{{x}_{j,k}}\hspace{-1pt} \right\|\hspace{-1pt}\right]\hspace{-2pt}&\hspace{-2pt}\le\hspace{-3pt} \frac{2\gamma\left(\hspace{-1pt} d\hspace{-2pt}+\hspace{-2pt}4\hspace{-1pt}\right) }{\sqrt{d}{{L}_{0}}}\hspace{-3pt}\sum\limits_{k=0}^{T-1}\hspace{-2pt}{k^{-\hspace{-1pt}a}}\hspace{-2pt}+\hspace{-2pt}\frac{2\gamma }{{{L}_{0}}}\hspace{-2pt}\sum\limits_{k=0}^{T-1}\hspace{-1pt}k^{b\hspace{-1pt}-\hspace{-1pt}a\hspace{-2pt}}\hspace{-2pt} \sum\limits_{i=1}^{n}\hspace{-3pt}{\tilde{\Theta}_{i,k}}\nonumber\\
	&\le \frac{2\gamma\left( d+4\right) }{\sqrt{d}{{L}_{0}}}T^{1-a}+\frac{2\gamma }{{{L}_{0}}}T^{b-a}{{\Theta}_{T}}\text{,}\nonumber
\end{align}
because $k^{b-a}$ is positive and non-increasing in the last inequality.
\end{proof}
\subsection{Proof of Theorem 1}
\begin{proof}
To facilitate the analysis, we denote \(\tilde{x}_{i,k} = \sum_{j=1}^{n} [W_k]_{i,j} y_{j,k}\). Define the positive scalar function $\Lambda_k$ as  
$\Lambda_k = \sum_{i=1}^{n} \left\| x_{i,k} - x_k^* \right\|^2$, 
where $x_k^* = \arg \min_{x \in \Omega} f_k(x)$. From the evolution of $x_{i,k}$, the function $\Lambda_{k+1}$ can then be bounded as follows:
	\begin{align}\label{th2}
		\Lambda_{k+1}&=\sum\limits_{i=1}^{n}\left\| \mathcal{P}_{\Omega }\left[ \tilde{x}_{i,k}\right]-{{x}_{k+1}^*} \right\|^{2}\le \sum\limits_{i=1}^{n}{{{\left\| \tilde{x}_{i,k}-{{x}_{k+1}^*}\right\|}^{2}}} \nonumber\\  
		& \le \sum\limits_{i=1}^{n}{{{\left\| {{x}_{i,k}}-{{\alpha }_{k}}g_{i,k}^{s}-{{x}_{k+1}^*} \right\|}^{2}}}  \nonumber\\
		&= \sum\limits_{i=1}^{n}{{{\left\| {{x}_{i,k}}-{{x}_{k+1}^*} \right\|}^{2}}}+{{\alpha }_{k}^2}\sum\limits_{i=1}^{n}{{{\left\|g_{i,k}^{s} \right\|}^{2}}}\nonumber\\
		&\quad-2{{\alpha }_{k}}\sum\limits_{i=1}^{n}\left\langle{{x}_{i,k}}-{{x}_{k+1}^*} , g_{i,k}^{s}\right\rangle \text{,}
	\end{align}	 	
	where the first inequality results from the nonexpansiveness of the Euclidean projection $\mathcal{P}_k\left[\cdot \right] $, and the second inequality is due to the double stochasticity of \(W_{k+1}\) and the convexity of the Euclidean norm. Adding and subtracting $x_k^*$ inside the square term ${\| {{x}_{i,k}}-{{x}_{k+1}^*} \|}^{2}$, we can further obtain
	\begin{align}
		\Lambda_{k+1}	&\hspace{-2pt}\le\hspace{-2pt} \Lambda_{k}\hspace{-2pt}+\hspace{-2pt}\sum\limits_{i=1}^{n}{{{\left\| {{x}_{i,k}^*}\hspace{-2pt}-\hspace{-2pt}{{x}_{k+1}^*} \right\|}^{2}}}\hspace{-2pt}+\hspace{-2pt}2\sum\limits_{i=1}^{n}\left\langle {x}_{i,k}\hspace{-2pt}-\hspace{-2pt}{x}_{k}^*,{x}_{ k}^*\hspace{-2pt}-\hspace{-2pt}{{x}_{k+1}^*}\right\rangle  \nonumber\\
		&\quad+{{\alpha }_{k}^2}\sum\limits_{i=1}^{n}{{{\left\|g_{i,k}^{s} \right\|}^{2}}}-2{{\alpha }_{k}}\sum\limits_{i=1}^{n}\left\langle{{x}_{i,k}}-{{x}_{k}^*} , g_{i,k}^{s}\right\rangle\nonumber\\
		&\quad -2{{\alpha }_{k}}\sum\limits_{i=1}^{n}\left\langle{{x}_{k}^*}-{{x}_{k+1}^*} , g_{i,k}^{s}\right\rangle  \text{.}
	\end{align}
By rearranging the terms and eliminating the negative components, we obtain
	\begin{align}
		\sum\limits_{i=1}^{n}\left\langle {{x}_{i,k}}-{{x}_{k}^*},g_{i,k}^{s} \right\rangle &\hspace{-1pt} \le\hspace{-1pt} \frac{\Lambda_{k}-\Lambda_{k+1}}{2{{\alpha }_{k}}}\hspace{-2pt}+\hspace{-2pt} \frac{1}{2{{\alpha }_{k}}}\sum_{i=1}^n\left\|x_k^*-x_{k+1}^*\right\|^2\nonumber\\
		& \quad+ \frac{D}{{{\alpha }_{k}}}\left\|x_{k}^*-x_{k+1}^*\right\|\hspace{-2pt} +\hspace{-2pt}\frac{{\alpha }_{k}}{2}\sum\limits_{i=1}^{n}{{{\left\| g_{i,k}^{s} \right\|}^{2}}}\text{,} \nonumber
	\end{align}
    where we have used $\left\|x-y\right\|\le D$ for any $x,y\in\Omega$. Summing the preceding inequalities over \( k = 1, \ldots, T \) and taking the expectation on both sides, we find that
	\begin{align}\label{th1}
		& \sum\limits_{k=1}^{T}{\sum\limits_{i=1}^{n}{\mathbb{E}\left[ \left\langle {{x}_{i,k}}-{{x}_{k}^*},\nabla f_{i,k}^{s}\left( {{x}_{i,k}} \right) \right\rangle  \right]}} \nonumber\\ 
		& \le \sum\limits_{k=1}^{T}\frac{\mathbb{E}[\Lambda_{k}]-\mathbb{E}[\Lambda_{k+1}]}{2{{\alpha }_{k}}}+\sum\limits_{k=1}^{T}\frac{1}{2{{\alpha }_{k}}}\sum_{i=1}^n\left\|x_k^*-x_{k+1}^*\right\|^2\nonumber\\
		&\quad+D\sum\limits_{k=1}^{T}\frac{1}{{{\alpha }_{k}}}\sum_{i=1}^n\left\|x_k^*-x_{k+1}^*\right\|+\hspace{-2pt}\sum\limits_{k=1}^{T}\hspace{-2pt}{\frac{{\alpha }_{k}}{2}\hspace{-2pt}\sum\limits_{i=1}^{n}\hspace{-2pt}{\mathbb{E}\left[ {{\left\| g_{i,k}^{s} \right\|}^{2}} \right]}}\nonumber\\ 
		& \le \frac{n{D}^{2}}{2{{\alpha }_{T}}}+\sum\limits_{k=1}^{T}\frac{1}{2{{\alpha }_{k}}}\sum_{i=1}^n\left\|x_k^*-x_{k+1}^*\right\|^2\nonumber\\
	&\quad+D\sum\limits_{k=1}^{T}\frac{1}{{{\alpha }_{k}}}\sum_{i=1}^n\left\|x_k^*-x_{k+1}^*\right\|+\hspace{-2pt}\sum\limits_{k=1}^{T}\hspace{-2pt}{\frac{{\alpha }_{k}}{2}\hspace{-2pt}\sum\limits_{i=1}^{n}\hspace{-2pt}{\mathbb{E}\left[ {{\left\| g_{i,k}^{s} \right\|}^{2}} \right]}}
	\end{align}
	where the last inequality is based on the following relation:
	\begin{align}
		& \sum\limits_{k=1}^{T}\frac{\mathbb{E}[\Lambda_{k}]-\mathbb{E}[\Lambda_{k+1}]}{2{{\alpha }_{k}}} \nonumber\\ 
		& = \frac{\mathbb{E}[\Lambda_{1}]}{2{{\alpha }_{1}}}- \frac{\mathbb{E}[\Lambda_{T+1}]}{2{{\alpha }_{T}}}+ \sum\limits_{k=2}^{T} \left[ \left( \frac{1}{2{{\alpha }_{k}}} - \frac{1}{2{{\alpha }_{k-1}}} \right)\mathbb{E}[\Lambda_{k}] \right] \nonumber\\ 
		& \le \frac{n{{D}^{2}}}{2{{\alpha }_{1}}} + \sum\limits_{k=2}^{T} \left( \frac{1}{2{{\alpha }_{k}}} - \frac{1}{2{{\alpha }_{k-1}}} \right) n{{D}^{2}} \le \frac{n{{D}^{2}}}{2{{\alpha }_{T}}}. \nonumber
	\end{align}
	\par
	%下面的结论依赖光滑性。
   We turn our attention to the left-hand side of \eqref{th1}. From Assumptions 3 and 4, it follows that
	\begin{align}
		& \sum\limits_{k=1}^{T}{\sum\limits_{i=1}^{n}{\mathbb{E}\left[ \left\langle {{x}_{i,k}}-{{x}_{k}^*},\nabla f_{i,k}^{s}\left( {{x}_{i,k}} \right) \right\rangle  \right]}} \nonumber\\ 
		& =\sum\limits_{k=1}^{T}{\sum\limits_{i=1}^{n}{\mathbb{E}\left[ \left\langle {{x}_{i,k}}-{{x}_{j,k}},\nabla f_{i,k}^{s}\left( {{x}_{j,k}} \right) \right\rangle  \right]}} \nonumber\\
		& \quad+\sum\limits_{k=1}^{T}{\sum\limits_{i=1}^{n}{\mathbb{E}\left[ \left\langle {{x}_{j,k}}-{{x}_{k}^*},\nabla f_{i,k}^{s}\left( {{x}_{j,k}} \right) \right\rangle  \right]}} \nonumber\\  
		& \quad+\sum\limits_{k=1}^{T}{\sum\limits_{i=1}^{n}{\mathbb{E}\left[ \left\langle {{x}_{i,k}}-{{x}_{k}^*},\nabla f_{i,k}^{s}\left( {{x}_{i,k}} \right)-\nabla f_{i,k}^{s}\left( {{x}_{j,k}} \right) \right\rangle  \right]}} \nonumber\\ 
		& \ge \sum\limits_{k=1}^{T}{\sum\limits_{i=1}^{n}{\mathbb{E}\left[ \left\langle {{x}_{j,k}}-{{x}_{k}^*},\nabla f_{i,k}^{s}\left( {{x}_{j,k}} \right) \right\rangle  \right]}}\nonumber\\
		&\quad-(L_0+L_1D)\sum\limits_{k=1}^{T}{\sum\limits_{i=1}^{n}{\mathbb{E}\left[ \left\| {{x}_{i,k}}-{{x}_{j,k}} \right\| \right]}}\text{.} \nonumber
	\end{align}
	This, together with \eqref{th1}, gives
	\begin{align}
		& \sum\limits_{k=1}^{T}{\sum\limits_{i=1}^{n}{\mathbb{E}\left[ \left\langle {{x}_{j,k}}-{{x}_{k}^*},\nabla f_{i,k}^{s}\left( {{x}_{j,k}} \right) \right\rangle  \right]}} \nonumber\\ 
		& \le \frac{n{{D}^{2}}}{2{{\alpha }_{T}}}+\sum\limits_{k=1}^{T}{\frac{{{\alpha }_{k}}}{2}\sum\limits_{i=1}^{n}{\mathbb{E}\left[ {{\left\| g_{i,k}^{s} \right\|}^{2}} \right]}}\nonumber\\
		&\quad+(L_0+L_1D)\sum\limits_{k=1}^{T}{\sum\limits_{i=1}^{n}{\mathbb{E}\left[ \left\| {{x}_{i,k}}-{{x}_{j,k}} \right\| \right]}}\nonumber \\
		&\quad+\sum\limits_{k=1}^{T}\frac{1}{2{{\alpha }_{k}}}\sum_{i=1}^n\left\|x_k^*\hspace{-2pt}-\hspace{-2pt}x_{k+1}^*\right\|^2\hspace{-2pt}+\hspace{-2pt}D\sum\limits_{k=1}^{T}\frac{1}{{{\alpha }_{k}}}\sum_{i=1}^n\left\|x_k^*\hspace{-2pt}-\hspace{-2pt}x_{k+1}^*\right\|. \nonumber
	\end{align}
	According to Lemma 1,
	\begin{align}
		& \sum\limits_{k=1}^{T}{\sum\limits_{i=1}^{n}{\mathbb{E}\left[ \left\langle {{x}_{j,k}}-{{x}_{k}},\nabla f_{i,k}\left( {{x}_{j,k}} \right) \right\rangle  \right]}} \nonumber\\ 
		& =\sum\limits_{k=1}^{T}{\sum\limits_{i=1}^{n}{\mathbb{E}\left[ \left\langle {{x}_{j,k}}-{{x}_{k}},\nabla {{f}_{i,k}}\left( {{x}_{j,k}} \right) \right\rangle  \right]}}\nonumber\\  
		&\quad +\sum\limits_{k=1}^{T}{\sum\limits_{i=1}^{n}{\mathbb{E}\left[ \left\langle {{x}_{j,k}}-{{x}_{k}},\nabla f_{i,k}\left( {{x}_{j,k}} \right)-\nabla {{f}_{i,k}^{s}}\left( {{x}_{j,k}} \right) \right\rangle  \right]}} \nonumber\\ 
		& \le \sum\limits_{k=1}^{T}\hspace{-2pt}{\sum\limits_{i=1}^{n}\hspace{-2pt}{\mathbb{E}\left[ \left\langle {{x}_{j,k}}\hspace{-2pt}-\hspace{-2pt}{{x}_{k}},\nabla {{f}_{i,k}^{s}}\left( {{x}_{j,k}} \right) \right\rangle  \right]}}\hspace{-2pt}+\hspace{-2pt}nD{{\left( d\hspace{-2pt}+\hspace{-2pt}3 \right)}^{3/2}}\hspace{-2pt}\sum\limits_{k=1}^{T}\hspace{-2pt}{{{\mu }_{k}}}\text{.} \nonumber 
	\end{align}
	By integrating the above two inequalities and the findings from Lemmas 1--3, we have
	\begin{align}\label{thmm1}
		& \mathbb{E}-\mathcal{D}\mathcal{R}_{j,T}^\text{ncv} \nonumber\\ 
		& \le \frac{nD^{2}}{{2{\alpha }_{T}}}+nD{{\left( d+3 \right)}^{3/2}}\sum\limits_{k=1}^{T}{{{\mu }_{k}}}+\mathcal{H}_1\sum\limits_{k=1}^T\alpha_k\nonumber\\
		&\quad+\mathcal{H}_2\sum_{k=1}^T\frac{\alpha_{k}}{\mu_k^2} \sum\limits_{i=1}^{n}{\tilde{\Theta}_{i,k}^2}+ \mathcal{H}_3 \sum_{k=1}^T\frac{\alpha_{k}}{\mu_k} \sum\limits_{i=1}^{n}{\tilde{\Theta}_{i,k}}\nonumber\\
		&\quad+\frac{n\omega_T^2}{2\alpha_{T}}+\frac{nD\omega_T}{\alpha_{T}}\text{,}
	\end{align}
	where $\mathcal{H}_1=48\left( d+4\right)^2+\frac{\sqrt{3}(32+24n\Gamma )\left( d+4\right)(L_0+L_1D) }{1-\gamma}$, $\mathcal{H}_2=48nd$, $\mathcal{H}_3=\frac{(32+24n\Gamma )(L_0+L_1D)\sqrt{3d}}{1-\gamma}$, $\tilde{\Theta}_{i,k}=\max_{\tau\in[k]}\theta_{i,\tau}$, and $\theta_{i,k}$, $\pi_{i,k}$ are defined in \eqref{theta_T} and \eqref{omega_k}, respectively. Substituting the explicit expressions \(\alpha_k = \frac{1}{2Mk^a}\), \(\mu_k = \frac{1}{k^a}\) where \(a \le b\), and \(M = \frac{\sqrt{3d}(8 + 6n\Gamma)L_0}{\gamma(1 - \gamma)}\), we can derive:
%		\begin{align}
%		& \sum\limits_{k=1}^{T}{\sum\limits_{i=1}^{n}{\mathbb{E}\left[ \left\langle {{x}_{j,k}}-{{x}_{k}},\nabla f_{i,k}\left( {{x}_{j,k}} \right) \right\rangle  \right]}} \nonumber\\ 
%		& \le nD{{\left( d\hspace{-2pt}+\hspace{-2pt}3 \right)}^{3/2}}\hspace{-2pt}\sum\limits_{k=1}^{T}\hspace{-2pt}{{{\mu }_{k}}} 
%		+\frac{n{{D}^{2}}}{2{{\alpha }_{T}}}\nonumber\\
%		&+\left\lbrace 48\left( d+4\right)^2+\frac{(32+24n\Gamma )(L_0+L_1D)\sqrt{3}\left( d+4\right) }{1-\gamma}\right\rbrace \sum\limits_{k=1}^{T} {{\alpha }_{k }}\nonumber\\
%		&+48nd\sum_{k=1}^T\frac{\alpha_{k}}{\mu_k^2} \sum\limits_{i=1}^{n}{\tilde{\Theta}_{i,k}^2}\nonumber \\
%		&+\frac{(32+24n\Gamma )(L_0+L_1D)\sqrt{3d}}{1-\gamma}\sum\limits_{k=1}^{T} \frac{{\alpha }_{k }}{\mu_k}\sum\limits_{i=1}^{n}{\tilde{\Theta}_{i,k}} \nonumber \\
%		&+\hspace{-2pt}\sum\limits_{k=1}^{T}\hspace{-2pt}{\frac{1}{{\alpha }_{k}}\hspace{-2pt}\sum\limits_{i=1}^{n}\hspace{-2pt}{\mathbb{E}\left[ {{\left\|x_{k}\hspace{-2pt}-\hspace{-2pt}x_{k+1} \right\|}^{2}} \right]}} \hspace{-4pt}+\hspace{-4pt}\sum\limits_{k=1}^{T}\hspace{-2pt}{\frac{D}{{\alpha }_{k}}\hspace{-2pt}\sum\limits_{i=1}^{n}{\mathbb{E}\left[ {{\left\|x_{k}\hspace{-2pt}-\hspace{-2pt}x_{k+1} \right\|}} \right]}}. \nonumber
%	\end{align}

	\begin{align}
	& \sum\limits_{k=1}^{T}{\sum\limits_{i=1}^{n}{\mathbb{E}\left[ \left\langle {{x}_{j,k}}-{{x}_{k}},\nabla f_{i,k}\left( {{x}_{j,k}} \right) \right\rangle  \right]}}\nonumber\\ 
	&\le nD{{\left( d+3 \right)}^{3/2}}T^{1-b}+ \frac{n{D}^{2}\gamma\left( 1-\gamma\right) }{\sqrt{3d}(8+6n\Gamma){{L}_{0}}}T^a\nonumber\\
	&+\hspace{-2pt}\left\lbrace \frac{4\sqrt{3}\left( d\hspace{-2pt}+\hspace{-2pt}4\right)^2\gamma(1\hspace{-2pt}-\hspace{-2pt}\gamma)}{\sqrt{d}(4\hspace{-2pt}+\hspace{-2pt}3n\Gamma)L_0}\hspace{-2pt}+\hspace{-2pt}\frac{2(L_0\hspace{-2pt}+\hspace{-2pt}L_1D)\left( d\hspace{-2pt}+\hspace{-2pt}4\right)\gamma}{\sqrt{d}L_0}\right\rbrace T^{1-a}\nonumber\\
	&+\hspace{-2pt}\frac{4\sqrt{3d}n\gamma(1\hspace{-2pt}-\hspace{-2pt}\gamma)}{(4+3n\Gamma)L_0}{T}^{2b-a}\Theta_{T}^2\hspace{-2pt}+\hspace{-2pt}\left( 2\gamma\hspace{-2pt}+\hspace{-2pt}\frac{2\gamma L_1D}{L_0}\right) {T}^{b-a}\Theta_{T}\nonumber \\
	&+\frac{\sqrt{3d}(8 + 6n\Gamma)nL_0}{\gamma(1 - \gamma)}\left({T^a\omega_T^2}+{2DT^a\omega_T} \right) 
	\text{.}
\end{align}
Furthermore, if \(a = \frac{1}{2}\) and $a-b\ge\delta\ge0$, then \(T^{\max\{a,1-a, 1 - b\}} \le T^{1/2 + \delta}\), \(T^{2b - a} \Theta_T^2 \le T^{1/2 - 2\delta}\Theta_T^2 \), \(T^{b - a} \le T^{-\delta} \Theta_T\), $T^a\omega_T^2=T^{1/2}\omega_T^2 $, and $T^a\omega_T=T^{1/2}\omega_T $, which leads to the result in \eqref{thm2}. In particular, when \(a = b = \frac{1}{2}\), the dynamic regret is \(\mathcal{O}(T^{1/2}+ T^{1/2}\Theta_T^2+T^{1/2}\omega_T^2 )\). The proof is complete.
	\end{proof} 
\subsection{Proof of Theorem 2}
\begin{proof}
When convex function $f_{i,k}\in C^{0,0}$ with constant $L_0$, by using the convexity of $f_{i,k}$ and considering \eqref{th1}, we can obtain
\begin{align}
	& \sum\limits_{k=1}^{T}{\sum\limits_{i=1}^{n}{\mathbb{E}\left[f_{i,k}^{s}\left( {{x}_{i,k}} \right)-f_{i,k}^{s}\left( x_k^* \right)\right]}}\nonumber\\
	& \le \frac{n{D}^{2}}{2{{\alpha }_{T}}}+\sum\limits_{k=1}^{T}\frac{1}{2{{\alpha }_{k}}}\sum_{i=1}^n\left\|x_k^*-x_{k+1}^*\right\|^2\nonumber\\
	&\quad+D\sum\limits_{k=1}^{T}\frac{1}{{{\alpha }_{k}}}\sum_{i=1}^n\left\|x_k^*-x_{k+1}^*\right\|+\hspace{-2pt}\sum\limits_{k=1}^{T}\hspace{-2pt}{\frac{{\alpha }_{k}}{2}\hspace{-2pt}\sum\limits_{i=1}^{n}\hspace{-2pt}{\mathbb{E}\left[ {{\left\| g_{i,k}^{s} \right\|}^{2}} \right]}}.\nonumber
\end{align}
By adding and subtracting the term $f_{i,k}^{s}\left( x_{j,k}\right)$ with any $j\in\mathcal{V}$ and using the Lipschitz continuity of $f_{i,k}$, it follows that
\begin{align}
	& \sum\limits_{k=1}^{T}{\sum\limits_{i=1}^{n}{\mathbb{E}\left[f_{i,k}^{s}\left( {{x}_{j,k}} \right)-f_{i,k}^{s}\left( x_k^* \right)\right]}}\nonumber\\
	& \le \frac{n{D}^{2}}{2{{\alpha }_{T}}}+\sum\limits_{k=1}^{T}\frac{1}{2{{\alpha }_{k}}}\sum_{i=1}^n\left\|x_k^*-x_{k+1}^*\right\|^2\nonumber\\
	&\quad+L_0 \sum\limits_{k=1}^{T}\sum\limits_{i=1}^{n}\mathbb{E}\left[ \left\|x_{i,k}\hspace{-2pt}-\hspace{-2pt}x_{j,k} \right\| \right]\hspace{-2pt}+\hspace{-2pt}D\sum\limits_{k=1}^{T}\frac{1}{{{\alpha }_{k}}}\sum_{i=1}^n\left\|x_k^*\hspace{-2pt}-\hspace{-2pt}x_{k+1}^*\right\|\nonumber\\
	&\quad+\sum\limits_{k=1}^{T}\hspace{-2pt}{\frac{{\alpha }_{k}}{2}\sum\limits_{i=1}^{n}{\mathbb{E}\left[ {{\left\| g_{i,k}^{s} \right\|}^{2}} \right]}}.\nonumber
\end{align}
We then combine the preceding inequality and the results in Lemmas 2 and 5 to get
\begin{align}
	 &\mathbb{E}-\mathcal{D}\mathcal{R}_{j,T}^\text{cv}=\sum\limits_{k=1}^{T}{\sum\limits_{i=1}^{n}{\mathbb{E}\left[f_{i,k}^{s}\left( {{x}_{j,k}} \right)-f_{i,k}^{s}\left( x_k^* \right)\right]}} \nonumber\\ 
	& \le \frac{nD^{2}}{{2{\alpha }_{T}}}+2nD{{\left( d+3 \right)}^{3/2}}\sum\limits_{k=1}^{T}{{{\mu }_{k}}}+\mathcal{H}_1\sum\limits_{k=1}^T\alpha_k+\frac{n\omega_T^2}{2\alpha_{T}}\nonumber\\
	&\quad+\mathcal{H}_2\sum_{k=1}^T\frac{\alpha_{k}}{\mu_k^2} \sum\limits_{i=1}^{n}{\tilde{\Theta}_{i,k}^2}+ \mathcal{H}_3 \sum_{k=1}^T\frac{\alpha_{k}}{\mu_k} \sum\limits_{i=1}^{n}{\tilde{\Theta}_{i,k}}+\frac{nD\omega_T}{\alpha_{T}}\text{,}\nonumber
\end{align}
where $\mathcal{H}_1=48\left( d+4\right)^2+\frac{\sqrt{3}(32+24n\Gamma )\left( d+4\right)L_0 }{1-\gamma}$, $\mathcal{H}_2=48nd$, $\mathcal{H}_3=\frac{(32+24n\Gamma )L_0\sqrt{3d}}{1-\gamma}$, $\tilde{\Theta}_{i,k}=\max_{\tau\in[k]}\theta_{i,k}$, and $\theta_{i,k}$, $\omega_{T}$ are defined in \eqref{theta_T} and \eqref{omega_k}, respectively. Substituting the explicit expressions \(\alpha_k = \frac{1}{2M\sqrt{k}}\), \(\mu_k = \frac{1}{\sqrt{k}}\), and \(M = \frac{\sqrt{3d}(8 + 6n\Gamma)L_0}{\gamma(1 - \gamma)}\), we can derive the desired result.\par 
%\begin{align}
%	\mathbb{E}-\mathcal{D}\mathcal{R}_{j,T}^\text{cv}&\le\left\lbrace  {nD^{2}M}+2nD{{\left( d+3 \right)}^{3/2}}+\frac{\mathcal{H}_1}{2M}\right\rbrace\sqrt{T} \nonumber\\
%	&\quad+{nM\sqrt{T}\omega_T^2}+{2nDM\sqrt{T}\omega_T}\nonumber\\
%	&\quad+\frac{\mathcal{H}_2\sqrt{T}}{2M}{\Theta}_{T}^2+ \frac{\mathcal{H}_3}{2M} \Theta_T\text{,}\nonumber
%\end{align}
%When convex function $f_{i,k}\in C^{1,1}$ with constant $L_1$. It follows from \eqref{th2} that
%\begin{align}
%	\Lambda_{k+1}&\le \sum\limits_{i=1}^{n}{{{\left\| {{x}_{i,k}}-{{\alpha }_{k}}g_{i,k}^{s}-{{x}_{k+1}^*} \right\|}^{2}}}\nonumber\\
%	&=\sum\limits_{i=1}^{n}{{{\left\| {{x}_{i,k}}-{{x}_{k}^*}-{{\alpha }_{k}}g_{i,k}^{s}+{{x}_{k}^*}-{{x}_{k+1}^*} \right\|}^{2}}}\nonumber\\
%	&=\sum\limits_{i=1}^{n}{{{\left\| {{x}_{i,k}}-{{x}_{k}^*}-{{\alpha }_{k}}g_{i,k}^{s} \right\|}^{2}}}+\sum\limits_{i=1}^{n}{{{\left\| {{x}_{k}^*}-{{x}_{k+1}^*} \right\|}^{2}}}\nonumber\\
%	&\quad+2\sum\limits_{i=1}^{n}\left\langle{{x}_{i,k}}-{{x}_{k}^*}-{{\alpha }_{k}}g_{i,k}^{s}, {{x}_{k}^*}-{{x}_{k+1}^*}\right\rangle\nonumber \\
%	&=\sum\limits_{i=1}^{n}{{{\left\| {{x}_{i,k}}-{{x}_{k}^*} \right\|}^{2}}}+{\alpha }_{k}^2\sum\limits_{i=1}^{n}{{{\left\|g_{i,k}^{s} \right\|}^{2}}}\nonumber\\
%	&\quad-2{\alpha }_{k}\left\langle{{x}_{i,k}}-{{x}_{k}^*}, g_{i,k}^{s} \right\rangle \nonumber
%\end{align}	 

\end{proof}
\section*{References}
%显示出来所有的参考文献，引用+未引用的
%\nocite{*}
\bibliographystyle{IEEEtran}
\bibliography{refff}

\begin{IEEEbiography}[{\includegraphics[width=1in,height=1.25in,clip,keepaspectratio]{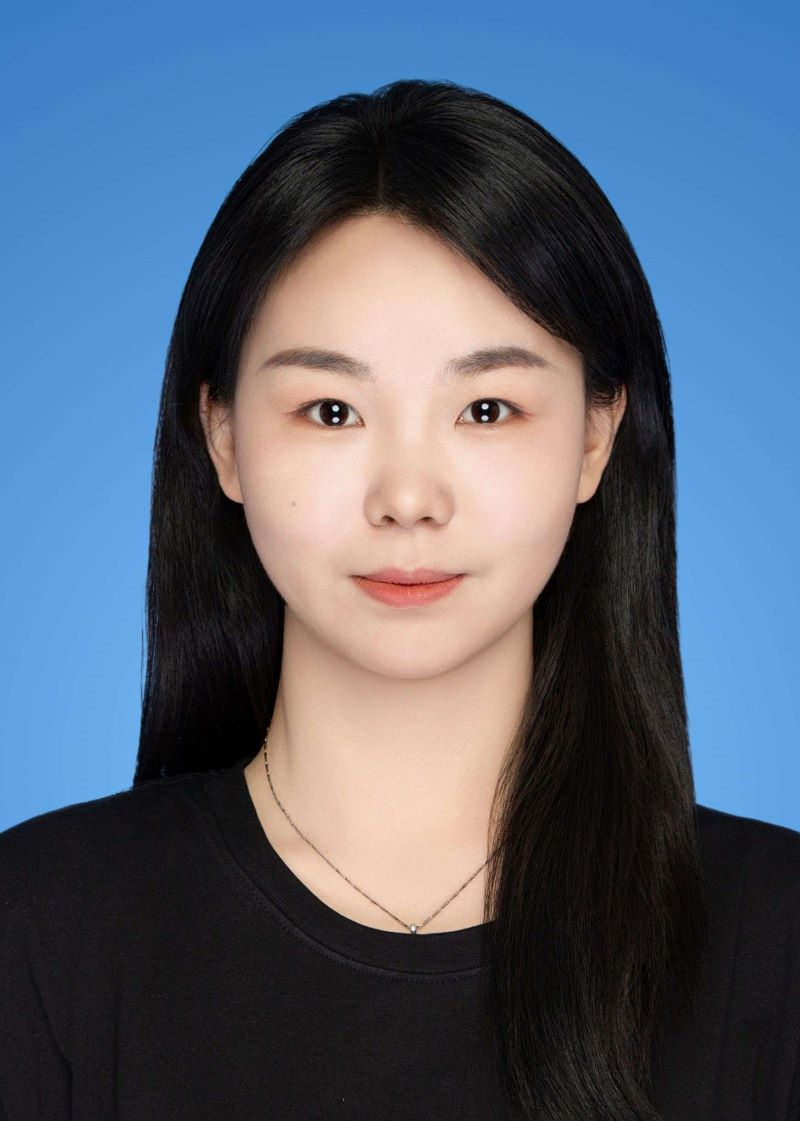}}]{Youqing Hua} received the B.S. degree from the School of Mathematical Sciences, Qufu normal University, Qufu, China, in 2022. She is currently pursuing a M.S. in School of Control Science and Engineering, Shandong University, Jinan, China.\par 
Her research focuses on distributed optimization and multiagent systems.
\end{IEEEbiography}

\begin{IEEEbiography}[{\includegraphics[width=1in,height=1.25in,clip,keepaspectratio]{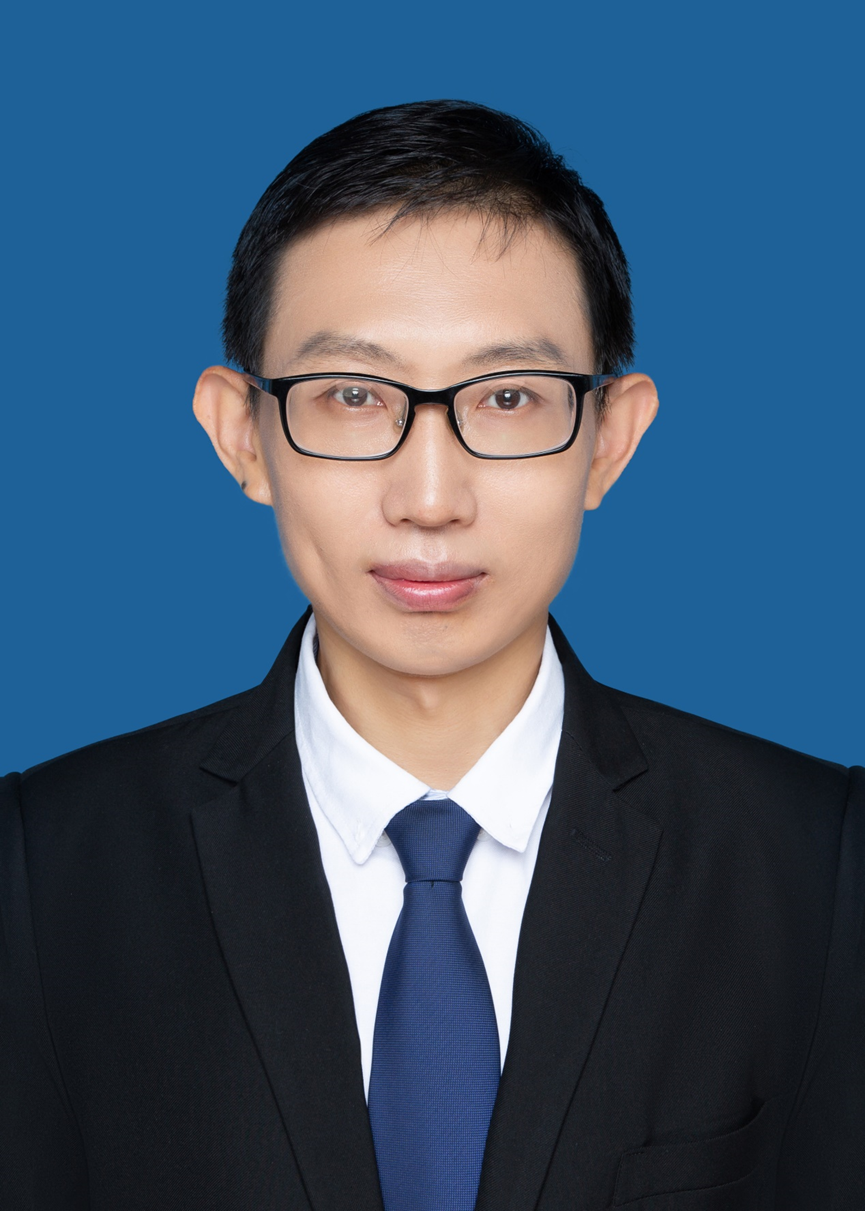}}]{Shuai Liu} (IEEE Member) received the B.E. and M.E. degrees in control science and engineering from Shandong University, Jinan, China, in 2004 and 2007, respectively, and the Ph.D. degree in electrical and electronic engineering from Nanyang Technological University, Singapore, in 2012.\par
	From 2011 to 2017, he was a Senior Research Fellow with Berkeley Education Alliance, Singapore. Since 2017, he has been with the School of Control Science and Engineering, Shandong University. His research interests include distributed control, estimation and optimization, smart grid, integrated energy system, and machine learning. 
\end{IEEEbiography}

\begin{IEEEbiography}[{\includegraphics[width=1in,height=1.25in,clip,keepaspectratio]{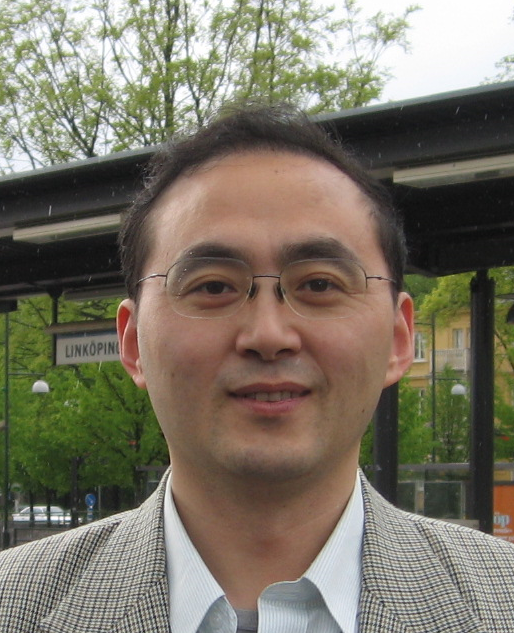}}]{Yiguang Hong} (Fellow, IEEE) received the B.S. and M.S. degrees from Peking University, Beijing, China, in 1987 and 1990, respectively, and the Ph.D. degree from the Chinese Academy of Sciences (CAS), Beijing, China, in 1993.\par 
He is currently a Professor with the Department of Control Science and Engineering and Shanghai Research Institute for Intelligent Autonomous Systems, Tongji University, Shanghai, China. Before October 2020, he was a Professor with the Academy of Mathematics and Systems Science, CAS.\par  
His current research interests include nonlinear control, multiagent systems, distributed optimization and game, machine learning, and social networks.
\end{IEEEbiography}

\begin{IEEEbiography}[{\includegraphics[width=1in,height=1.25in,clip,keepaspectratio]{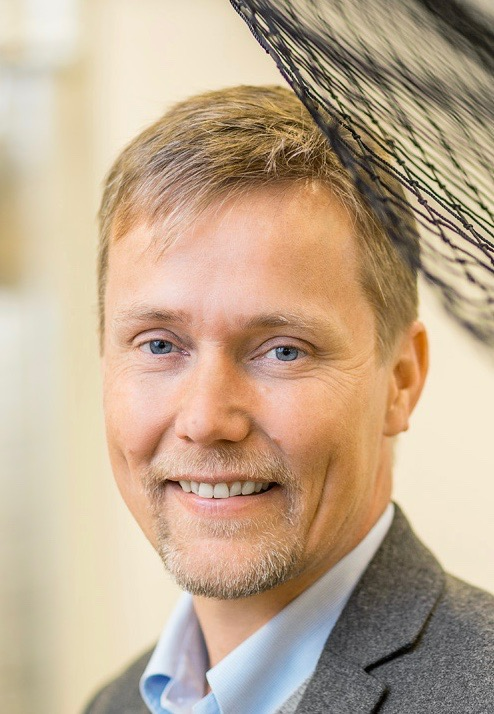}}]{Karl Henrik Johansson} (Fellow, IEEE) received the M.Sc. and Ph.D. degrees in electrical engineering from Lund University, Lund, Sweden, in 1992 and 1997, respectively.\par 
He is currently a Professor with the School of Electrical Engineering and Computer Science, KTH Royal Institute of Technology, Stockholm, Sweden. He has held visiting positions with University of California, Berkeley, Berkeley, CA, USA, Caltech, Pasadena, CA, USA, Nanyang Technological University, Singapore, HKUST Institute of Advanced Studies, Hong Kong, and Norwegian University of Science and Technology, Trondheim, Norway. His research interests include networked control systems, cyber-physical systems, and applications in transportation, energy, and automation.
\end{IEEEbiography}

\begin{IEEEbiography}[{\includegraphics[width=1in,height=1.25in,clip,keepaspectratio]{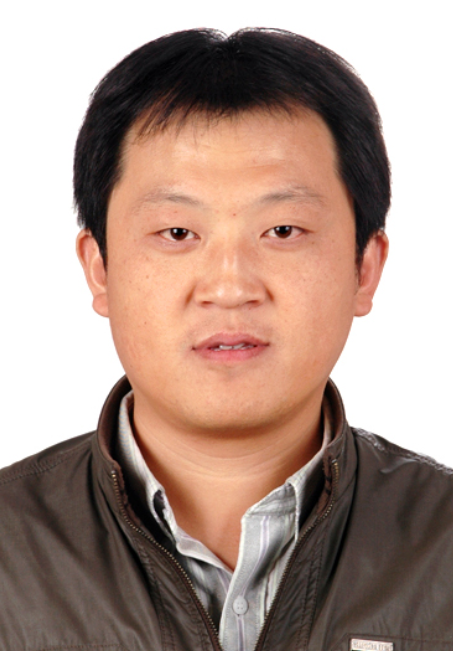}}]{Guangchen Wang} (Senior Member, IEEE) received the B.S. degree in mathematics from Shandong Normal University, Jinan, China, in 2001, and the Ph.D. degree in probability theory and mathematical statistics from the School of Mathematics and System Sciences, Shandong University, Jinan, China, in 2007.\par
From July 2007 to August 2010, he was a Lecturer with the School of Mathematical Sciences, Shandong Normal University. In September 2010, he joined as an Associate Professor the School of Control Science and Engineering, Shandong University, where he has been a Full Professor since September 2014. His current research interests include stochastic control, nonlinear filtering, and mathematical finance.
\end{IEEEbiography}

\end{document}